\documentclass{article}

\usepackage{microtype}
\usepackage{graphicx}
\usepackage{booktabs} %
\usepackage{tablefootnote}

\usepackage{hyperref}

\usepackage[accepted]{icml2023}

\usepackage{amsmath}
\usepackage{amssymb}
\usepackage{mathtools}
\usepackage{amsthm}

\usepackage[capitalize,noabbrev]{cleveref}

\theoremstyle{plain}
\newtheorem{theorem}{Theorem}[section]

\newtheorem{lemma}[theorem]{Lemma}

\theoremstyle{definition}

\theoremstyle{remark}

\usepackage[textsize=tiny]{todonotes}

\usepackage{lipsum}  
\usepackage{etoc}
\usepackage{amsmath} 
\usepackage{amsthm}
\usepackage{bbm}
\usepackage{bm}
\usepackage{xspace}
\usepackage{caption}
\usepackage{amssymb, amsmath}
\usepackage{mathtools}
\usepackage{enumitem}
\usepackage[utf8]{inputenc}
\usepackage{graphicx}
\usepackage{nicefrac}
\usepackage{centernot}
\usepackage{array}
\usepackage{mathabx}
\usepackage{sidecap}
\usepackage{hyperref}
\usepackage{subcaption}
\usepackage{titlesec}
\usepackage{hhline}
\usepackage{float}
\usepackage{multirow}
\usepackage{multicol}
\usepackage{soul}

\setcounter{topnumber}{2}
\setcounter{bottomnumber}{2}
\setcounter{totalnumber}{4}     %
\setcounter{dbltopnumber}{2}    %

\newcommand{\x}{\mathbf{x}}

\newcommand{\w}{\mathbf{w}}

\newcommand{\sigmoid}{\text{sigmoid}}
\newcommand{\logit}{\text{logit}}
\newcommand{\ce}{\text{CE}}
\newcommand{\shp}{\text{SHP}}
\newcommand{\hops}{\text{HOPS}}

\newcommand{\ops}{\text{OPS}}
\newcommand{\obs}{\text{OBS}}
\newcommand{\tops}{\text{TOPS}}

\newcommand{\uvec}{\mathbf{u}}

\newcommand{\vvec}{\mathbf{v}}

\newcommand{\Rcal}{\mathcal{R}}

\newcommand{\Bcal}{\mathcal{B}}

\newcommand{\Ncal}{\mathcal{N}}

\newcommand{\Xcal}{\mathcal{X}}

\newcommand{\disc}{\text{disc}}

\newcommand{\floor}[1]{\left \lfloor{#1}\right \rfloor}

\newcommand{\abs}[1]{\left\lvert#1\right\rvert}

\newcommand{\enorm}[1]{\left\lVert#1\right\rVert_2}

\DeclareMathOperator*{\argmin}{arg\,min}

\newcommand{\Real}{\mathbb{R}}

\newcommand{\indicator}[1]{\mathbbm{1}\curlybrack{#1}}

\newcommand{\roundbrack}[1]{\left( #1 \right)}
\newcommand{\curlybrack}[1]{\left\lbrace #1 \right\rbrace}

\newcommand{\Exp}[2]{\mathbb{E}_{#1}\left\lbrack#2\right\rbrack}

\icmltitlerunning{Online Platt Scaling}

\date{\today}

\newcommand{\revision}[1]{{\textcolor{black}{#1}}}
\newcommand{\posticml}[1]{{\textcolor{black}{#1}}}

\begin{document}

\twocolumn[
\icmltitle{Online Platt Scaling with Calibeating}

\begin{icmlauthorlist}
\icmlauthor{Chirag Gupta}{cmu}
\icmlauthor{Aaditya Ramdas}{cmu}
\end{icmlauthorlist}

\icmlaffiliation{cmu}{Carnegie Mellon University, Pittsburgh PA, USA}

\icmlcorrespondingauthor{Chirag Gupta}{chiragg@cmu.edu, chiragpvg@gmail.com}

\vskip 0.3in
]
\printAffiliationsAndNotice{}

\begin{abstract}
\revision{We present an online post-hoc calibration method, called Online Platt Scaling (OPS), which combines the Platt scaling technique with online logistic regression. We demonstrate that OPS smoothly adapts between i.i.d.\ and non-i.i.d.\ settings with distribution drift. Further, in scenarios where the best Platt scaling model is itself miscalibrated, we enhance OPS by incorporating a recently developed technique called calibeating to make it more robust. Theoretically, our resulting OPS+calibeating method is guaranteed to be calibrated for adversarial outcome sequences. Empirically, it is effective on a range of synthetic and real-world datasets, with and without distribution drifts, achieving superior performance without hyperparameter tuning. Finally, we extend all OPS ideas to the beta scaling method.}
\end{abstract}

\section{Introduction}
\label{sec:intro}
In the past two decades, there has been significant interest in the ML community on post-hoc calibration of ML classifiers \citep{zadrozny2002transforming, niculescu2005predicting, guo2017calibration}. %
Consider a pretrained classifier $f : \Xcal \to [0,1]$ that produces scores in $[0,1]$ for covariates in $\Xcal$. Suppose $f$ is used to make probabilistic predictions for a sequence of points $(\x_t, y_t)_{t=1}^T$
where $y_t \in \{0,1\}$. Informally, $f$ is said to be calibrated \citep{dawid1982well} if the predictions made by $f$ match the empirically observed frequencies when those predictions are made:  %
\begin{equation}
   \text{ for all }p \in [0,1], \text{Average}\{y_t : f(\x_t) \approx p\} \approx p.%
   \label{eq:cal-informal}
\end{equation}
In practice, for well-trained $f$, larger scores $f(\x)$ indicate higher likelihoods of $y=1$, so that $f$ does well for accuracy or a ranking score like AUROC. Yet we often find that $f$ does not satisfy (some formalized version of) condition~\eqref{eq:cal-informal}. The goal of post-hoc calibration, or recalibration, is to use \textit{held-out data} to learn a low-complexity mapping $m : [0,1] \to [0,1]$ so that $m(f(\cdot))$ retains the good properties of $f$---accuracy, AUROC, sharpness---as much as possible, but is better calibrated than $f$. 

\begin{figure*}
    \begin{subfigure}[b]{0.27\textwidth}
    \centering
    \includegraphics[trim={0 0 0 0}, clip, width=\textwidth]{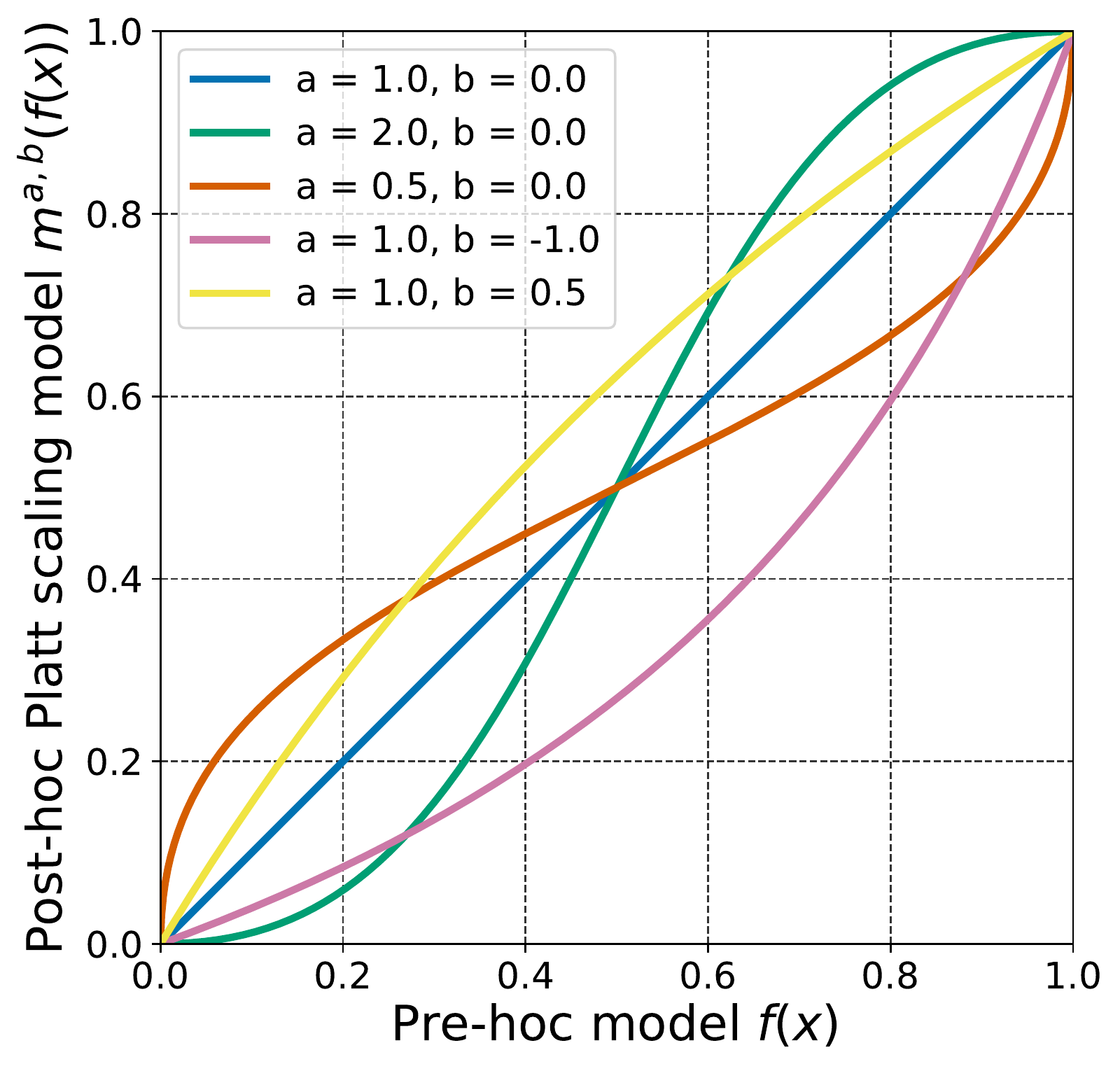}
    \caption{Platt scaling}
    \label{fig:intro-platt-figure}
    \end{subfigure}
    \hspace{0.3cm}
    \begin{subfigure}[b]{0.28\textwidth}
\fbox{%
    \parbox{\textwidth}{
    \leftskip 5pt
        \scriptsize
         Initialize weights $\w_1 \in \Real^d = \Xcal$\\
        At time $t = 1, 2, \ldots, T$
        \begin{itemize}[itemsep=3pt,topsep=3pt]
            \item Observe features $\x_t \in \Real^d$
            \item Predict $p_t = (1+ e^{-\w_t^\intercal \x_t})^{-1}$
            \item Observe  $y_t \in \{0,1\}$
            \item Compute updated weight $\w_{t+1} \in \Real^d$  
        \end{itemize}
        \vspace{0.1cm}
        \textbf{Goal: }minimize regret $\sum_{t=1}^T l(y_t, p_t)$, where \vspace{-0.1cm}
        \begin{equation*}
            l(y,p) = -y \log p - (1-y)\log(1-p). 
        \end{equation*} %
    \vspace{-0.4cm}
    }}
    \vspace{0.4cm}
    \caption{Online logistic regression}
    \label{fig:intro-olr}
    \end{subfigure}
    \hspace{0.3cm}
    \begin{subfigure}[b]{0.42\textwidth}
    \centering
    \includegraphics[width=\linewidth, trim=5cm 0 0 2cm, clip]{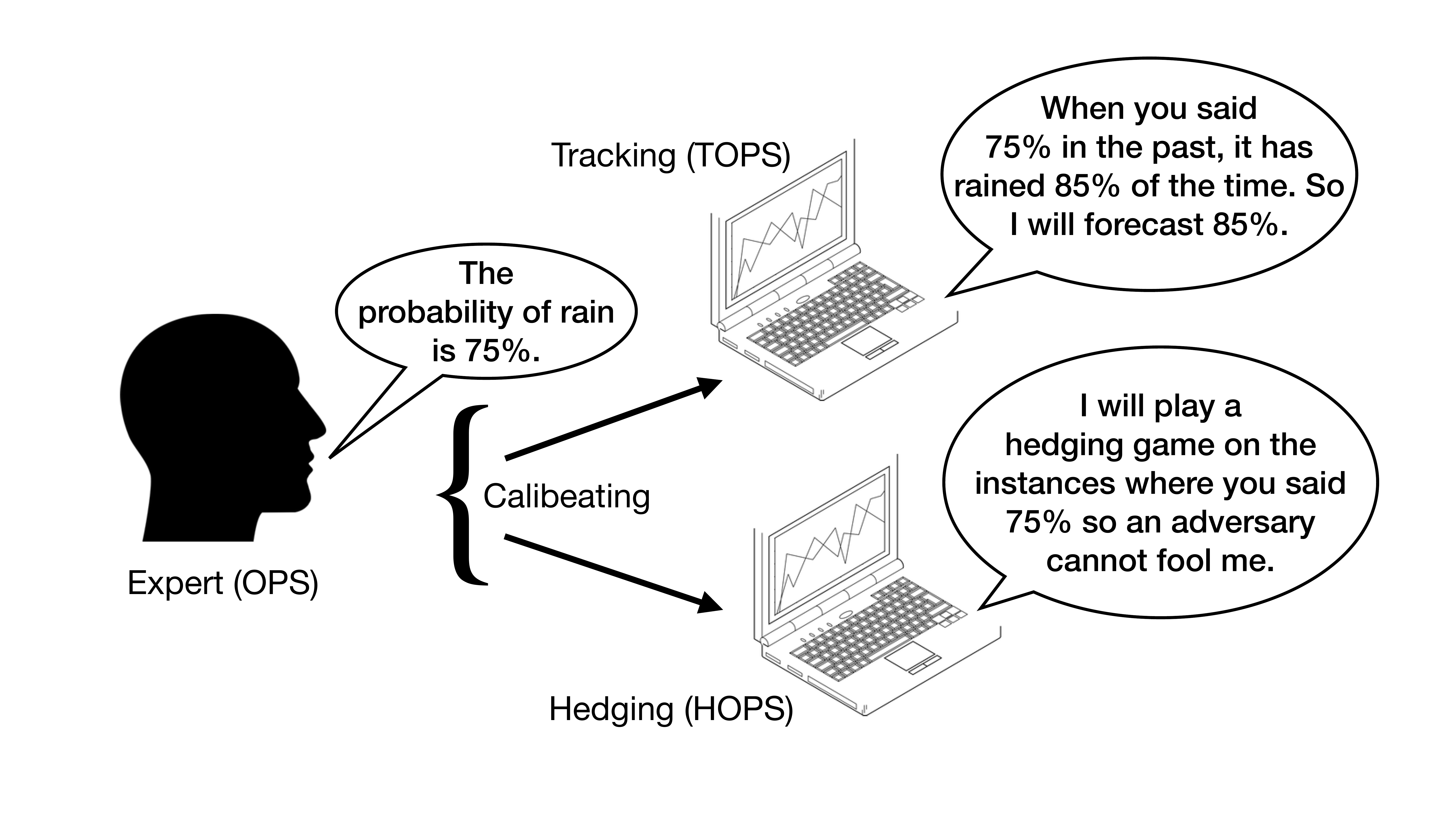}
    \caption{Calibeating applied to online Platt scaling}
    \label{fig:intro-calibeating}
    \end{subfigure}
    \caption{The combination of Platt scaling and online logistic regression yields Online Platt Scaling (OPS). Calibeating is applied on top of OPS to achieve further empirical improvements and theoretical validity.}
    \vspace{-0.3cm}
\end{figure*}

The focus of this paper is on a recalibration method proposed by \citet{platt1999probabilistic}, commonly known as Platt scaling (PS). The PS mapping $m$ is a sigmoid transform over $f$ parameterized by two scalars $(a, b) \in \Real^2$: %
\begin{equation}
    m^{a, b}(f(\x)) := \sigmoid(a\cdot\logit(f(\x)) + b).%
    \label{eq:ps-model-class}
\end{equation}
Here $\logit(z) = \log(\nicefrac{z}{1-z})$ and $\sigmoid(z) = \nicefrac{1}{1+e^{-z}}$ are inverses. Thus $m^{1, 0}$ is the identity mapping. 
\revision{Figure~\ref{fig:intro-platt-figure} has additional illustrative $m^{a,b}$ plots; these are easily interpreted---if $f$ is overconfident, that is if $f(x)$ values are skewed towards $0$ or $1$, we can pick $a \in (0,1)$ to improve calibration; if $f$ is underconfident, we can pick $a > 1$; if $f$ is systematically biased towards 0 (or 1), we can pick $b > 0$ (or $b < 0$). The counter-intuitive choice $a < 0$ can also make sense if $f$'s predictions oppose reality (perhaps due to a distribution shift).}
The parameters $(a, b)$ are typically set as those that minimize log-loss over calibration data or equivalently maximize log-likelihood under the model $y_i \overset{\text{iid}}{\sim} \text{Bernoulli}(m^{a, b}(f(\x_i)))$, on the held-out data.

\revision{Although a myriad of recalibration methods now exist, PS remains an empirically strong baseline. In particular, PS is effective when few samples are available for recalibration \citep{niculescu2005predicting, gupta2021distribution}. Scaling before subsequent binning has emerged as a useful methodology \citep{kumar2019calibration, zhang2020mix}. Multiclass adaptations of PS, called temperature, vector, and matrix scaling have become popular \citep{guo2017calibration}. Being a parametric method, however, PS comprises a limited family of post-hoc corrections---for instance, since $m^{a,b}$ is always a monotonic transform, PS must fail even for i.i.d.\ data for some data-generating distributions (see \citet{gupta2020distribution} for a formal proof). %
Furthermore, we are interested in going beyond i.i.d.\ data to data with drifting/shifting distribution. This brings us to our first question,}\vspace{-0.1cm}
\begin{gather*}\vspace{-0.4cm}
    \revision{\text{(Q1) Can Platt Scaling (PS) be extended to handle}}\\[-0.1cm]
    \revision{\text{shifting or drifting data distributions?}}
\end{gather*}
    \revision{A separate view of calibration that pre-dates the ML post-hoc calibration literature is the online adversarial calibration framework \citep{degroot1981assessing,foster1998asymptotic}. Through the latter work, we know that calibration can be achieved for arbitrary $y_t$ sequences without relying on a pretrained model $f$ or doing any other modeling over available features. This is achieved by hedging or randomizing over multiple probabilities, so that ``the past track record can essentially only improve, no matter the future outcome" (paraphrased from \citet{foster2021forecast}). 
    For interesting classification problems, however, the $y_t$ sequence is far from adversarial and informative covariates $\x_t$ are available. In such settings, covariate-agnostic algorithms achieve calibration by predicting something akin to an average $\sum_{s=1}^{t}y_s/t$ at time $t+1$ (see Appendix~\ref{appsec:climatology-experiment}). Such a prediction, while calibrated, is arguably not useful.
A natural question is: \vspace{-0.1cm}}
\begin{gather*}\vspace{-0.3cm}
    \revision{\text{(Q2) Can informative covariates (features) be used}}\\[-0.1cm]
    \revision{\text{ to make online adversarial calibration practical?}}
\end{gather*}
\posticml{
We answer (Q1) by developing an online version of Platt scaling, and (Q2) by leveraging the recently developed framework of calibeating \citep{foster2022calibeating}. The method of calibeating, illustrated in Figure~\ref{fig:intro-calibeating}, is to perform certain \textit{corrections} on top of pre-existing \textit{expert} forecasts to improve their calibration. A key calibeating idea that we use was already discovered by \citet{kuleshov2017estimating} to resolve (Q2) in a manner similar to ours. Namely, they first proposed the idea of binning and hedging on top of an expert, as we do in HOPS (Section~\ref{sec:hops}). We return to a more detailed comparison between our work and Kuleshov and Ermon's in Section~\ref{sec:hops}. To reiterate, while we repeatedly use the term ``calibeating" coined by Foster and Hart, the main idea in resolving (Q2) can equally be credited to Kuleshov and Ermon. }

\posticml{Unlike previous papers, the online expert is not a black-box but a centerpiece of our work. In the forthcoming proposal, we describe an end-to-end pipeline, where first, a covariate-based and time-adaptive expert is constructed using post-hoc calibration (OPS), and then it is calibeaten to achieve adversarial calibration (TOPS, HOPS).}

\begin{figure}[!h]
    \centering
    \vspace{-0.2cm}
    \includegraphics[trim=6cm 16cm 12cm 10.5cm, clip, width=\columnwidth]{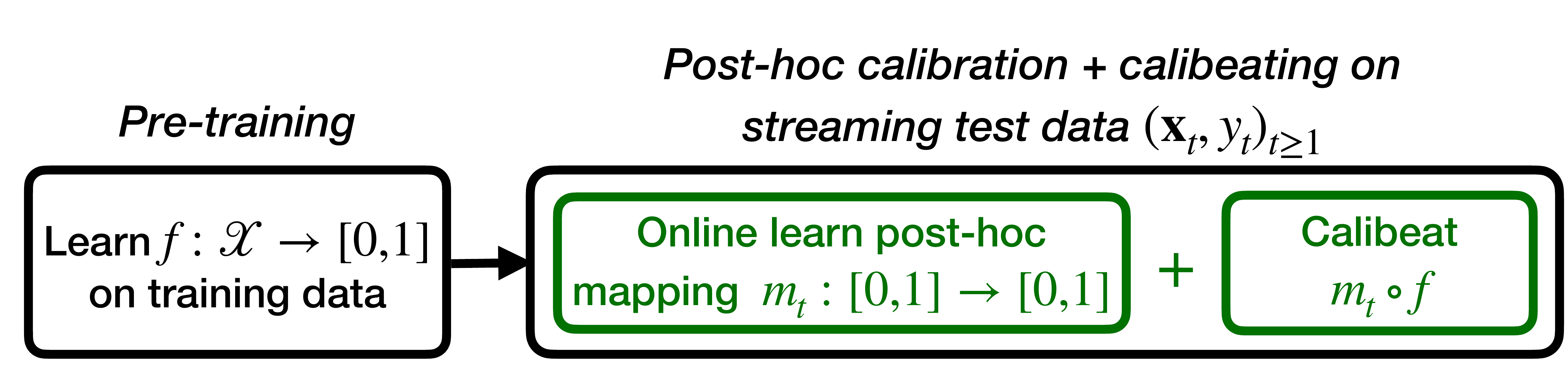}
    \caption{Online adversarial post-hoc calibration. }
    \label{fig:methology-summary}
    \vspace{-0.4cm}
\end{figure}

\begin{figure*}[htp]
    \centering
    \begin{subfigure}[t]{0.58\textwidth}
        \centering
        \vskip 0pt
        \includegraphics[trim=1cm 0 0 2cm, clip, width=1.1\textwidth]{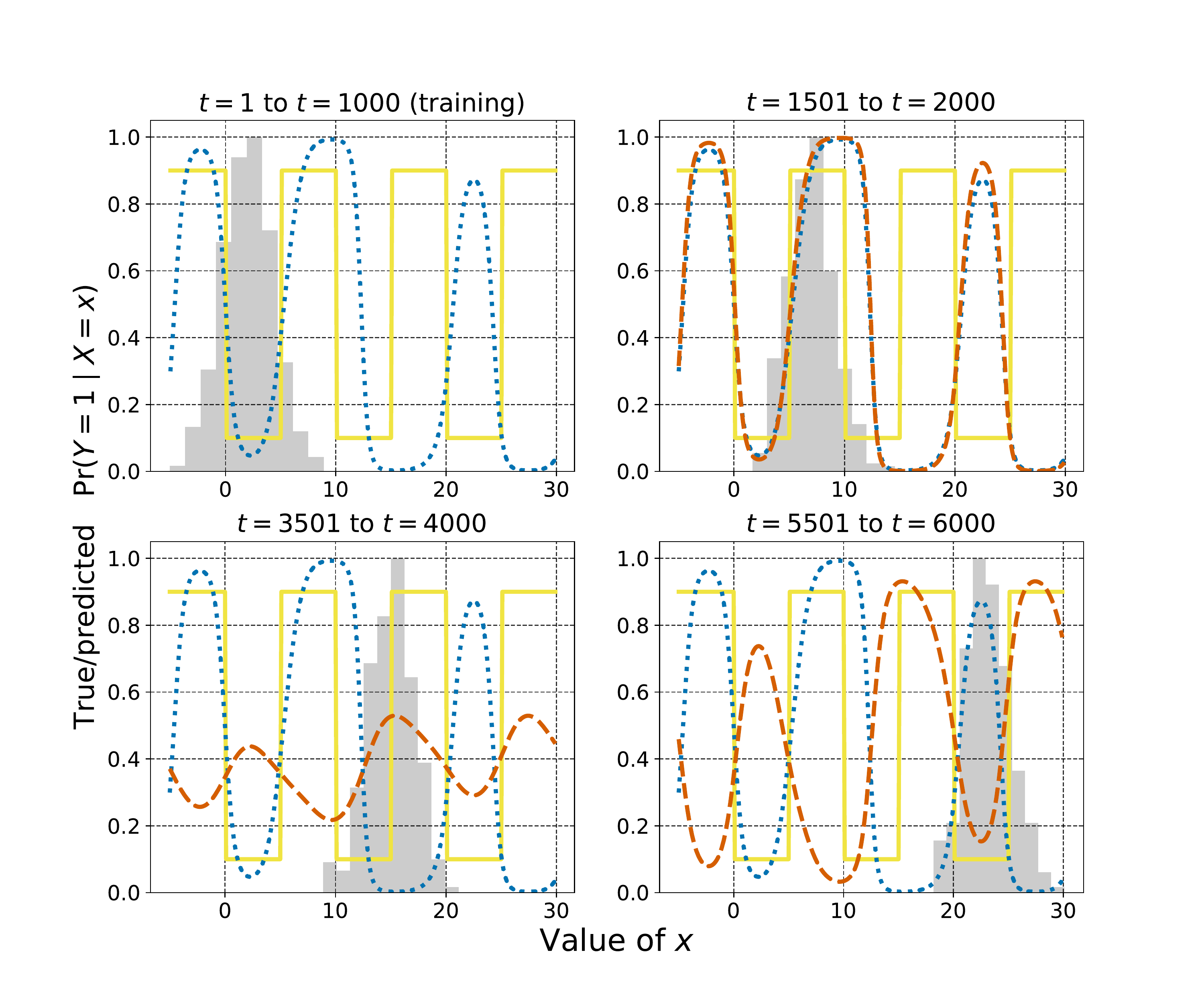}
    \end{subfigure}
    \begin{subfigure}[t]{0.41\textwidth}
        \vskip 0.3cm
        \centering\includegraphics[width=0.85\textwidth, trim=0 -2cm 0 -1cm, clip]{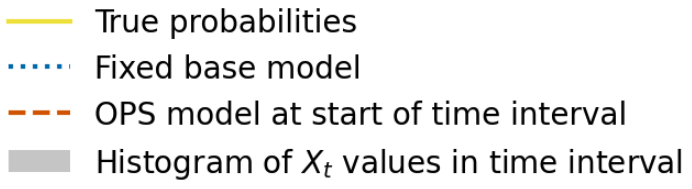}
        \begin{tabular}{cccc}
        \toprule
        $t$ & Model & Acc $\uparrow$ & CE $\downarrow$ \\\midrule
        1---1000 & Base & 88.00\% & 0.1  \\\midrule
         \multirow{2}{*}{\parbox{1cm}{1501---2000}} & Base & 81.20\% & 0.18 \\
          & OPS & 81.68\% & 0.19 \\
         \midrule
         \multirow{2}{*}{\parbox{1cm}{3501---4000}} & Base & 43.12\% & 0.55\\
          & OPS &  62.16\% & 0.36\\
         \midrule
         \multirow{2}{*}{\parbox{1cm}{5501---6000}} & Base & 23.60\% & 0.64  \\
          & OPS & 87.76\% & 0.13 \\\bottomrule
        \end{tabular}
        \caption{Accuracy and calibration error (CE) values of the base model and OPS for indicated values of $t$.}
        \label{fig:cov-shift-illustration-table}
    \end{subfigure}
    \vspace{-0.5cm}
    \caption{The adaptive behavior of Online Platt scaling (OPS) for the covariate drift dataset described in Section~\ref{sec:cov-shift-illustrative}. The title of each panel indicates the time-window that panel corresponds to. The histogram of $X_t$ values in the corresponding time window is plotted with maximum height normalized to $1$. Also plotted is the true curve for $\text{Pr}(Y = 1 \mid X = x)$ and two predictive curves: a base model trained on $t=1$ to $t=1000$, and OPS-calibrated models with parameter values fixed at the start of the time window. %
    The base model is accurate for the training data which is mostly in $[-5 , 10]$, but becomes inaccurate and miscalibrated with the covariate-shifted values for larger $t$ (bottom two subplots). OPS adapts well, agreeing with the base model in the top-right subplot, but flipping the base model predictions in the bottom-right subplot.}%
    \vspace{-0.3cm}
    \label{fig:cov-shift-illustration}
\end{figure*}

\subsection{Online adversarial post-hoc calibration}
\revision{
The proposal, summarized in Figure~\ref{fig:methology-summary}, is as follows. First, train any probabilistic classifier $f$ on some part of the data. Then, perform \textit{online post-hoc calibration} on top of $f$ to get online adaptivity. In effect, this amounts to viewing $f(\x_t)$ as a scalar ``summary" of $\x_t$, and the post-hoc mapping $(m_t: [0,1]\to [0,1])_{t\geq 1}$ becomes the time-series model over the scalar feature $f(\x_t)$. Finally, apply calibeating on the post-hoc predictions $m_t(f(\x_t))$ to obtain adversarial validity. Figure~\ref{fig:methology-summary} highlights our choice to do both post-hoc calibration and calibeating simultaneously on the streaming test data $(\x_t, y_t)_{t\geq 1}$.}%

Such an online version of post-hoc calibration has not been previously studied to the best of our knowledge. We show how one would make PS online, to obtain Online Platt Scaling (OPS). OPS relies on a simple but crucial observation: PS is a two-dimensional logistic regression problem over ``pseudo-features" $\logit(f(\x_t))$. Thus the problem of learning OPS parameters is the problem of online logistic regression (OLR, see Figure~\ref{fig:intro-olr} for a brief description). %
Several regret minimization algorithms have been developed for OLR \citep{hazan2007logarithmic, foster2018logistic, jezequel2020efficient}. We consider these and find an algorithm with optimal regret guarantees that runs in linear time. %
These regret guarantees imply that OPS is guaranteed to perform as well as the best fixed PS model in hindsight for an arbitrarily distributed online stream $(\x_t, y_t)_{t\geq 1}$, which includes the entire range of distribution drifts---i.i.d.\ data, data with covariate/label drift, and adversarial data.  We next present illustrative experiments where this theory bears out impressively in practice.

Then, Section~\ref{sec:ops} presents OPS, Section~\ref{sec:calibeating} discusses calibeating, Section~\ref{sec:experiments} presents baseline experiments on synthetic and real-world datasets. Section~\ref{sec:beta-scaling} discusses the extension of all OPS ideas to a post-hoc technique called beta scaling.

\begin{figure*}[t]
    \centering
    \begin{subfigure}{\linewidth}
    \centering
    \includegraphics[trim=0 0 0 0, clip, width=0.6\linewidth]{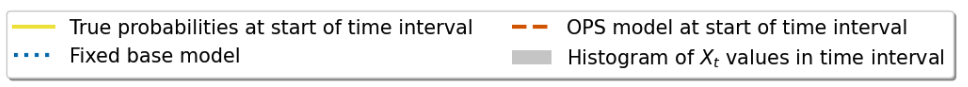}
    \end{subfigure}
    \begin{subfigure}[t]{0.49\textwidth}
        \centering
        \includegraphics[width=1\textwidth, trim=0 30 0 1.5cm, clip]{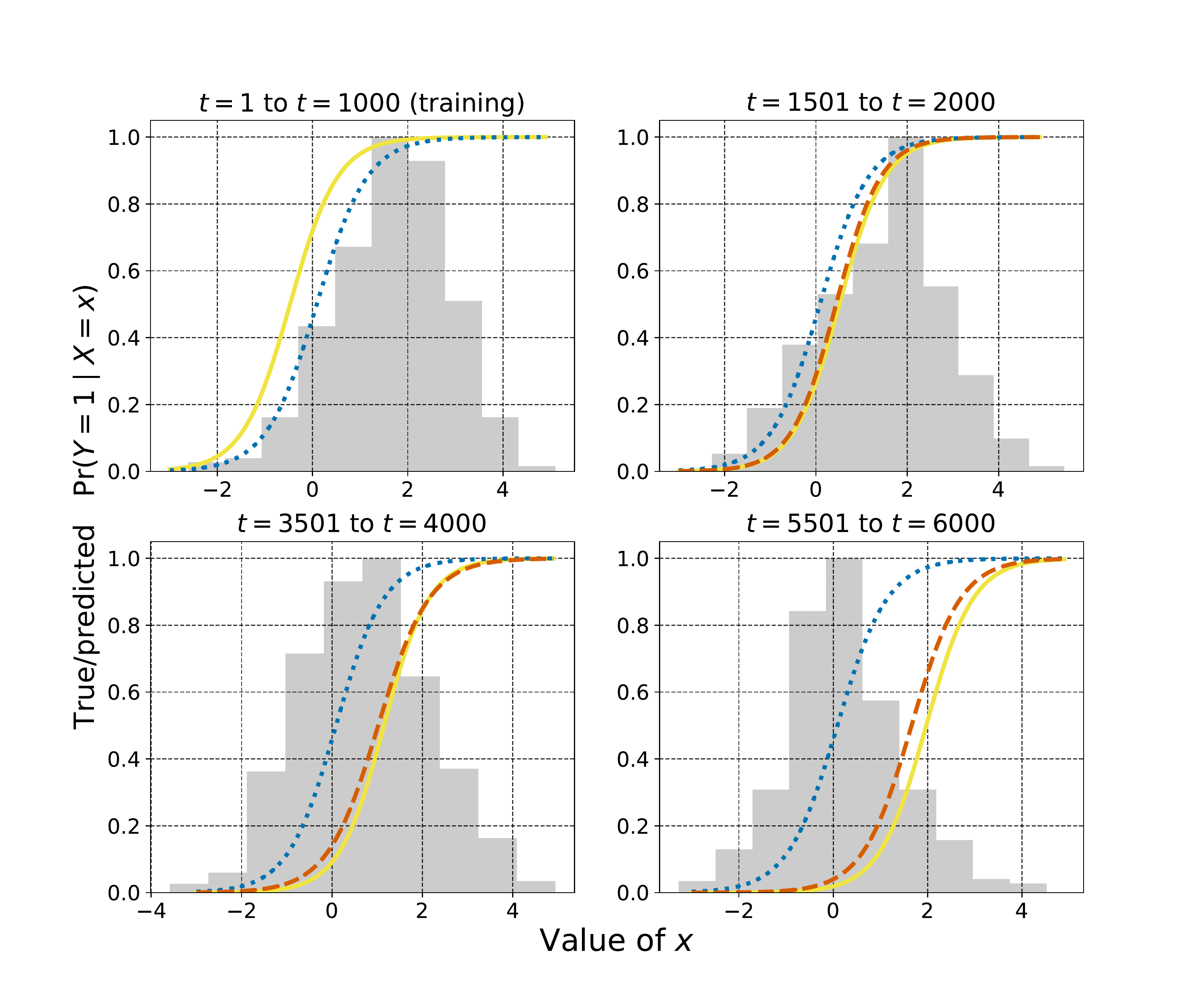}
        \vspace{-0.5cm}
        \caption{OPS with label drift.}
        \label{fig:label-shift-small}
    \end{subfigure}
    \begin{subfigure}[t]{0.49\textwidth}
        \centering
        \includegraphics[width=1\textwidth, trim=0 30 0 1.5cm, clip]{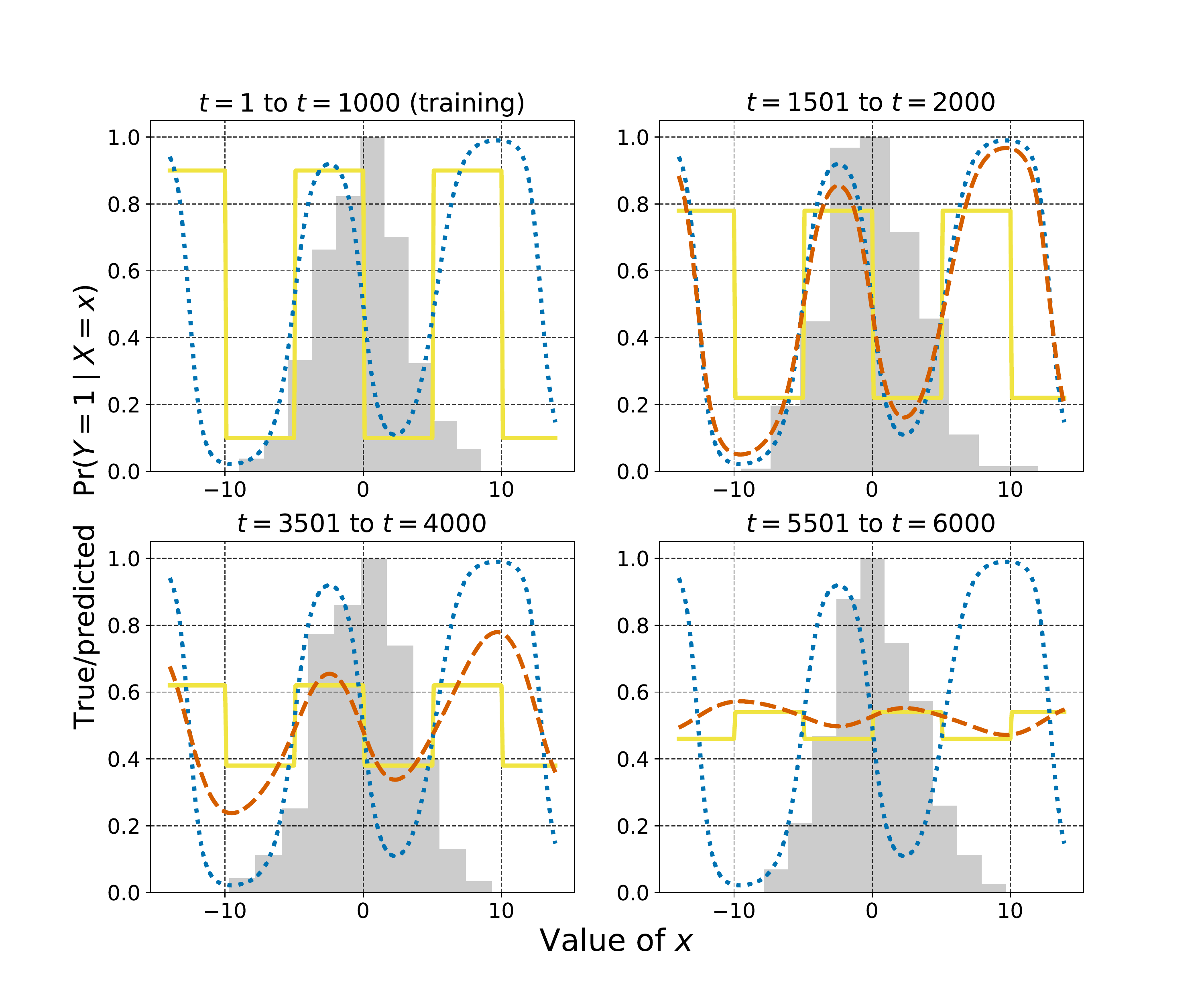}
        \vspace{-0.5cm}
        \caption{OPS with regression-function drift.}
        \label{fig:regression-function-shift-small}
    \end{subfigure}
    \caption{The adaptive behavior of OPS for the simulated label shift and regression-function drift datasets described in Section~\ref{sec:cov-shift-illustrative}. For more details on the contents of the figure, please refer to Figure~\ref{fig:cov-shift-illustration}. %
    The improvement in calibration and accuracy of OPS over the base model is visually apparent, but for completeness, \{Acc, CE\} values are reported in the Appendix as part of Figures~\ref{fig:label-shift-illustration} and \ref{fig:regression-shift-illustration}.}%
    \label{fig:regression-shift-cov-shift-joint-illustration}
    \vspace{-0.2cm}
\end{figure*}

\subsection{Illustrative experiments with distribution drift}
\label{sec:cov-shift-illustrative}
\textbf{Covariate drift. }%
We generated data as follows. For $t = 1, 2, \ldots, 6000$,
\begin{equation} \label{eq:cov-shift-1d-sim}\begin{split}
    X_t &\sim \Ncal((t-1)/250, 4);\\
    Y_t | X_t &\sim \left\{\begin{array}{ll}\text{Ber}(0.1) & \text{if } \text{mod}{(\floor{X_t/5}, 2)} = 0,\\
    \text{Ber}(0.9) & \text{if }  \text{mod}{(\floor{X_t/5}, 2)} = 1.
    \end{array}\right.
\end{split}
\end{equation}
Thus the distribution of $Y_t$ given $X_t$ is a fixed periodic function, but the distribution of $X_t$ drifts over time. The solid yellow line in Figure~\ref{fig:cov-shift-illustration} plots $\smash{\text{Pr}(Y = 1 \mid X = x)}$ against $x$. We featurized $x$ as a 48-dimensional vector with the components 
$\sin\roundbrack{\frac{x}{\text{freq}} + \text{translation}}$, where
$\smash{\text{freq} \in \{1, 2, 3, 4, 5, 6\}}$ and $\smash{\text{translation} \in \{0, \pi/4, \pi/2, \ldots 7\pi/4\}}$. 

A logistic regression base model $f$ is trained over this 48-dimensional representation using the points $(X_t, Y_t)_{t=1}^{1000}$, randomly permuted and treated as a single batch of exchangeable points, which we will call \textit{training points}. The points $(X_t, Y_t)_{t=1001}^{6000}$ form a supervised non-exchangeable test stream: we use this stream to evaluate $f$, recalibrate $f$ using OPS, and evaluate the OPS-calibrated model. 

Figure~\ref{fig:cov-shift-illustration} displays $f$ and the recalibrated OPS models at four ranges of $t$ (one per plot). The training data has most $x_t$-values in the range $[-5, 10]$ as shown by the (height-normalized) histogram in the top-left plot. In this regime, $f$ is visually accurate and calibrated---the dotted light blue line is close to the solid yellow truth. We now make some observations at three test-time regimes of $t$: 
\begin{enumerate}[label=(\alph*), leftmargin=10pt, itemsep=0pt, topsep=0pt]
    \item $t = 1501$ to $t = 2000$ (the histogram shows the distribution of $(x_t)_{t=1501}^{2000}$). For these values of $t$, the test data is only slightly shifted from the training data, and $f$ continues to perform well. The OPS model recognizes the good performance of $f$ and does not modify it much.
    \item $t = 3500$ to $t = 4000$. Here, $f$ is ``out-of-phase" with the true distribution, and Platt scaling is insufficient to improve $f$ by a lot. OPS recognizes this, and it offers slightly better calibration and accuracy by making less confident predictions between $0.2$ and $0.4$. 
    \item $t = 5500$ to $t = 6000$. In this regime, $f$ makes predictions opposing reality. Here, the OPS model flips the prediction, achieving high accuracy and calibration. 
\end{enumerate}
These observations are quantitatively supported by the accuracy and $\ell_1$-calibration error (CE) values reported by the table in Figure~\ref{fig:cov-shift-illustration-table}. Accuracy and CE values are estimated using the known true distribution of $Y_t \mid X_t$ and the observed $X_t$ values, making them unbiased and avoiding some well-known issues with CE estimation. More details are provided in Appendix~\ref{appsec:intro-experiments-additional}.

\textbf{Label drift.} For $t = 1, 2, \ldots, 6000$, data is generated as: 
\vspace{-0.1cm}
\begin{equation} \label{eq:label-shift-1d-sim}\begin{split}
    Y_t &\sim \text{Bernoulli}(0.95(1-\alpha_t) + 0.05 \alpha_t),\\
    & \text{ where } \alpha_t = (t-1)/6000);\\
    X_t|Y_t &\sim \indicator{Y_t = 0}\Ncal(0, 1) + \indicator{Y_t = 1}\Ncal(2, 1).
\end{split}
\vspace{-0.1cm}
\end{equation}
Thus, $X_t \mid Y_t$ is fixed while the label distribution drifts. 
We follow the same training and test splits described in the covariate drift experiment, but without sinusoidal featurization of $X_t$; the base logistic regression model is trained directly on the scalar $X_t$'s. The gap between $f$ and the true model increases over time but OPS adapts well (Figure~\ref{fig:label-shift-small}). %

\textbf{Regression-function drift.} For $t=1, 2, \ldots, 6000$, the data is generated as follows: $\alpha_t = (t-1)/5000$,
\begin{align} \label{eq:reg-shift-1d-sim}
    X_t &\sim \Ncal(0, 10) \text{ and } Y_t|X_t \sim \text{Bernoulli}(p_t), \text{ where }\\
    p_t &= \left\{\begin{array}{ll} 0.1(1-\alpha_t) + 0.5\alpha_t & \text{if } \text{mod}{(\floor{X_t/5}, 2)} = 0,\\
    0.9(1-\alpha_t) + 0.5\alpha_t & \text{if } \text{mod}{(\floor{X_t/5}, 2)} = 1.
    \end{array}\right. \nonumber %
\end{align}
Thus the distribution of $X_t$ is fixed, but the regression function $\text{Pr}(Y_t = 1 \mid X_t)$ drifts over time. 
We follow the same training and test splits described in the covariate drift experiment, as well as the 48-dimensional featurization and logistic regression modeling. The performance of the base model worsens over time, while OPS adapts (Figure~\ref{fig:regression-function-shift-small}). %

\vspace{-0.1cm}
\section{Online Platt scaling (OPS)}
\label{sec:ops}
In a batch post-hoc setting, the Platt scaling parameters are set to those that minimize log-loss over the calibration data. If we view the first $t$ instances in our stream as the calibration data, the fixed-batch Platt scaling parameters are,
\vspace{-0.3cm}
\begin{equation}
    (\widehat{a}_t, \widehat{b}_t) = \argmin_{(a, b) \in \Real^2} \sum_{s = 1}^t  l(m^{a,b}(f(\x_s)), y_s),  \label{eq:ops-optimal-t}
\end{equation}
where $l(p, y) = -y\log p-(1-y)\log(1-p)$ and $m^{a,b}$ is defined in \eqref{eq:ps-model-class}. Observe that this is exactly logistic regression over the dataset $(\logit(f(\x_s)), y_s)_{s=1}^t$.

\begin{table*}[htp]
    \centering
    \begin{tabular}{ccc} \toprule
     Algorithm & Regret & Running time 
    \\\midrule
    Online Gradient Descent (OGD) \citep{zinkevich2003online}  & $B\sqrt{T}$ & $T$
    \\
    Online Newton Step (ONS) \citep{hazan2007logarithmic} & $e^B\log T$ & $T$
    \\
    AIOLI \citep{jezequel2020efficient} & $B \log (BT)$ & $T\log T$
    \\
    Aggregating Algorithm (AA) \citep{vovk1990aggregating, foster2018logistic} & $\log (BT)$ & $ B^{18}T^{24}$
    \\
    \bottomrule
    \end{tabular}
    \caption{Asymptotic regret and running times of online logistic regression (OLR) algorithms for OPS as functions of the radius of reference class $B$ and time-horizon $T$. For general OLR, regret and running times also depend on the dimension of $\Xcal$. However, OPS effectively reduces the dimensionality of $\Xcal$ to $2$, so that a second-order method like ONS runs almost as fast as a first-order method like OGD. Also note that $B = \sqrt{a^2 + b^2}$ is small if the base model $f$ is not highly miscalibrated. ONS with fixed hyperparameters was chosen for all OPS experiments; see Section~\ref{sec:ops-ons-final} for implementation details.}
    \label{tab:lr-regret-runningtime}
    \vspace{-0.4cm}
\end{table*}

The thesis of OPS is that as more data is observed over time, we should use it to update the Platt scaling parameters. Define  $p_t^\ops := m^{a_t, b_t}(f(\x_t))$, where $(a_t, b_t)$ depends on $\{(f(\x_1), y_1), \ldots, (f(\x_{t-1}), y_{t-1})\}$.\footnote{A variant of this setup allows $(a_t, b_t)$ to depend on $f(\x_t)$ \citep{foster2018logistic}.} One way to compare methods in this online setting is to consider \textit{regret} $R_T$ with respect to a reference $\ell_2$-ball of radius $B$, $\Bcal := \{(a, b) \in \Real^2: a^2 + b^2 \leq B^2\}$:
\vspace{-0.1cm}
\begin{equation}
    R_T = \sum_{t=1}^T l(p_t^\ops, y_t) - \min_{(a, b) \in \Bcal}\sum_{t=1}^T l(m^{a,b}(f(\x_t)), y_t).\label{eq:regret-definition}
\end{equation} 
$R_T$ is the difference between the total loss incurred when playing $(a_t, b_t)$ at times $t\leq T$ and the total loss incurred when playing the single optimal $(a, b) \in \Bcal$ for all $t \leq T$. Typically, we are interested in algorithms that have low $R_T$ irrespective of how $(\x_t, y_t)$ is generated.
\vspace{-0.1cm}
\subsection{Logarithmic worst-case regret bound for OPS}
OPS regret minimization is exactly online logistic regression (OLR) regret minimization  over ``pseudo-features" $\logit(f(\x_t))$. Thus our OPS problem is immediately solved using OLR methods. A number of OLR methods have been proposed, and we consider their regret guarantees and running times for the OPS problem. These bounds typically depend on $T$ and two problem-dependent parameters: the dimension (say $d$) and  $B$, the radius of $\Bcal$. 
\begin{enumerate}[itemsep=-1pt, topsep=0pt]
    \item In our case, $d = 2$ since there is one feature $\logit(f(\x))$ and a bias term. Thus $d$ is a constant. %
    \item $B$ could technically be large, but in practice, if $f$ is not highly miscalibrated, we expect small values of $a$ and $b$ which would in turn lead to small $B$. This was true in all our experiments.%
\end{enumerate}
Regret bounds and running times for candidate OPS methods are presented in Table~\ref{tab:lr-regret-runningtime}, which is an adaptation of Table 1 of \citet{jezequel2020efficient} with all $\text{poly}(d)$ terms removed. Based on this table, we identify AIOLI and Online Newton Step (ONS) as the best candidates for implementing OPS, since they both have $O(\log T)$ regret and $\widetilde{O}(T)$ running time. In the following theorem, we collect explicit regret guarantees for OPS based on ONS and AIOLI. Since the log-loss can be unbounded if the predicted probability equals $0$ or $1$, we require some restriction on $f(\x_t)$.%
\begin{theorem}
\label{thm:ops-regret}
    Suppose $\forall t, f(\x_t) \in [0.01, 0.99]$, $B \geq 1$, and $T \geq 10$. Then, for any sequence $(\x_t, y_t)_{t=1}^T$, OPS based on ONS  satisfies\vspace{-0.2cm}
    \begin{equation}
        \vspace{-0.1cm}R_T(\text{ONS}) \leq 2(e^B + 10B) \log T + 1,  \label{eq:ons-regret}
    \end{equation}
    and OPS based on AIOLI  satisfies\vspace{-0.1cm}
    \begin{equation}\vspace{-0.2cm}
        R_T(\text{AIOLI}) \leq 22 B \log(BT).\label{eq:aioli-regret}
    \end{equation}
\end{theorem}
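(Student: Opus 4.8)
The plan is to treat Theorem~\ref{thm:ops-regret} as a corollary of two existing online-logistic-regression (OLR) regret guarantees, specialized to the two-dimensional instance that OPS induces. The excerpt already records the essential reduction: setting $\phi_t = (\logit(f(\x_t)), 1)^\intercal \in \Real^2$ and $\w = (a,b)^\intercal$, the OPS prediction is $m^{a,b}(f(\x_t)) = \sigmoid(\w^\intercal \phi_t)$ and the per-round loss $l(m^{a,b}(f(\x_t)), y_t)$ is exactly the logistic loss of $\w$ evaluated on $\phi_t$. So both claimed bounds will follow by invoking, respectively, the Online Newton Step (ONS) regret theorem of \citet{hazan2007logarithmic} and the AIOLI regret theorem of \citet{jezequel2020efficient}, and then plugging in the geometry-dependent constants of this particular instance. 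First, then, I would isolate those constants.

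Second, I would bound the two quantities on which every OLR regret bound depends: the feature norm and the competitor diameter. Since $f(\x_t) \in [0.01, 0.99]$, we have $\abs{\logit(f(\x_t))} \le \log 99 < 4.6$, hence $\enorm{\phi_t} \le \sqrt{\log^2 99 + 1} < 4.7 =: \rho$, while the reference set $\Bcal$ has diameter $2B$. The logistic-loss gradient is $(\sigmoid(\w^\intercal \phi_t) - y_t)\phi_t$, so its norm is at most $\rho$, and by Cauchy--Schwarz the margin obeys $\abs{\w^\intercal \phi_t} \le B\rho$ on $\Bcal$, keeping every competitor prediction inside $[\sigmoid(-B\rho), \sigmoid(B\rho)]$ so that the log-loss stays finite.

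Third, I would run the two algorithm-specific arguments. For ONS, I would verify that the logistic loss is $\alpha$-exp-concave over the relevant margin range with $\alpha \gtrsim e^{-B\rho}$ (the standard computation $\ell'' \ge \alpha (\ell')^2$ for $\ell(z) = \log(1+e^{-z})$), then substitute dimension $d = 2$, gradient bound $\rho$, diameter $2B$, and $1/\alpha$ into the ONS regret bound $R_T = O((1/\alpha + GD)\,d \log T)$ and collect constants to reach \eqref{eq:ons-regret}. For AIOLI, I would instead cite the main regret theorem of \citet{jezequel2020efficient}, set $d = 2$, and account for the feature norm $\rho$ (rescaling the features to unit norm inflates the effective radius by a factor of $\rho$), after which tracking the universal constant in their bound yields \eqref{eq:aioli-regret}.

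The conceptual content is thin once the reduction is fixed; the real labor---and the main obstacle---is constant bookkeeping. The delicate point for the ONS bound is the exp-concavity constant, since it is precisely what produces the $e^B$ dependence, and one must take care that the margin bound $B\rho$ (rather than $B$) together with the factor $\rho$ fold into the clean stated numbers $2(e^B + 10B)$ and the additive $+1$ (which absorbs low-order $\log T$ terms under the $T \ge 10$, $B \ge 1$ regime). For AIOLI the analogous care is needed to confirm that its theorem's hypotheses (unit-norm features, $B \ge 1$) hold after rescaling, and that AIOLI's use of improper predictions does not weaken the comparison against the proper competitor class $\Bcal$, so that the universal constant lands at $22$.
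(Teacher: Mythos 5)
Your overall route is the same as the paper's: reduce OPS to two\-/dimensional online logistic regression over the pseudo-feature $(\logit(f(\x_t)),1)$, bound the feature norm by $\sqrt{\logit(0.99)^2+1}\leq 5$, and then invoke the ONS and AIOLI regret theorems with $d=2$. The paper's proof is exactly this skeleton: it plugs $G=5$, $D=2B$, and $\alpha=e^{-B}$ into Theorem 4.5 of \citet{hazan2016introduction}, and plugs $d=2$, $R=10$ into Theorem 1, equation (4) of \citet{jezequel2020efficient}.

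The gap is in how you handle the exp-concavity constant, and it is not ``constant bookkeeping.'' You take (correctly, from the first-principles computation $\ell''\geq\alpha(\ell')^2$) the exp-concavity constant over $\Bcal$ to be $e^{-B\rho}$ with $\rho\approx 4.7$, since the margin can be as large as $B\rho$. But then the leading term of the ONS bound is $e^{B\rho}\log T=e^{4.7B}\log T$, which cannot be folded into $2(e^B+10B)\log T+1$: the ratio $e^{4.7B}/e^{B}$ is already about $40$ at $B=1$ and is unbounded in $B$. So your plan, executed faithfully, proves a strictly weaker inequality than \eqref{eq:ons-regret}; the step ``collect constants to reach \eqref{eq:ons-regret}'' fails. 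The paper instead sets $\alpha=e^{-B}$, citing \citet{foster2018logistic} for ``the exp-concavity constant of the logistic loss over a ball of radius $B$''---i.e., its bookkeeping makes exp-concavity depend on $B$ alone rather than on the margin $B\rho$ (a constant that the cited literature states for unit-norm features, so your more conservative accounting is defensible on its own terms, but it does not reach the stated theorem). The same overshoot afflicts your AIOLI step: rescaling features to unit norm inflates the competitor radius to $B\rho$, and since AIOLI's regret is linear in the radius, the leading constant becomes roughly $22\rho B>100B$ rather than $22B$. The paper avoids any rescaling by using the explicit feature-norm parameter in J\'ez\'equel et al.'s theorem (setting $R=10$ with $d=2$), which keeps the radius at $B$ and the constant at $22$.
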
\vspace{-0.05cm}
The proof is in Appendix~\ref{appsec:proofs}. Since log-loss is a proper loss \citep{gneiting2007strictly}, minimizing it has implications for calibration \citep{brocker2009reliability}. However, no ``absolute" calibration bounds can be shown for OPS, as discussed shortly in Section~\ref{sec:limitations-regret}. %

\vspace{-0.1cm}
\subsection{Hyperparameter-free ONS implementation}
\label{sec:ops-ons-final}
In our experiments, we found ONS to be significantly faster than AIOLI while also giving better calibration. Further, ONS worked without any hyperparameter tuning after an initial investigation was done to select a single set of hyperparameters. Thus we used ONS for experiments based on a \textit{verbatim implementation} of Algorithm 12 in \citet{hazan2016introduction}, with $\gamma = 0.1$, $\rho = 100$, and $\mathcal{K} = \{(a, b): \|(a, b)\|_2 \leq 100\}$. Algorithm~\ref{alg:ops-ons} in the Appendix contains pseudocode for our final OPS implementation.

\vspace{-0.1cm}
\subsection{Follow-The-Leader as a baseline for OPS}
\label{sec:ftl-wps}
The Follow-The-Leader (FTL) algorithm sets $(a_t, b_t) = (\widehat{a}_{t-1}, \widehat{b}_{t-1})$ (defined in \eqref{eq:ops-optimal-t}) for $t \geq 1$. This corresponds to solving a logistic regression optimization problem at every time step, making the overall complexity of FTL $\Omega(T^2)$. 
Further, FTL has $\Omega (T)$ worst-case regret. %
Since full FTL is intractably slow to implement even for an experimental comparison, we propose to use a computationally cheaper variant, called Windowed Platt Scaling (WPS). In WPS the optimal parameters given all current data, $(\widehat{a}_t, \widehat{b}_t)$, are computed and updated every $O(100)$ steps instead of at every time step. We call this a \textit{window} and the exact size of the window can be data-dependent. The optimal parameters computed at the start of the window are used to make predictions until the end of that window, then they are updated for the next window. This heuristic version of FTL performs well in practice (Section~\ref{sec:experiments}).%

\vspace{-0.1cm}
\subsection{Limitations of regret analysis}
\label{sec:limitations-regret}
Regret bounds are relative to the best in class, so Theorem~\ref{thm:ops-regret} implies that OPS will do no worse than the best Platt scaling model in hindsight. However, even for i.i.d. data, the best Platt scaling model is itself miscalibrated on some distributions \citep[Theorem 3]{gupta2020distribution}. This latter result shows that some form of binning must be deployed to be calibrated for arbitrarily distributed i.i.d. data. Further, if the data is adversarial, any deterministic predictor %
can be rendered highly miscalibrated \citep{oakes1985self, dawid1985comment}; a simple strategy is to set $y_t = \indicator{p_t \leq 0.5}$.  %
In a surprising seminal result, \citet{foster1998asymptotic} showed that adversarial calibration is possible by randomizing/hedging between different bins. %
The following section shows how one can perform such binning and hedging on top of OPS, based on a technique called calibeating. \vspace{-0.1cm}

\section{Calibeating the OPS forecaster}
\label{sec:calibeating}
Calibeating \citep{foster2022calibeating} is a technique to improve or ``beat" an expert forecaster. The idea is to first use the expert's forecasts to allocate data to representative \textit{bins}. Then, the bins are treated \emph{nominally}: they are just names or tags for ``groups of data-points that the expert suggests are similar". The final forecasts in the bins are computed using only the outcome ($y_t$) values of the points in the bin (seen so far), with no more dependence on the expert's original forecast. The intuition is that forecasting inside each bin can be done in a theoretically valid sense, irrespective of the theoretical properties of the expert. 

We will use the following ``$\epsilon$-bins" to perform calibeating:
\vspace{-0.1cm}
\begin{equation}\vspace{-0.1cm}
    B_1 = [0, \epsilon), B_2 = [\epsilon, 2\epsilon), \ldots, B_m = [1-\epsilon, 1].
 \label{eq:B-bins}
\end{equation}
Here $\epsilon > 0$ is the width of the bins, and for simplicity we assume that $m = 1/\epsilon$ is an integer. For instance, one could set $\epsilon = 0.1$ or the number of bins $m=10$, as we do in the experiments in Section~\ref{sec:experiments}. Two types of calibeating---tracking and hedging---are described in the following subsections. We suggest recalling our illustration of calibeating in the introduction (Figure~\ref{fig:intro-calibeating}). %

\vspace{-0.2cm}
\subsection{Calibeating via tracking past outcomes in bins}
\label{sec:tops}
Say at some $t$, the expert forecasts $p_t \in [0.7, 0.8)$. We look at the instances $s < t$ when $p_s  \in [0.7, 0.8)$ and compute\vspace{-0.1cm}
\[\vspace{-0.1cm}
\Bar{y}^b_{t-1} = \text{Average}\{y_s : s < t, p_s  \in [0.7, 0.8)\}.
\]
Suppose we find that $\Bar{y}^b_{t-1} = 0.85$. That is, when the expert forecasted bin $[0.7 , 0.8)$ in the past, the average outcome was $0.85$. A natural idea now is to forecast $0.85$ instead of $0.75$. We call this process ``Tracking", and it is the form of calibeating discussed in Section 4 of \citet{foster2022calibeating}. 
In our case, we treat OPS as the expert and call the tracking version of OPS as TOPS. If $p_t^\ops \in B_b$%
, then\vspace{-0.1cm}
\begin{equation}\vspace{-0.1cm}
     p_t^\tops := \text{Average}\{y_s : s < t,  p^{\ops}_s \in B_b\}.\label{eq:tops-update}
\end{equation}
The average is defined as the mid-point of $B_b$ if the set above is empty.

\citet{foster2022calibeating} showed that the Brier-score of the TOPS forecasts $p_t^\tops$, defined as $\frac{1}{T}\sum_{t=1}^T (y_t - p_t^\tops)^2$, is better than the corresponding Brier-score of the OPS forecasts $p_t^\ops$, by roughly the squared calibration error of $p_t^\ops$ (minus a $\log T$ term). In the forthcoming Theorem~\ref{thm:tops-sharpness-guarantee}, we derive a result for a different object that is often of interest in post-hoc calibration, called sharpness.%
\vspace{-0.1cm}
\subsection{Segue: defining sharpness of forecasters}\vspace{-0.1cm}
\label{subsec:sharpness-defn}
Recall the $\epsilon$-bins introduced earlier \eqref{eq:B-bins}. Define $N_b = \abs{\{t \leq T : p_t \in B_b\}}$ and $\widehat{y}_b = \frac{1}{N_b}\sum_{t\leq T, p_t \in B_b} y_t$ if $N_b > 0$, else $\widehat{y}_b = 0$. Sharpness is defined as,\vspace{-0.3cm}
\begin{equation}
\vspace{-0.1cm}
    \shp(p_{1:T}) := \frac{1}{T}\sum_{b=1}^m N_b \cdot \widehat{y}_b^2.\footnote{The original definition of sharpness \citep{murphy1973new} was (essentially): $-T^{-1}\sum_{b=1}^m N_b \widehat{y}_b(1-\widehat{y}_b)$, which equals $\shp(p_{1:T}) - \Bar{y}_T$. We add the forecast-independent term $\Bar{y}_T$ on both sides and define the (now non-negative) quantity as~\shp.}\label{eq:sharpnes-definition}
\end{equation}
If the forecaster is perfectly knowledgeable and forecasts $p_t = y_t$, its \shp~equals $\sum_{t=1}^T y_t/T =: \Bar{y}_T$. On the other hand, if the forecaster puts all points into a single bin $b$, its \shp~equals $(\sum_{t=1}^T y_t/T)^2 = \Bar{y}_T^2$. The former forecaster is precise or \textit{sharp}, while the latter is not, and \shp~captures this---it can be shown that %
$\Bar{y}_T^2 \leq \shp(p_{1:T}) \leq \Bar{y}_T$.
We point the reader to \citet{brocker2009reliability} for further background. One of the goals of effective forecasting is to ensure high sharpness~\citep{gneiting2007probabilistic}. OPS achieves this goal by relying on the log-loss, a proper loss. The following theorem shows that TOPS suffers a small loss in \shp~compared to \ops. %
\begin{theorem}
The sharpness of TOPS forecasts satisfies%
\vspace{-0.1cm}
\begin{equation}\vspace{-0.1cm}
    \shp(p_{1:T}^\tops) \geq \shp(p_{1:T}^{\ops}) - \epsilon - \frac{\epsilon^2}{4} - \frac{\log T+1}{\epsilon T}.
\end{equation}
\label{thm:tops-sharpness-guarantee}\vspace{-0.5cm}
\end{theorem}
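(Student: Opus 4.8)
The plan is to bridge the two sharpness quantities through the raw, binning-free Brier score of the TOPS forecasts, $\frac{1}{T}\sum_{t=1}^T (y_t - p_t^{\tops})^2$. The starting point is an elementary identity valid for any forecaster $p_{1:T}$ placed into the $\epsilon$-bins: writing $\mathrm{bin}(t)$ for the bin containing $p_t$ and using $y_t^2 = y_t$, one gets $\sum_{t \in B_b}(y_t - \widehat{y}_b)^2 = N_b\widehat{y}_b(1 - \widehat{y}_b)$, and summing over $b$ yields $\shp(p_{1:T}) = \bar{y}_T - \frac{1}{T}\sum_{t=1}^T (y_t - \widehat{y}_{\mathrm{bin}(t)})^2$. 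Thus the theorem reduces to controlling the within-bin residual $\frac{1}{T}\sum_t (y_t - \widehat{y}_{\mathrm{bin}(t)})^2$ for the two forecasters, and the raw TOPS Brier score is the bridge: the OPS structure bounds it from above, while the TOPS sharpness is bounded from below by it.

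First I would upper-bound the TOPS Brier score using the OPS bin structure. Within a single OPS bin $b$ (times taken chronologically), $p_t^{\tops}$ is exactly the running mean of past outcomes in that bin, so this is the classical sequential-mean prediction problem. Using the telescoping identity $V_k = V_{k-1} + \tfrac{k-1}{k}(y_k - \mu_{k-1})^2$ for the within-bin empirical variances $V_k$, the squared-loss regret of the running mean against the final bin mean $\widehat{y}_b^{\ops}$ is at most $1 + \log N_b^{\ops}$. Summing over the $m = 1/\epsilon$ bins and applying concavity of $\log$ to $\sum_b \log N_b^{\ops} \le m\log(T/m)$ gives $\frac{1}{T}\sum_t (y_t - p_t^{\tops})^2 \le \frac{1}{T}\sum_t (y_t - \widehat{y}_{b(t)}^{\ops})^2 + \frac{\log T + 1}{\epsilon T}$, where by the identity above the first term on the right equals $\bar{y}_T - \shp(p_{1:T}^{\ops})$.

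Next I would lower-bound $\shp(p_{1:T}^{\tops})$ by the TOPS Brier score, passing from the varying forecasts $p_t^{\tops}$ back to the honest TOPS-bin means $\widehat{y}_c^{\tops}$. The binning here is by the TOPS values, not the OPS values, but this causes no trouble precisely because the Brier score is binning-free. Within a TOPS bin $c$ with midpoint $r_c$, optimality of the mean gives $\sum_{t \in S_c}(y_t - \widehat{y}_c^{\tops})^2 \le \sum_{t \in S_c}(y_t - r_c)^2$, and expanding $(y_t - r_c)^2 = (y_t - p_t^{\tops})^2 + 2(y_t - p_t^{\tops})(p_t^{\tops} - r_c) + (p_t^{\tops} - r_c)^2$ with $|y_t - p_t^{\tops}| \le 1$ and $|p_t^{\tops} - r_c| \le \epsilon/2$ yields a per-point excess of $\epsilon + \epsilon^2/4$. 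Summing and dividing by $T$ gives $\frac{1}{T}\sum_t(y_t - \widehat{y}_{c(t)}^{\tops})^2 \le \frac{1}{T}\sum_t(y_t - p_t^{\tops})^2 + \epsilon + \frac{\epsilon^2}{4}$. Chaining the two bounds through the identity then produces exactly $\shp(p_{1:T}^{\tops}) \ge \shp(p_{1:T}^{\ops}) - \epsilon - \frac{\epsilon^2}{4} - \frac{\log T + 1}{\epsilon T}$.

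The main obstacle is the within-bin sequential-mean regret bound, since it is the sole source of the $\log T$ term and the only genuinely online ingredient; getting the telescoping identity and the aggregation over bins right, so that the $m = 1/\epsilon$ factor surfaces correctly as $1/(\epsilon T)$, is the delicate part. A secondary subtlety is conceptual rather than computational: OPS and TOPS induce \emph{different} binnings of the same time steps, so one must resist comparing their sharpnesses bin-by-bin, and routing everything through the binning-agnostic Brier score is what lets the two steps compose cleanly. One can in fact sharpen $\epsilon + \epsilon^2/4$ to $\epsilon/2$ via a variance-covariance bound on the within-bin excess, but the midpoint comparison above already suffices for the stated constants.
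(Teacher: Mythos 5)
Your proposal is correct and follows essentially the same route as the paper's proof: both bridge the two sharpness quantities through the raw Brier score of the TOPS forecasts, using the identity $\shp(p_{1:T}) = \bar{y}_T - \Rcal(p_{1:T})$ (the paper's sharpness--refinement lemma, your opening identity) together with the bound $\Rcal(p_{1:T}^\tops) \leq \mathcal{BS}(p_{1:T}^\tops) + \epsilon + \epsilon^2/4$ obtained by comparing bin means to bin midpoints and expanding the square (the paper's Brier-score--refinement lemma, your third step). The one place you diverge is the middle ingredient: where the paper invokes Theorem 3 of \citet{foster2022calibeating} as a black box for $\mathcal{BS}(p_{1:T}^\tops) \leq \Rcal(p_{1:T}^\ops) + (\log T + 1)/(\epsilon T)$, you prove it directly, observing that within each OPS bin the TOPS forecast is the running mean of past outcomes, applying the Welford-style telescoping identity to get per-bin regret $1 + \log N_b^\ops$ against the final bin mean, and aggregating over the $m = 1/\epsilon$ bins via concavity of the logarithm. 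This makes your proof self-contained, and your correctly flagged subtlety---that OPS and TOPS induce different binnings, so everything must be routed through the binning-free Brier score rather than compared bin-by-bin---is exactly the structural point that makes the paper's chaining of its two lemmas valid.
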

The proof (in Appendix~\ref{appsec:proofs}) uses Theorem 3 of \citet{foster2022calibeating} and relationships between sharpness, Brier-score, and a quantity called refinement. If $T$ is fixed and known, setting $\epsilon \approx \sqrt{\log T / T}$ (including constant factors), or equivalently, the number of bins $B \approx \sqrt{T/\log T}$ gives a rate of $\widetilde{O}(\sqrt{1/T})$ for the \shp~difference term. While we do not show a calibration guarantee, TOPS had the best calibration performance in most experiments (Section~\ref{sec:experiments})

\vspace{-0.1cm}
\subsection{Calibeating via hedging or randomized prediction}
\label{sec:hops}
All forecasters introduced so far---the base model $f$, OPS, and TOPS---make forecasts $p_t$ that are deterministic given the past data until time $t-1$. If the $y_t$ sequence is being generated by an adversary that acts after seeing $p_t$, then the adversary can ensure that each of these forecasters is miscalibrated by setting $y_t = \indicator{p_t \leq 0.5}$.

Suppose instead that the forecaster is allowed to hedge---randomize and draw the forecast from a distribution instead of fixing it to a single value---and the adversary only has access to the distribution and not the actual $p_t$. Then there exist hedging strategies that allow the forecaster to be arbitrarily well-calibrated \citep{foster1998asymptotic}. In fact, \citet[henceforth F99]{foster1999proof} showed that this can be done while hedging between two arbitrarily close points in $[0,1]$.

In practice, outcomes are not adversarial, and covariates are available. %
A hedging algorithm that does not use covariates cannot be expected to give informative predictions. We verify this intuition through an experiment in Appendix on historical rain data~\ref{appsec:climatology-experiment}---F99's hedging algorithm simply predicts the average $y_t$ value in the long run. 

A best-of-both-worlds result can be achieved by using the expert forecaster to bin data based on $\x_t$ values, just like we did in Section~\ref{sec:tops}. Then, inside every bin, a separate hedging algorithm is instantiated. For the OPS predictor, this leads to HOPS (OPS + hedging). Specifically, in our experiments and the upcoming calibration error guarantee, we used F99:\vspace{-0.1cm}
\begin{equation}
     p_t^\hops := \text{F99}(y_s : s < t,  p_s \in B_b).\label{eq:hops-update}
\end{equation}
A standalone description of F99 is included in Appendix~\ref{appsec:f99}. F99 hedges between consecutive mid-points of the $\epsilon$-bins defined earlier \eqref{eq:B-bins}. The only hyperparameter for F99 is $\epsilon$. In the experiments in the main paper, we set %
$\epsilon=0.1$. 
To be clear, $p_t$ is binned on the $\epsilon$-bins, and the hedging inside each bin is again over the $\epsilon$-bins. 

The upcoming theorem shows a \shp~lower bound on HOPS. In addition, we show an assumption-free upper bound on the ($\ell_1$-)calibration error, defined as \vspace{-0.1cm}
\begin{equation}\vspace{-0.2cm}
    \ce(p_{1:T}) := \frac{1}{T}\sum_{b=1}^m N_b \cdot \abs{\widehat{p}_b - \widehat{y}_b},\label{eq:ce-definition}
\end{equation}
where $N_b, \widehat{y}_b$ were defined in Section~\ref{subsec:sharpness-defn}, and $\widehat{p}_b = \frac{1}{N_b}\sum_{t\leq T, p_t \in B_b} p_t$, if $N_b > 0$, else $\widehat{p}_b = \text{mid-point}(B_b)$. 
Achieving small CE is one formalization of \eqref{eq:cal-informal}. 
The following result is conditional on the $y_{1:T}$, $p_{1:T}^\ops$ sequences. The expectation is over the randomization in F99. 
\begin{theorem}
For adversarially generated data, the expected sharpness of HOPS forecasts using the forecast hedging algorithm of \citet{foster1999proof} is lower bounded as \vspace{-0.1cm}%
\begin{equation}%
    \Exp{}{\shp(p_{1:T}^\hops)} \geq \shp(p_{1:T}^\ops) - \roundbrack{\epsilon%
    + \frac{\log T+1}{\epsilon^2T}},\label{eq:cops-sharpness-guarantee}
\end{equation}
and the expected calibration error of HOPS satisfies,
\begin{equation}\vspace{-0.1cm}
    \Exp{}{\ce(p_{1:T}^\hops)} \leq \epsilon/2 + 2\sqrt{1/\epsilon^2 T}. \label{eq:cops-calibration-guarantee}
\end{equation}
\label{thm:cops-calibration-guarantee}\vspace{-0.5cm}
\end{theorem}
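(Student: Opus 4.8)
The plan is to prove the two inequalities separately, treating the within-bin hedger F99 as a black box through its known per-bin calibration rate (Appendix~\ref{appsec:f99}) and, for the sharpness half, through the calibeating bound of \citet{foster2022calibeating}. Throughout I exploit the fact that F99 only ever forecasts mid-points of the $\epsilon$-bins, so that every HOPS forecast lands exactly on a bin mid-point.

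For the calibration bound \eqref{eq:cops-calibration-guarantee}, this mid-point property gives $\widehat{p}_j = c_j$ (the mid-point of $B_j$) within HOPS-bin $j$, hence $\ce(p_{1:T}^\hops) = \frac{1}{T}\sum_j N_j \abs{c_j - \widehat{y}_j}$. The key step is to pass from this \emph{coarse} (pooled) error to the \emph{fine} error that F99 actually controls inside each OPS-bin. Writing $N_{b,j}$ for the number of rounds whose OPS forecast lies in $B_b$ and whose HOPS forecast lies in $B_j$, and $\widehat{y}_{b,j}$ for the corresponding outcome average, the pooled $\widehat{y}_j$ is a convex combination of the $\widehat{y}_{b,j}$, so the triangle inequality yields $N_j\abs{c_j-\widehat{y}_j} \le \sum_b N_{b,j}\abs{c_j-\widehat{y}_{b,j}}$. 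Summing over $j$ gives $\ce(p_{1:T}^\hops) \le \frac{1}{T}\sum_b \roundbrack{\sum_j N_{b,j}\abs{c_j-\widehat{y}_{b,j}}}$, where the inner sum is exactly the unnormalized calibration error of the single F99 instance run inside OPS-bin $b$ on its $n_b$ outcomes. Since the theorem conditions on $y_{1:T}$ and $p_{1:T}^\ops$, each such subsequence is fixed, so I apply F99's adversarial guarantee in expectation, $\Exp{}{\sum_j N_{b,j}\abs{c_j-\widehat{y}_{b,j}}} \le \frac{\epsilon}{2} n_b + 2\sqrt{n_b}$ (the $\epsilon/2$ from mid-point rounding, the $\sqrt{n_b}$ from F99's rate). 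Taking expectations, using $\sum_b n_b = T$, and bounding $\sum_b \sqrt{n_b} \le m\sqrt{T} = \sqrt{T}/\epsilon$ over the $m = 1/\epsilon$ bins then delivers $\Exp{}{\ce(p_{1:T}^\hops)} \le \epsilon/2 + 2\sqrt{1/\epsilon^2 T}$.

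For the sharpness bound \eqref{eq:cops-sharpness-guarantee}, I would use the calibration--refinement decomposition of the Brier score. Because HOPS forecasts are constant ($=c_j$) within each bin, the Brier score splits as reliability plus refinement, and by the identity in the footnote to \eqref{eq:sharpnes-definition} the refinement equals $\Bar{y}_T - \shp(p_{1:T}^\hops)$; dropping the nonnegative reliability term gives the clean inequality $\shp(p_{1:T}^\hops) \ge \Bar{y}_T - \mathrm{Brier}(p_{1:T}^\hops)$. It then remains to upper bound $\Exp{}{\mathrm{Brier}(p_{1:T}^\hops)}$ by the $\epsilon$-binned refinement of the OPS expert, which is exactly the content of Theorem 3 of \citet{foster2022calibeating} in its hedging form: inside each OPS-bin a calibrated hedger attains a Brier score close to the within-bin outcome variance (the local refinement), the discrepancy being an $O(\epsilon)$ discretization term plus an estimation term that accumulates to $\frac{\log T + 1}{\epsilon^2 T}$ across bins. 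Substituting $\mathrm{ref}(p_{1:T}^\ops) = \Bar{y}_T - \shp(p_{1:T}^\ops)$ and rearranging gives $\Exp{}{\shp(p_{1:T}^\hops)} \ge \shp(p_{1:T}^\ops) - \roundbrack{\epsilon + \frac{\log T+1}{\epsilon^2 T}}$.

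The main obstacle is the sharpness part: transferring the per-bin calibrated-hedging Brier guarantee into the global refinement comparison with the correct constants. The $\epsilon$ term must be isolated from F99's mid-point rounding, and the $\frac{\log T+1}{\epsilon^2 T}$ term tracked as the hedging regret summed over all $m=1/\epsilon$ bins; the extra factor $1/\epsilon$ relative to the tracking bound of Theorem~\ref{thm:tops-sharpness-guarantee} is precisely the price of randomizing across the $\epsilon$-grid. By contrast, the calibration part is routine once the coarse-to-fine triangle-inequality step is in place, reducing to one application of F99's guarantee per bin followed by summing the per-bin rates over the $m$ bins.
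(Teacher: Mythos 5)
Your high-level architecture matches the paper's (per-bin F99 guarantee aggregated across OPS-bins for the CE bound; a Brier--refinement comparison plus a calibeating bound for the sharpness bound), but both halves have genuine gaps. The more serious one is in the sharpness half: the step you describe as ``exactly the content of Theorem 3 of \citet{foster2022calibeating} in its hedging form'' is precisely what the paper has to prove from scratch. Theorem 3 of Foster--Hart is the \emph{tracking} result (it is what the paper uses for Theorem~\ref{thm:tops-sharpness-guarantee}); the hedging result is their Theorem 5, and it is proved for Foster--Hart's \emph{own} hedging scheme, with discretization slack $\epsilon^2$, not for F99. Since HOPS uses F99, the paper must establish the per-step inequality $Q_t \leq \epsilon$, where $Q_t$ is the conditional expected excess of $(y_t - p_t^\hops)^2$ over the squared distance from $y_t$ to the within-cell outcome average; this requires a two-case analysis of F99 (condition A: deterministic midpoint play when some bin's empirical average lies within $\epsilon/2$ of its midpoint; condition B: randomization between adjacent midpoints with probabilities proportional to deficit and excess), with explicit algebra showing the cross terms are negative. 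Your proposal asserts the $O(\epsilon)$ slack ``from F99's mid-point rounding'' without any such argument, i.e., it assumes the key lemma rather than proving it; nothing in \citet{foster2022calibeating} applies to F99 off the shelf.

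The calibration half has a compensating-errors problem. You invoke a per-bin F99 guarantee of $\frac{\epsilon}{2}n_b + 2\sqrt{n_b}$ (unnormalized), but the actual approachability-based guarantee --- equation \eqref{eq:f99-inside-b} in the paper, via \citet{perchet2014approachability} --- is $\frac{\epsilon}{2}n_b + 2\sqrt{n_b/\epsilon}$: the rate necessarily carries a $\sqrt{m} = 1/\sqrt{\epsilon}$ factor coming from the number of grid points F99 hedges over, and no dimension-free $2/\sqrt{n_b}$ rate is available from the cited analysis. Your final bound matches the theorem only because your aggregation step $\sum_b \sqrt{n_b} \leq m\sqrt{T}$ is loose by exactly the same $1/\sqrt{\epsilon}$ factor; had you used the correct per-bin rate with that crude aggregation, you would obtain $\epsilon/2 + 2\epsilon^{-3/2}T^{-1/2}$, which is weaker than \eqref{eq:cops-calibration-guarantee}. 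The correct repair is the paper's: keep the true per-bin rate and sharpen the aggregation to $\sum_b \sqrt{n_b} \leq \sqrt{mT}$ (Cauchy--Schwarz / AM--QM), which yields exactly $\epsilon/2 + 2/(\epsilon\sqrt{T})$. Your coarse-to-fine triangle-inequality step (pooled HOPS bins versus per-OPS-bin subsequences) is sound and is the same as the paper's Jensen step.
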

\vspace{-0.2cm}
The proof  in Appendix~\ref{appsec:proofs} is based on Theorem 5 of \citet{foster2022calibeating}   and a \ce~bound for F99 based on Blackwell approachability \citep{blackwell1956analog}. With $\epsilon = \widetilde{\Theta}(T^{-1/3})$, the difference term in the \shp~bound is $\widetilde{O}(T^{-1/3})$ and with $\epsilon = \widetilde{\Theta}(T^{-1/4})$, the \ce~bound is  $\widetilde{O}(T^{-1/4})$. Compare \eqref{eq:cops-calibration-guarantee} to the usual (without calibeating) calibration bound of $O(\epsilon + 1/\sqrt{\epsilon T})$ which leads to $O(T^{-1/3})$ \citep{foster1998asymptotic}. High-probability versions of \eqref{eq:cops-calibration-guarantee} can be derived using probabilistic Blackwell approachability lemmas, such as those in \citet{perchet2014approachability}.  %

\posticml{The ``Online Recalibration" method of \citet[Algorithm 1]{kuleshov2017estimating} amounts to performing the same binning and hedging that we have described, but on top of a black-box expert. We used a specific expert, OPS, and experimentally demonstrate its benefits on multiple datasets (Section~\ref{sec:cov-shift-illustrative} and \ref{sec:experiments}). Theoretically, our calibration bound \eqref{eq:cops-calibration-guarantee} is identical to their Lemma 3 (if Lemma 3 is instantiated with F99). Their Lemma 2 shows a bound on the expected increase of any bounded proper loss on performing the calibeating step. For the case of Brier-loss their bound is $O(\epsilon + 1/\epsilon^2\sqrt{T})$. Our proof of \eqref{eq:cops-sharpness-guarantee} can be used to show an improved bound of $O(\epsilon + \log T/\epsilon^2 T)$, as stated formally in Appendix~\ref{appsec:proofs} (Theorem~\ref{thm:cops-brier-score-guarantee}). 
}

\vspace{-0.1cm}
\section{Experiments}
\label{sec:experiments}
We perform experiments with synthetic and real-data, in i.i.d. and distribution drift setting. Code to reproduce the experiments can be found at \url{https://github.com/aigen/df-posthoc-calibration} (see Appendix~\ref{appsec:implementation} for more details).  All baseline and proposed methods are described in Collection 1 on the following page.  In each experiment, the \textbf{base model} $f$ was a random forest (\texttt{sklearn}'s implementation). All default parameters were used, except \texttt{n\_estimators} was set to 1000. No hyperparameter tuning on individual datasets was performed for any of the recalibration methods.

\begin{algorithm}[t]
\begin{algorithmic}
	\STATE {\bfseries Input:} $f : \Xcal \to [0,1]$, any pre-learnt model
	\STATE {\bfseries Input:} $(\x_1, y_1), (\x_2, y_2), \ldots, (\x_T, y_T)  \in \Xcal \times \{0,1\}$
    \STATE {\bfseries Input:} calibration-set-size $T_\text{cal} < T$, window-size $W$
	\STATE Fixed Platt scaling: $(a^{\text{FPS}}, b^{\text{FPS}}) \gets (\widehat{a}_{T_\text{cal}}, \widehat{b}_{T_\text{cal}})$ (eq. \ref{eq:ops-optimal-t})\;
    \STATE Windowed Platt scaling: $(a^{\text{WPS}}, b^{\text{WPS}}) \gets (a^{\text{FPS}}, b^{\text{FPS}})$\;
    \STATE Online Platt scaling: $(a^{\text{OPS}}_1, b^{\text{OPS}}_1) \gets (1, 0)$\;	
    \FOR{$t=2$ {\bfseries to} $T$}
    \STATE $(a_t^\ops, b_t^\ops) \gets \text{ONS}((\x_1, y_1),\ldots,(\x_{t-1}, y_{t-1}))$
     \STATE (ONS is Algorithm \ref{alg:ops-ons} in the Appendix)
    \ENDFOR
    \FOR{$t=T_\text{cal} + 1$ {\bfseries to} $T$}
    \STATE $\textcolor{blue}{p_t^{\text{BM}}} \gets f(\x_t)$\;
    \STATE $\textcolor{blue}{p_t^{\text{FPS}}} \gets \sigmoid(a^\text{FPS}\cdot  \logit(f(\x_t)) + b^\text{FPS})$\;
    \STATE $\textcolor{blue}{p_t^{\text{WPS}}} \gets \sigmoid(a^\text{WPS}\cdot  \logit(f(\x_t)) + b^\text{WPS})$\;
    \STATE $\textcolor{blue}{p_t^{\text{OPS}}} \gets \sigmoid(a^\text{OPS}_{t}\cdot  \logit(f(\x_t)) + b^\text{OPS}_{t})$\;
    \STATE $\textcolor{blue}{p_t^{\text{TOPS}}}$ is set using past $(y_s, p_s^\ops)$ values as in \eqref{eq:tops-update}\;
    \STATE $\textcolor{blue}{p_t^{\text{HOPS}}}$ is set using past $(y_s, p_s^\ops)$ values as in \eqref{eq:hops-update}\;
    \STATE If$\smash{\mod(t - T_\text{cal}},W) = 0$, $(a^{\text{WPS}}, b^{\text{WPS}}) \gets (\widehat{a}_{t}, \widehat{b}_{t})$\;
    \ENDFOR
    \end{algorithmic}
 	\label{alg:online-post-hoc}
	\caption*{\textbf{Collection 1. }Proposed and baseline methods for online post-hoc calibration. Final forecasts are identified in \textcolor{blue}{blue}.} 
\end{algorithm}

\begin{figure*}[htp]
    \centering
    \begin{subfigure}{\linewidth}
    \centering
    \includegraphics[trim=0 10cm 0 0, clip, width=0.7\linewidth]{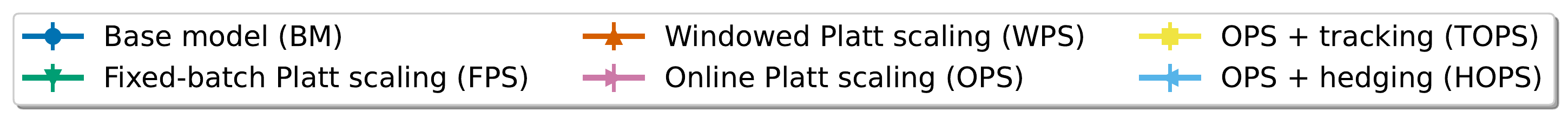}
    \end{subfigure}
        \centering
        \includegraphics[trim=0 0 0 0, clip, width=0.22\linewidth]{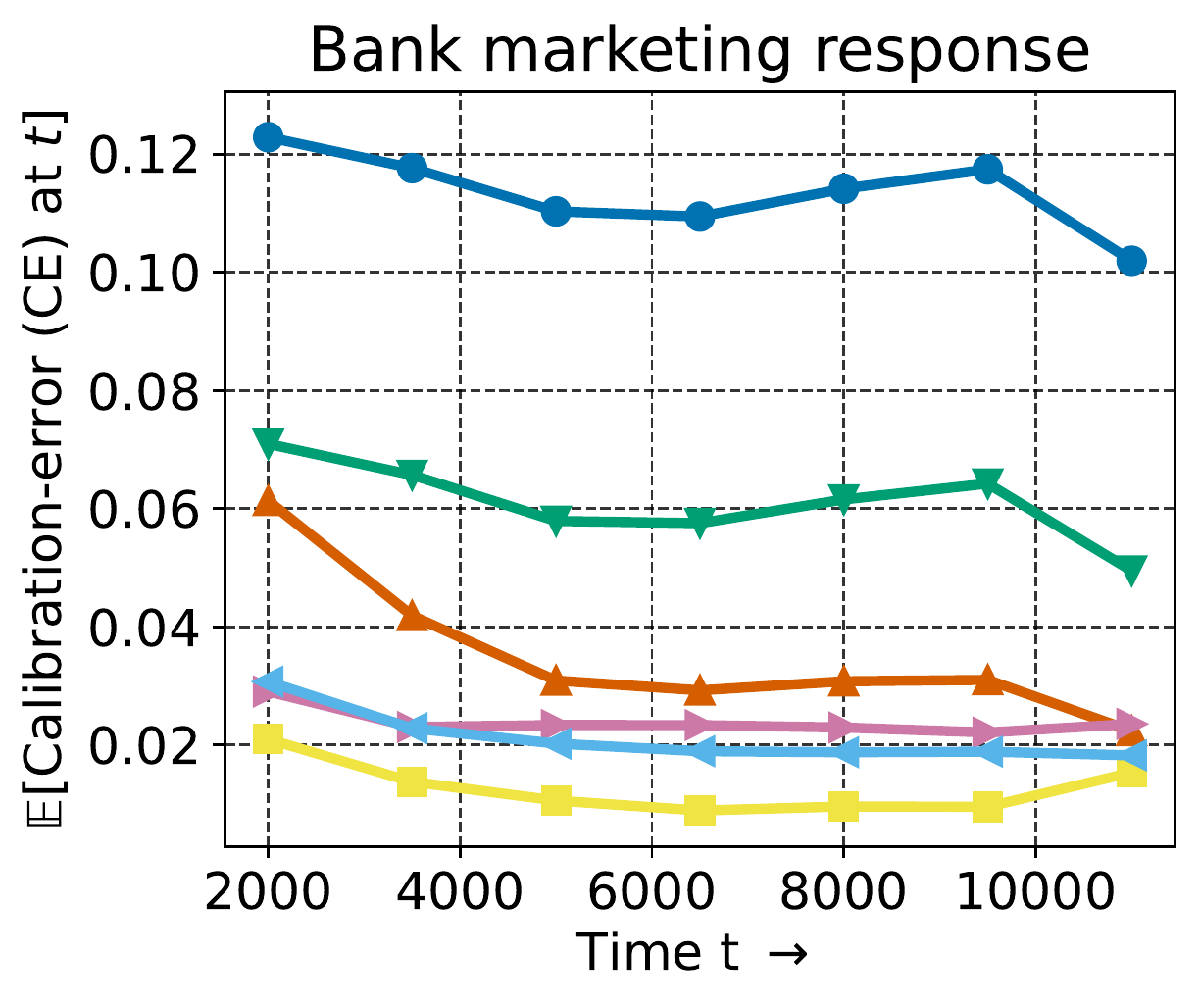}
        \includegraphics[trim=0 0 0 0, clip, width=0.22\linewidth]{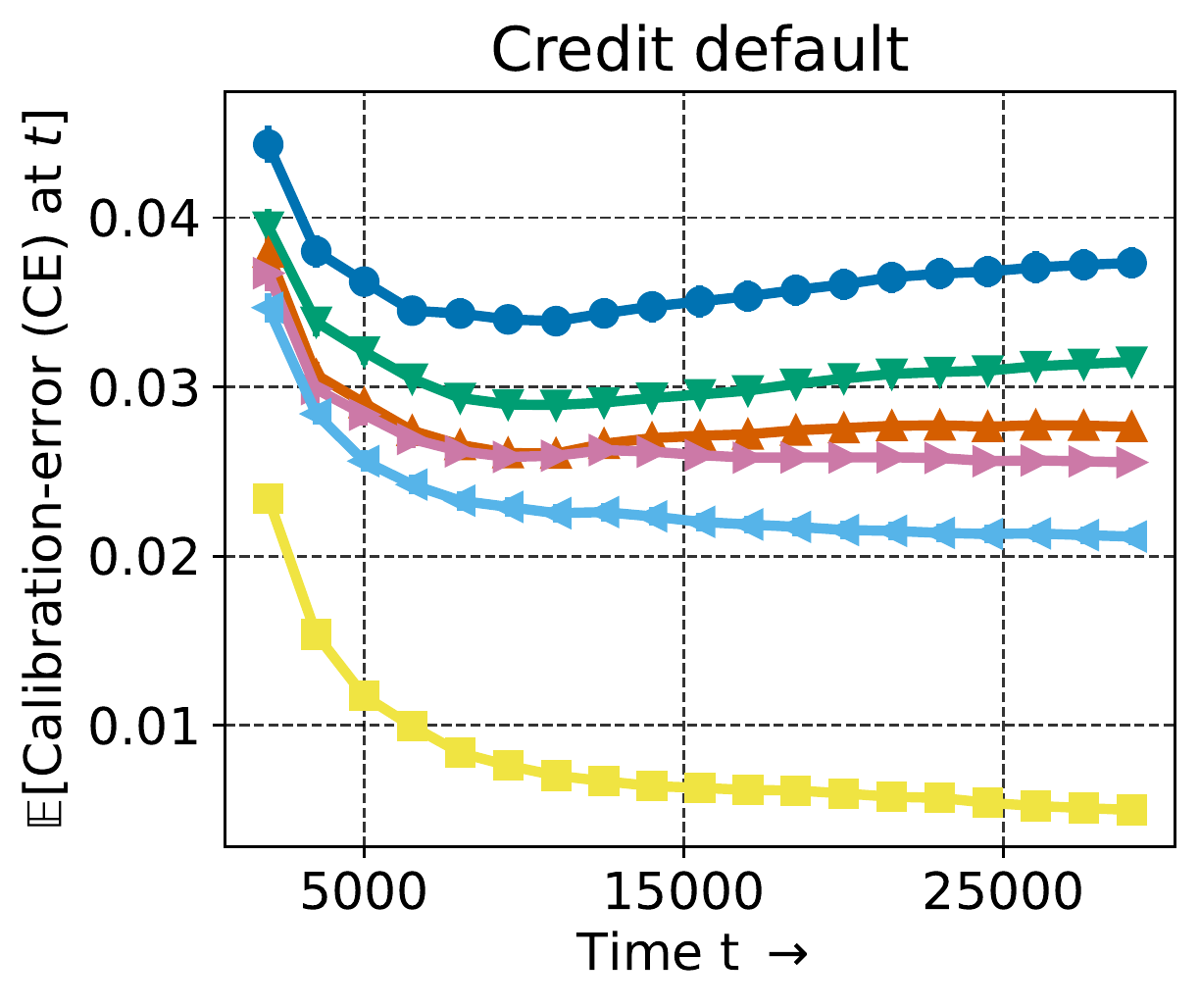}
        \includegraphics[trim=0 0 0 0, clip, width=0.22\linewidth]{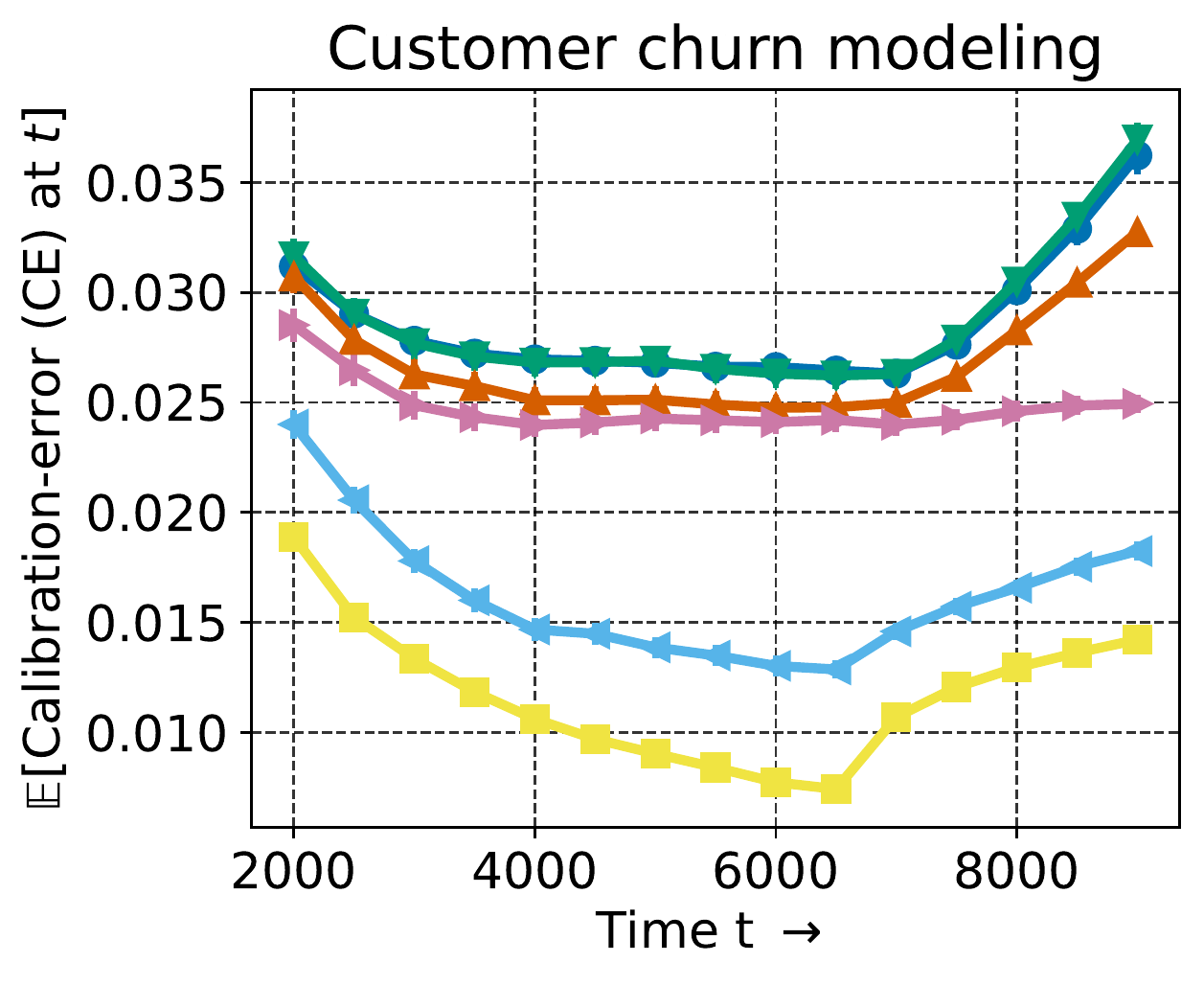}
        \includegraphics[trim=0 0 0 0, clip, width=0.22\linewidth]{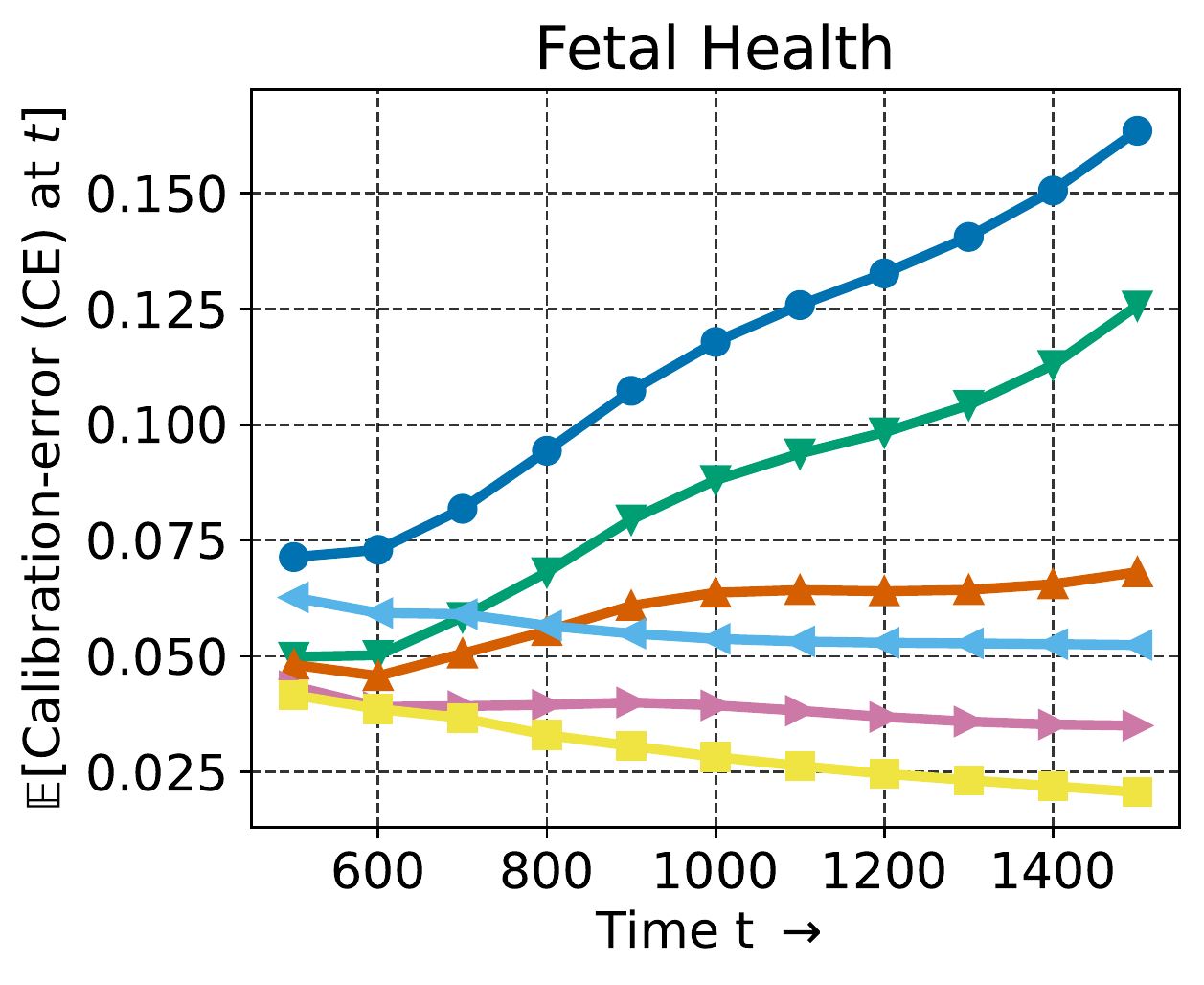}
        \vspace{-0.1cm}
        \vspace{-0.1cm}
        \caption{\textbf{Drifting data.} %
        \ce~(calibration error) values over time of considered models on four datasets with synthetically induced drifts. The plots have invisible error bars since variation across the 100 runs was small. OPS consistently performs better than BM, FPS, and WPS, while TOPS is the best-performing among all methods across datasets and time. All methods had roughly the same \shp~values at a given time-step, so the \shp~plots are delayed to Appendix~\ref{appsec:all-additional-collection} (Figure~\ref{fig:real-data-shp}). 
    \vspace{-0.3cm}}
    \label{fig:real-data-ce}
\end{figure*}

\textbf{Metrics. }We measured the \shp~and \ce~metrics defined in \eqref{eq:sharpnes-definition} and \eqref{eq:ce-definition} respectively. Although estimating population versions of \shp~and \ce~in statistical (i.i.d.) settings is fraught with several issues (\citet{kumar2019calibration, roelofs2022mitigating} and several other works), our definitions target actual observed quantities which are directly interpretable without reference to population quantities. %

\textbf{Reading the plots.} The plots we report show \ce~values at certain time-stamps starting from $T_\text{cal} + 2W$ and ending at $T$ %
(see third line of Collection 1). $T_\text{cal}$ and $W$ are fixed separately for each dataset (Table~\ref{tab:metadata} in Appendix). We also generated \shp~plots, but these are not reported since the drop in \shp~was always very small. 

\vspace{-0.1cm}
\subsection{Experiments on real datasets}
\label{sec:experiments-real}
We worked with four public datasets in two settings. Links to the datasets are in Appendix~\ref{appsec:additional-exp}.%

\textbf{Distribution drift.} We introduced synthetic drifts in the data based on covariate values, so this is an instance of covariate drift. For example, in the bank marketing dataset (leftmost plot in Figure~\ref{fig:real-data-ce}), the problem is to predict which clients are likely to subscribe to a term deposit if they are targeted for marketing, using covariates like \texttt{age}, \texttt{education}, and \texttt{bank-balance}. We ordered the available 12000 rows roughly by \texttt{age} by adding a random number uniformly from $\{-1, 0, 1\}$ to \texttt{age} and sorting all the data. Training is done on the first 1000 points, $T_\text{cal} = 1000$, and $W = 500$. Similar drifts are induced for the other datasets, and $T_\text{cal}, W$ values are set depending on the total number of points; further details %
are in Appendix~\ref{appsec:additional-exp}. 

All simulations were performed 100 times and the average \ce~and \shp~values with $\pm$ std-deviation errorbars were evaluated at certain time-steps. Thus, our lines correspond to estimates of the expected values of \ce~and \shp, as indicated by the Y-axis labels. We find that across datasets, OPS has the least \ce~among non-calibeating methods, and both forms of calibeating typically improve OPS further (Figure~\ref{fig:real-data-ce}). Specifically, TOPS performs the best by a margin compared to other methods. We also computed \shp~values, which are reported in Appendix \ref{appsec:all-additional-collection} (Figure~\ref{fig:real-data-shp}). The drop in \shp~is insignificant in each case (around $0.005$).

\begin{figure*}[htp]
    \centering
    \begin{subfigure}{\linewidth}
    \centering
    \includegraphics[trim=0 10cm 0 0, clip, width=0.8\linewidth]{figs_comparisons_legend.pdf}
    \end{subfigure}
        \centering
        \includegraphics[trim=0 0 0 0, clip, width=0.22\linewidth]{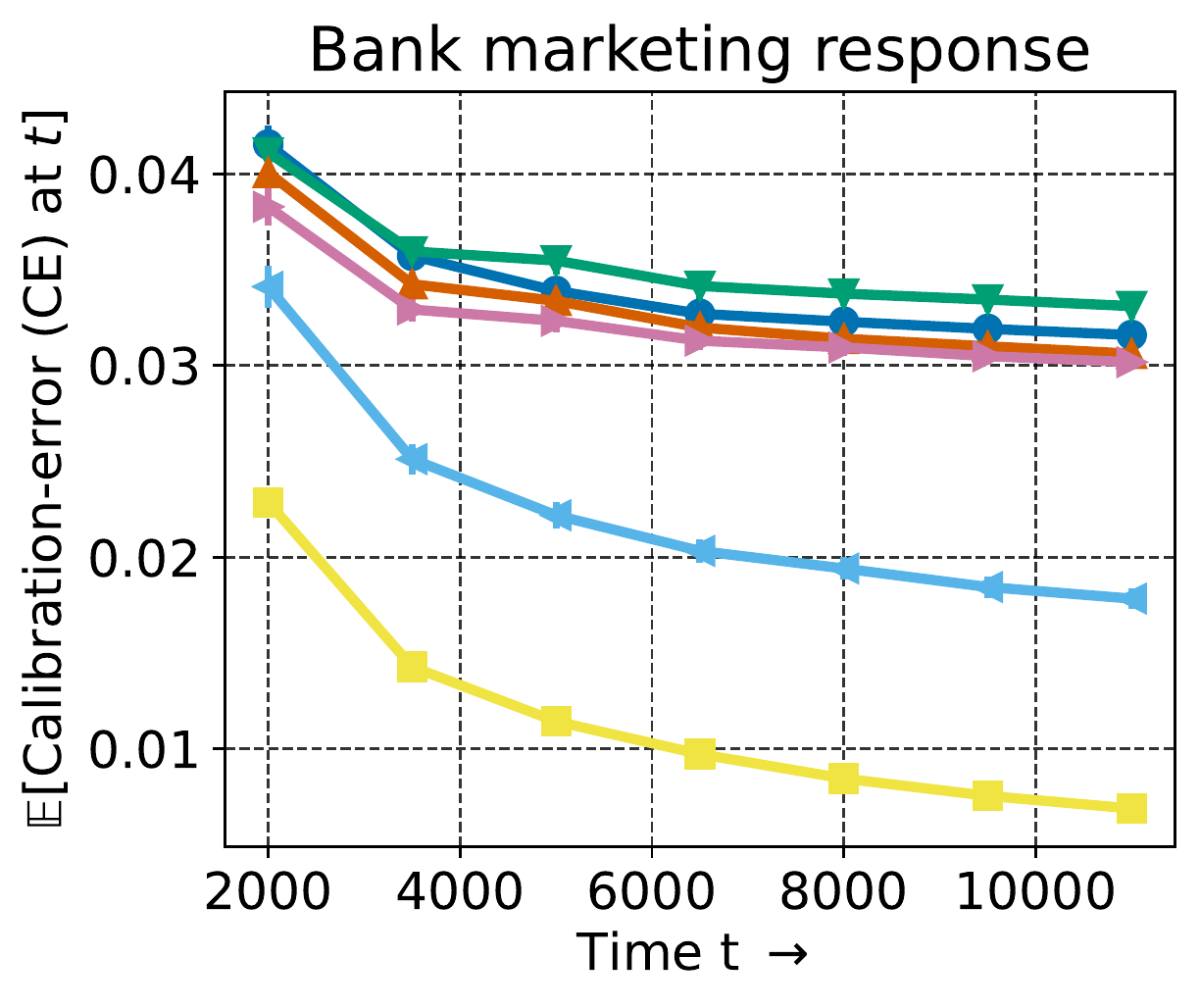}
        \includegraphics[trim=0 0 0 0, clip, width=0.22\linewidth]{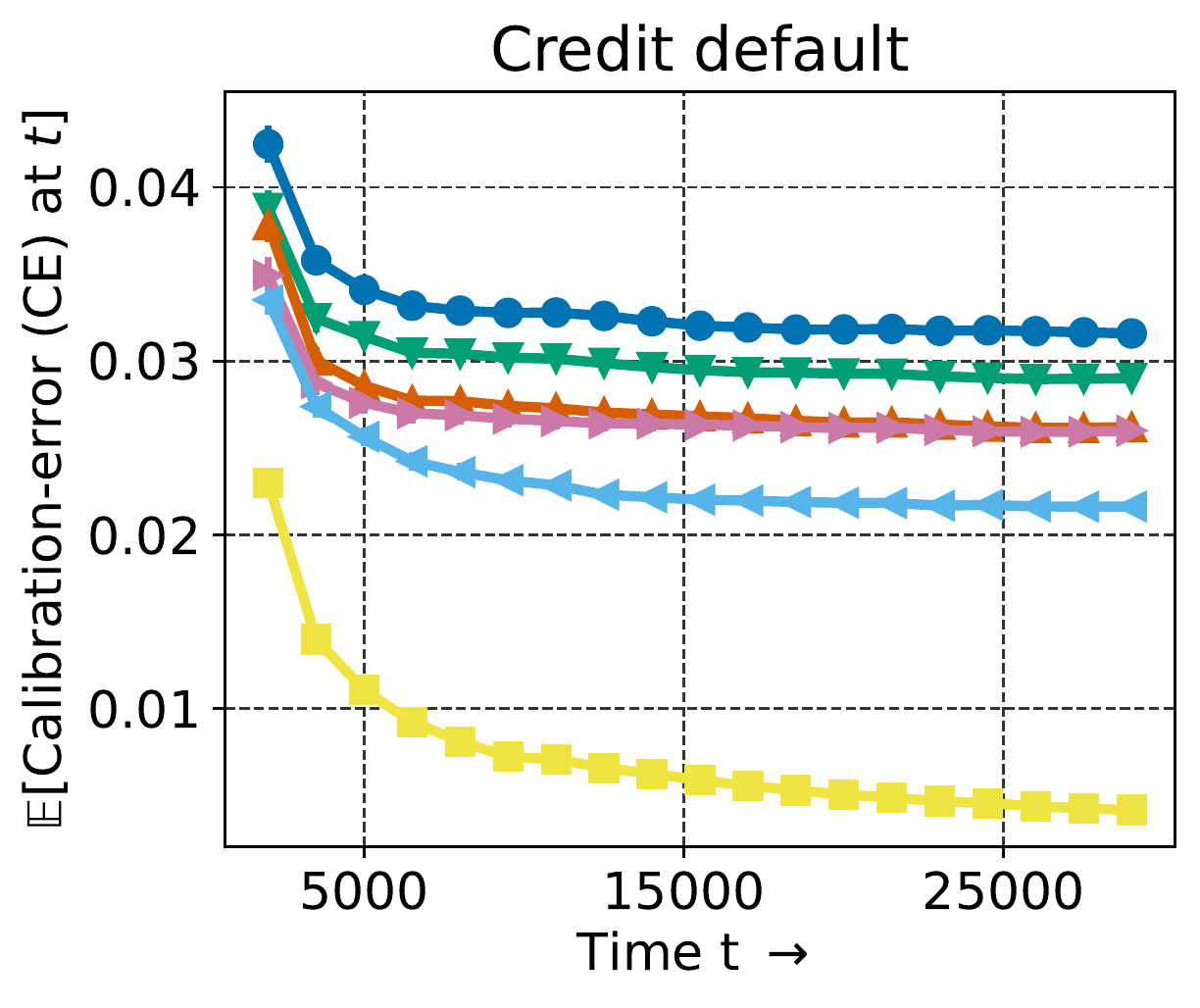}
        \includegraphics[trim=0 0 0 0, clip, width=0.22\linewidth]{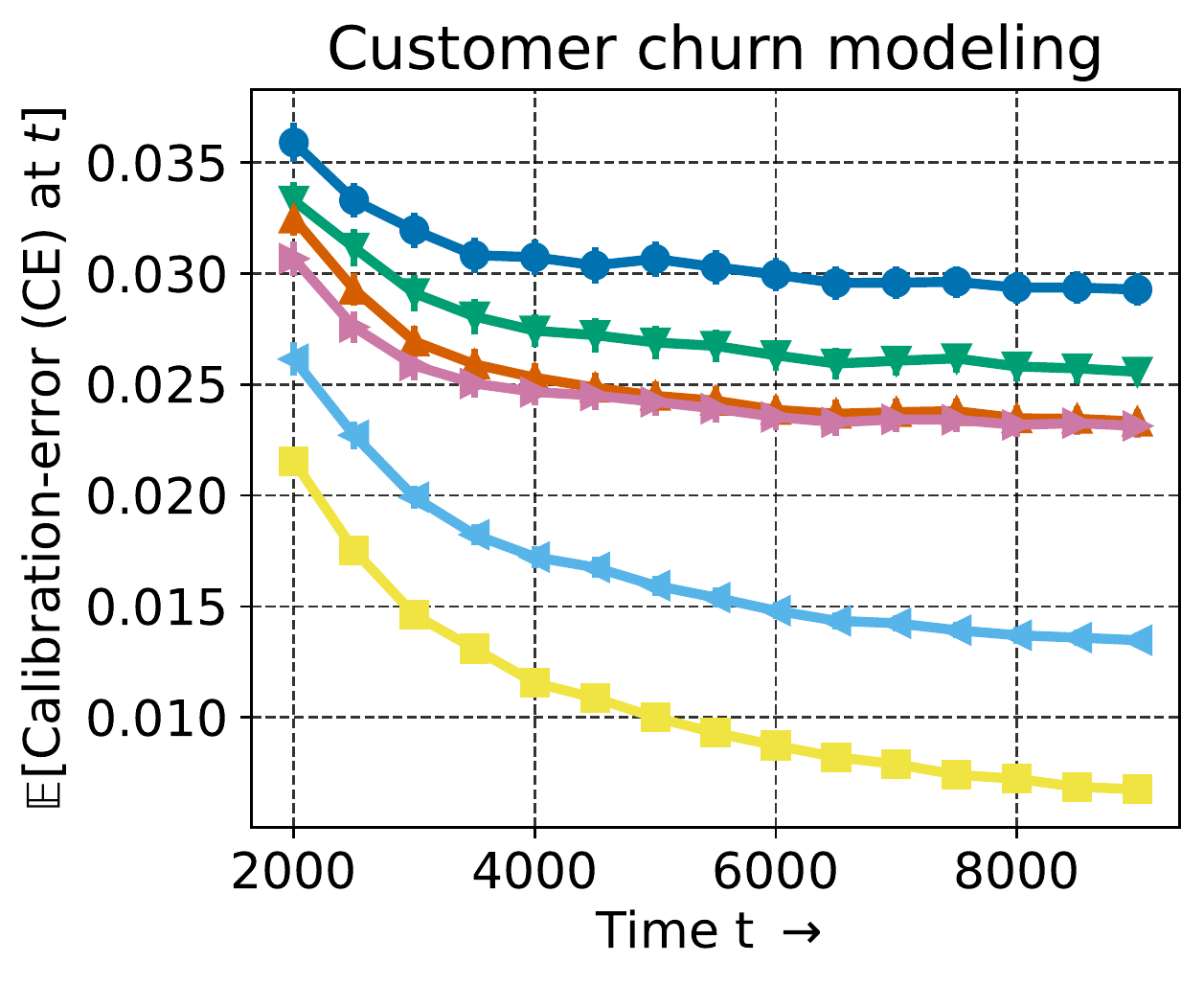}
        \includegraphics[trim=0 0 0 0, clip, width=0.22\linewidth]{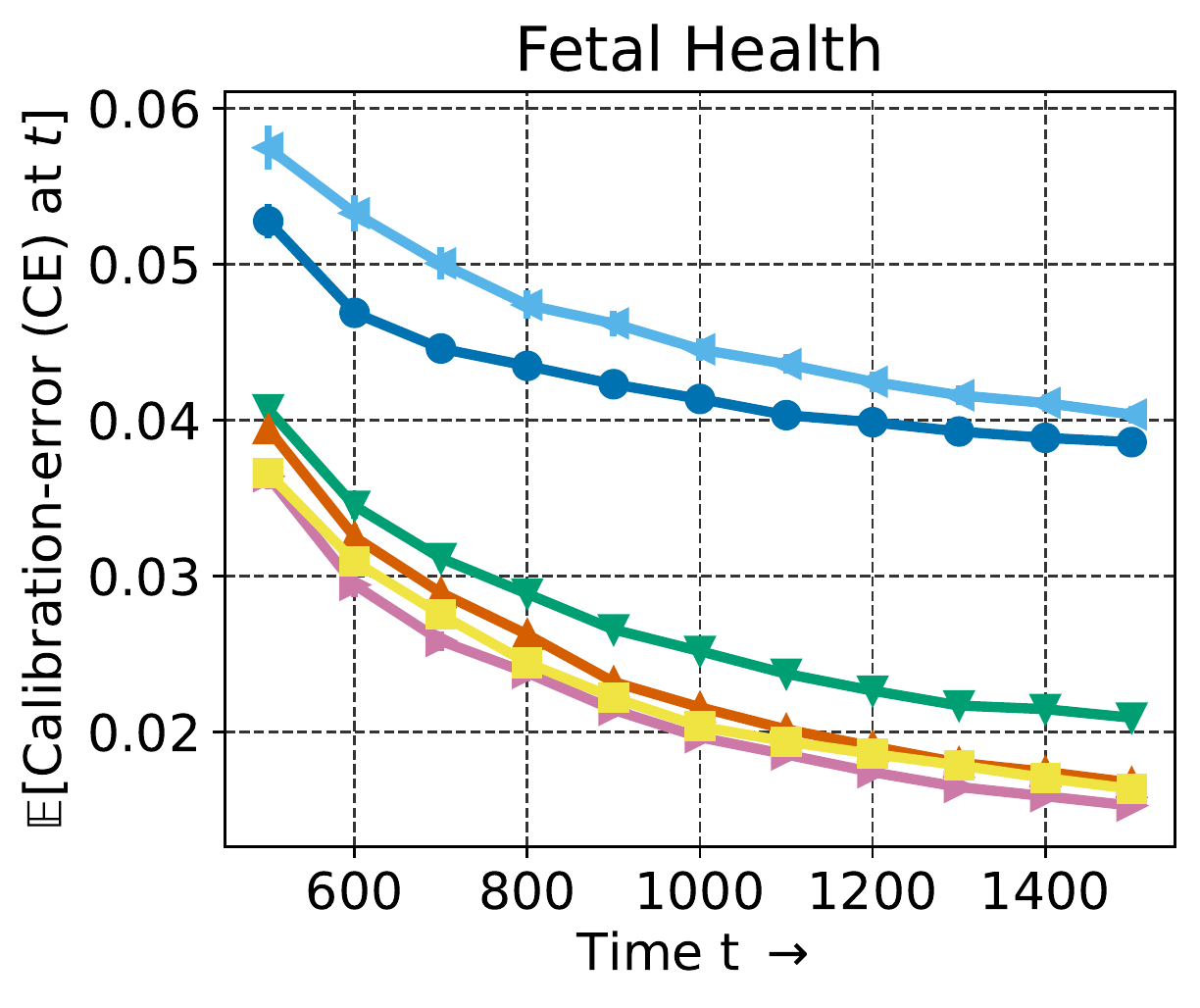}
        \vspace{-0.2cm}\caption{\textbf{IID data.} %
        CE values over time of considered models with four randomly shuffled (ie, nearly i.i.d.) datasets. The plots have invisible error bars since variation across runs was small. TOPS achieves the smallest values of CE throughout.}%
    \label{fig:real-data-iid-ce}\vspace{-0.3cm}
\end{figure*}

\textbf{IID data. }This is the usual batch setting formed by shuffling all available data. Part of the data is used for training and the rest forms the test-stream. We used the same values of $T_\text{cal}$ and $W$ as those used in the data drift experiments (see Appendix~\ref{appsec:additional-exp}). In our experiments, we find that the gap in \ce~between BM, FPS, OPS, and WPS is smaller (Figure~\ref{fig:real-data-iid-ce}). However, TOPS   performs the best in all scenarios, typically by a margin. Here too, the change in \shp~was small, so those plots were delayed to Appendix \ref{appsec:all-additional-collection} (Figure~\ref{fig:real-data-iid-shp}).

\vspace{-0.1cm}
\subsection{Synthetic experiments}
In all experiments with real data, WPS performs almost as good as OPS. In this subsection, we consider some synthetic data drift experiments where OPS and TOPS continue performing well, but WPS performs much worse. 

\vspace{-0.1cm}
\textbf{Covariate drift. }%
Once for the entire process, we draw random orthonormal vectors $\vvec_1, \vvec_2 \in \Real^{10}$ ($\enorm{\vvec_1} = \enorm{\vvec_2} = 1$, $\vvec_1^\intercal \vvec_2 = 0$), a random weight vector $\w \in \{-1, 1\}^{10 + \binom{10}{2}}$ with each component set to $1$ or $-1$ independently with probability $0.5$, and set a drift parameter $\delta \geq 0$. The data is generated as follows:\vspace{-0.1cm}
\begin{align*}
    \uvec_t &= \vvec_1\cos(\delta t) + \vvec_2 \sin(\delta t),X_t \sim \Ncal(\mathbf{0}, \mathcal{I}_{10} + 10\uvec_t\uvec_t^\intercal), \\
    Y_t&|X_t \sim \text{Bernoulli}(\sigmoid(\w^\intercal\widetilde{X}_t)), \text{ where }\\ \widetilde{X}_t &= [\x_1,\ldots,\ \x_{10},\ \x_1\x_2,\ \x_1 \x_3,\ldots,\ \x_9\x_{10}] \in \Real^{10 + \binom{10}{2}}.
\end{align*}
Thus the distribution of $Y_t$ given $X_t$ is fixed as a logistic model over the expanded representation $\widetilde{X}_t$ that includes all cross-terms (this is unknown to the forecaster who only sees $X_t$). The features $X_t$ themselves are normally distributed with mean $\mathbf{0}$ and a time-varying covariance matrix. The principal component (PC) of the covariance matrix is a vector $\uvec_t$ that is rotating on the two-dimensional plane containing the orthonormal vectors $\vvec_1$ and $\vvec_2$. The first 1000 points are used as training data, the remaining $T=5000$ form a test-stream, and $W = 500$. We report results in two settings: one is i.i.d., that is $\delta = 0$, and the other is where the $\uvec$ for the first and last point are at a $180^{\circ}$ angle (Figure~\ref{fig:cs-d}).

\textbf{Label drift. }Given some $\delta > 0$, $(X_t, Y_t)$ is generated as:\vspace{-0.1cm}
\begin{align*}
    Y_t &\sim \text{Bernoulli}(0.5 + \delta t),\\
    X_t|Y_t &\sim \indicator{Y_t = 0}\Ncal(\mathbf{0}, \Real^{10}) + \indicator{Y_t = 1}\Ncal(\mathbf{e_1}, \Real^{10}).
\end{align*}
Thus $P(Y_1 = 1) = 0.5 + \delta$ and for the last test point, $P(Y_{6000} = 1) = 0.5 + 6000\delta$. This final value can be set to control the extent of label drift; we show results with no drift (i.e., $\delta=0$, Figure~\ref{fig:ls-d} left) and $\delta$ set so that final bias $0.5 + 6000\delta= 0.9$ (Figure~\ref{fig:ls-d} right). The number of training points is 1000, $T=5000$, and $W=500$.

\begin{figure}[t]
    \centering%
    \begin{subfigure}{0.85\linewidth}
    \includegraphics[width=0.49\textwidth]{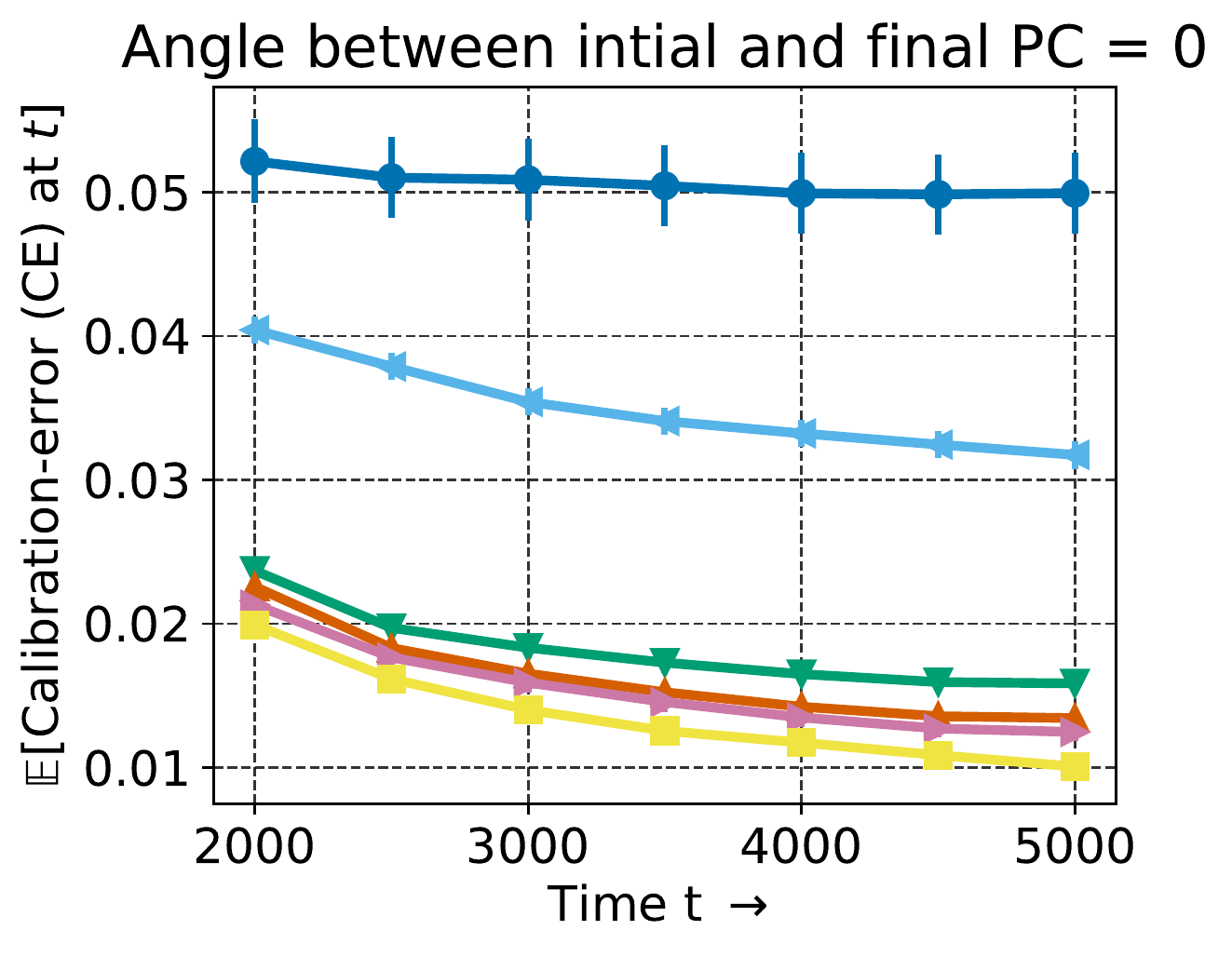}
    \includegraphics[width=0.49\textwidth]{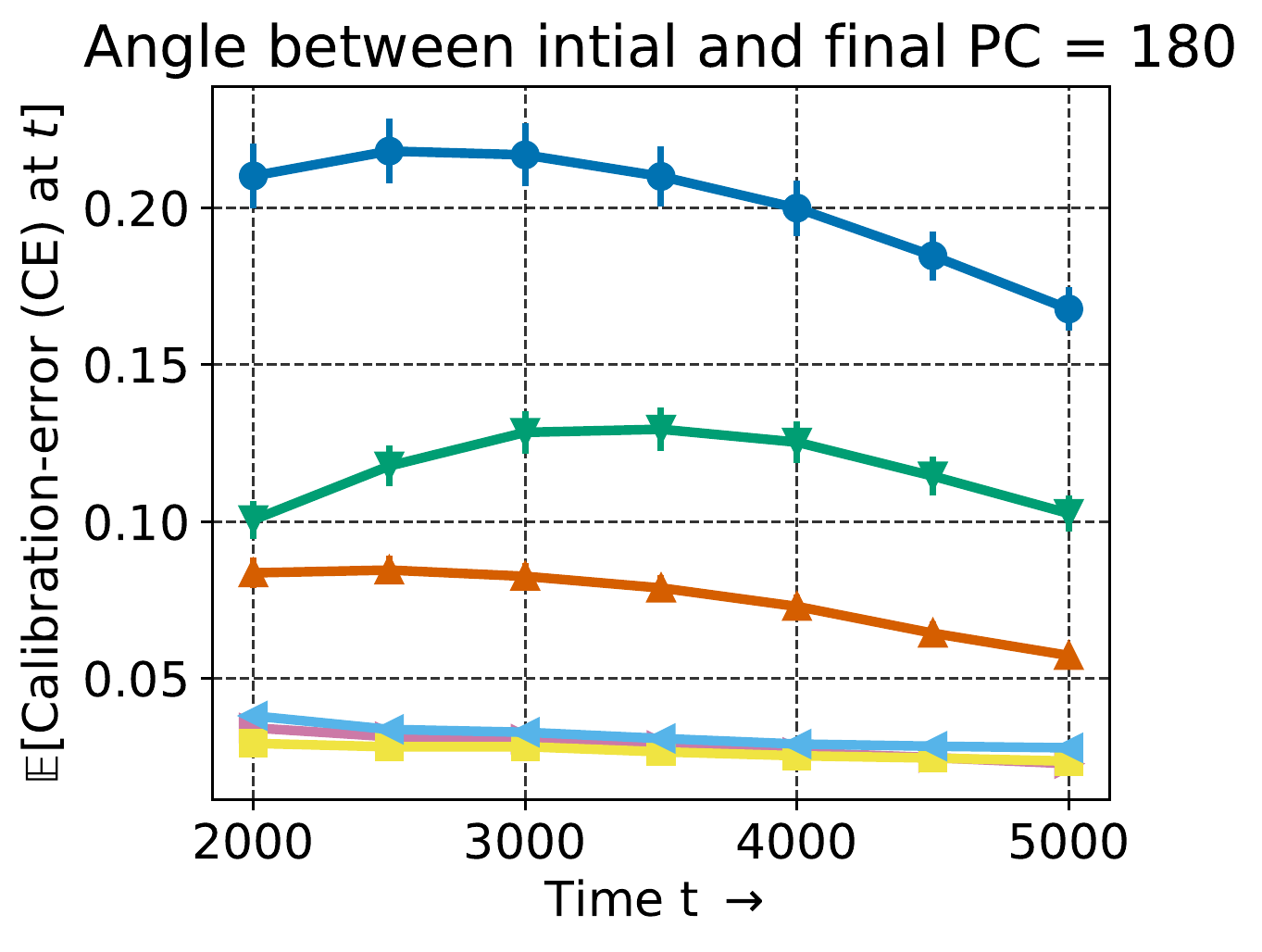}
    \vspace{-0.2cm}
    \caption{Left plot: i.i.d.\ data, right plot: covariate drift.}
    \label{fig:cs-d}
    \end{subfigure}
    \begin{subfigure}{0.85\linewidth}
    \vspace{0.2cm}
    \includegraphics[width=0.49\textwidth]{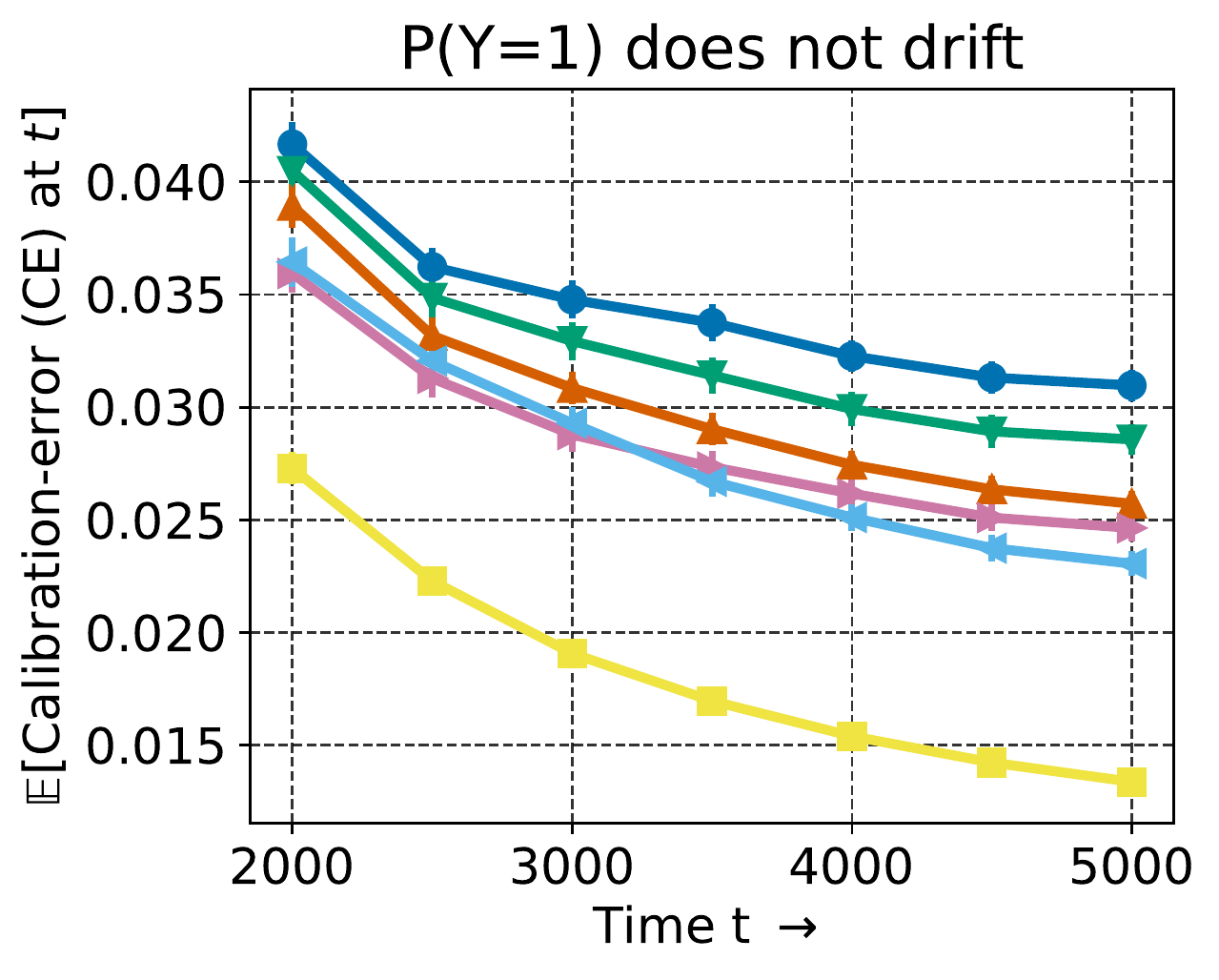}
    \includegraphics[width=0.49\textwidth]{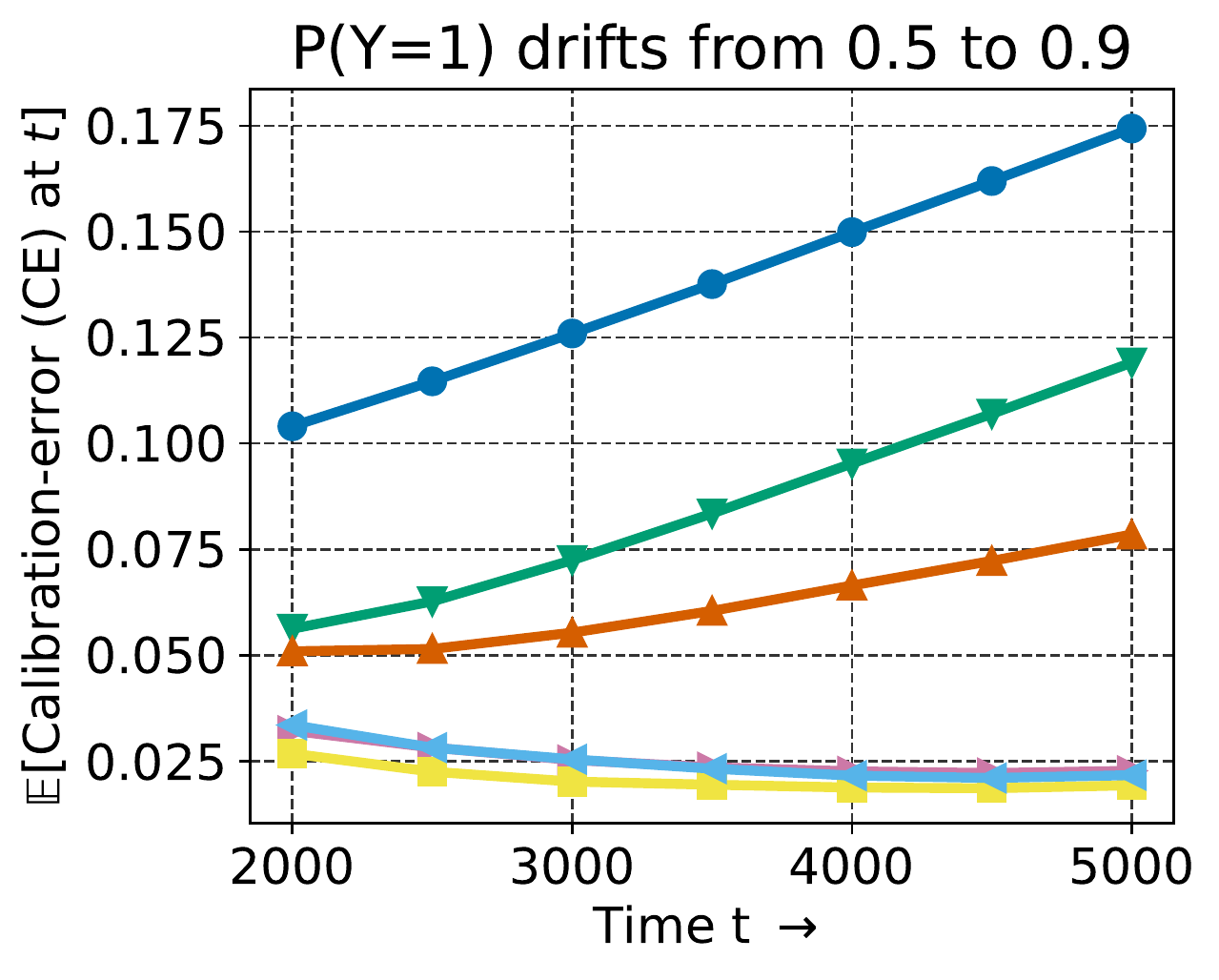}
    \vspace{-0.2cm}
    \caption{Left plot: i.i.d.\ data, right plot: label drift.}
    \label{fig:ls-d}
    \end{subfigure}
    \vspace{-0.1cm}
    \caption{Experiments with synthetic data. In all cases, TOPS has the lowest \ce~across time.}
    \vspace{-0.3cm}
    \label{fig:synthetic-exps}
\end{figure}
\vspace{-0.2cm}

\subsection{Changing $\epsilon$, histogram binning, beta scaling}
\revision{In Appendix~\ref{appsec:additional-exps}, we report versions of Figures~\ref{fig:real-data-ce}, \ref{fig:real-data-iid-ce} with $\epsilon = 0.05, 0.2$ (instead of $\epsilon = 0.1$) with similar conclusions (Figures~\ref{fig:real-data-eps-0.05}, \ref{fig:real-data-eps-0.2}). We also perform comparisons with a windowed version of the popular histogram binning method \citep{zadrozny2001obtaining} and online versions of the beta scaling method, as discussed in the forthcoming Section~\ref{sec:beta-scaling}.  %
}

\section{Online beta scaling with calibeating}
\label{sec:beta-scaling}
\revision{A recalibration method closely related to Platt scaling is beta scaling \citep{kull2017beyond}. The beta scaling mapping $m$ has three parameters $(a, b, c) \in \Real^3$,
\begin{gather*}
{\scalebox{0.9}{
    $m^{a, b, c}_{\text{beta}}(f(\x)) := \sigmoid(a\cdot\log f(\x) + b\cdot\log(1-f(\x)) + c).$}%
}
\end{gather*}
Observe that enforcing $b = -a$ recovers Platt scaling since $\logit(z) = \log(z) - \log(1-z)$. The beta scaling parameters can be learnt following identical protocols as Platt scaling: (i) the traditional method of fixed batch post-hoc calibration akin to FPS, (ii) a natural benchmark of windowed updates akin to WPS, and (iii) regret minimization based method akin to OPS. This leads to the methods FBS, WBS, and OBS, replacing the ``P" of Platt with the ``B" of beta. Tracking + OBS (TOBS) and Hedging + OBS (HOBS) can be similarly derived. Further details on all beta scaling methods are in Appendix~\ref{appsec:beta-scaling}, where we also report plots similar to Figures~\ref{fig:real-data-ce}, \ref{fig:real-data-iid-ce} for beta scaling (Figure~\ref{fig:real-data-tops}). In a comparison between histogram binning, beta scaling, Platt scaling, and their tracking versions, TOPS and TOBS are the best-performing methods across experiments (Figure~\ref{fig:real-data-bs}).} 
\section{Summary}
We provided a way to bridge the gap between the online (typically covariate-agnostic) calibration literature, where data is assumed to be adversarial, and the (typically i.i.d.) post-hoc calibration literature, where the joint covariate-outcome distribution takes centerstage. First, we adapted the post-hoc method of Platt scaling to the online setting, based on a reduction to logistic regression, to give our OPS algorithm. Second, we showed how calibeating can be applied on top of OPS to give further improvements. 

The TOPS method we proposed has the lowest calibration error in all experimental scenarios we considered. On the other hand, the HOPS method which is based on online adversarial calibration provably controls miscalibration at any pre-defined level and could be a desirable choice in sensitive applications. The good performance of OPS+calibeating lends further empirical backing to the thesis that scaling+binning methods perform well in practice, as has also been noted in prior works \citep{zhang2020mix, kumar2019calibration}. Our theoretical results formalize this empirical observation. 

We note a few directions for future work.
First, online algorithms that control regret on the most recent data have been proposed \citep{hazan2009efficient, zhang2018dynamic}. These approaches could give further improvements on ONS, particularly for drifting data. 
Second, while this paper entirely discusses calibration for binary classification, all binary routines can be lifted to achieve multiclass notions such as top-label or class-wise calibration \citep{gupta2022top}. Alternatively, multiclass versions of Platt scaling \citep{guo2017calibration} such as temperature and vector scaling can also be targeted directly using online multiclass logistic regression \citep{jezequel2021mixability}.

\section*{Acknowledgements}
We thank Youngseog Chung and Dhruv Malik for fruitful discussions, and the ICML reviewers for valuable feedback. We thank 
Volodymyr Kuleshov for reaching out and discussing the relationship to their work. CG was supported by the Bloomberg Data Science Ph.D. Fellowship. For computation, we used allocation CIS220171 from the Advanced Cyberinfrastructure Coordination Ecosystem: Services \& Support (ACCESS) program, supported by NSF grants 2138259, 2138286, 2138307, 2137603, and 2138296. Specifically, we used the Bridges-2 system \citep{towns2014xsede}, supported by NSF grant 1928147, at the Pittsburgh Supercomputing Center (PSC).

\bibliographystyle{plainnat}
\bibliography{references}

\appendix 
\onecolumn
\begin{table}[t]
    \centering
    \begin{tabular}{cccccc}
    \toprule
   Name & $T_\text{train}$ & $T_{\text{cal}}$ & W & Sort-by & Link to dataset \\
    \midrule\addlinespace[0.3cm]
    Bank marketing & 1000 & 1000 & 500 & Age & \parbox{9cm}{\url{https://www.kaggle.com/datasets/kukuroo3/bank-marketing-response-predict}} \\\addlinespace[0.3cm]
   Credit default & 1000 & 1000 & 500 & Sex & \parbox{9cm}{\url{https://www.kaggle.com/datasets/uciml/default-of-credit-card-clients-dataset}} \\\addlinespace[0.3cm]
   Customer churn & 1000 & 1000 & 500 & Location &\parbox{9cm}{\url{https://www.kaggle.com/datasets/shrutimechlearn/churn-modelling}}  \\\addlinespace[0.3cm]
   Fetal health & 626 & 300 & 100 & Acceleration & \parbox{9cm}{\url{https://www.kaggle.com/datasets/andrewmvd/fetal-health-classification}}  \\\addlinespace[0.3cm]
    \bottomrule
  \end{tabular}
    \caption{Metadata for datasets used in Section~\ref{sec:experiments-real}. The sort-by column indicates which covariate was used to order data points. All datasets are under the Creative Commons CC0 license.}
    \label{tab:metadata}
\end{table}

\begin{figure*}[htp]
    \centering
    \begin{subfigure}{\linewidth}
    \centering
    \includegraphics[trim=0 10cm 0 0, clip, width=0.8\linewidth]{figs_comparisons_legend.pdf}
    \end{subfigure}
        \centering
        \includegraphics[trim=0 0 0 0, clip, width=0.24\linewidth]{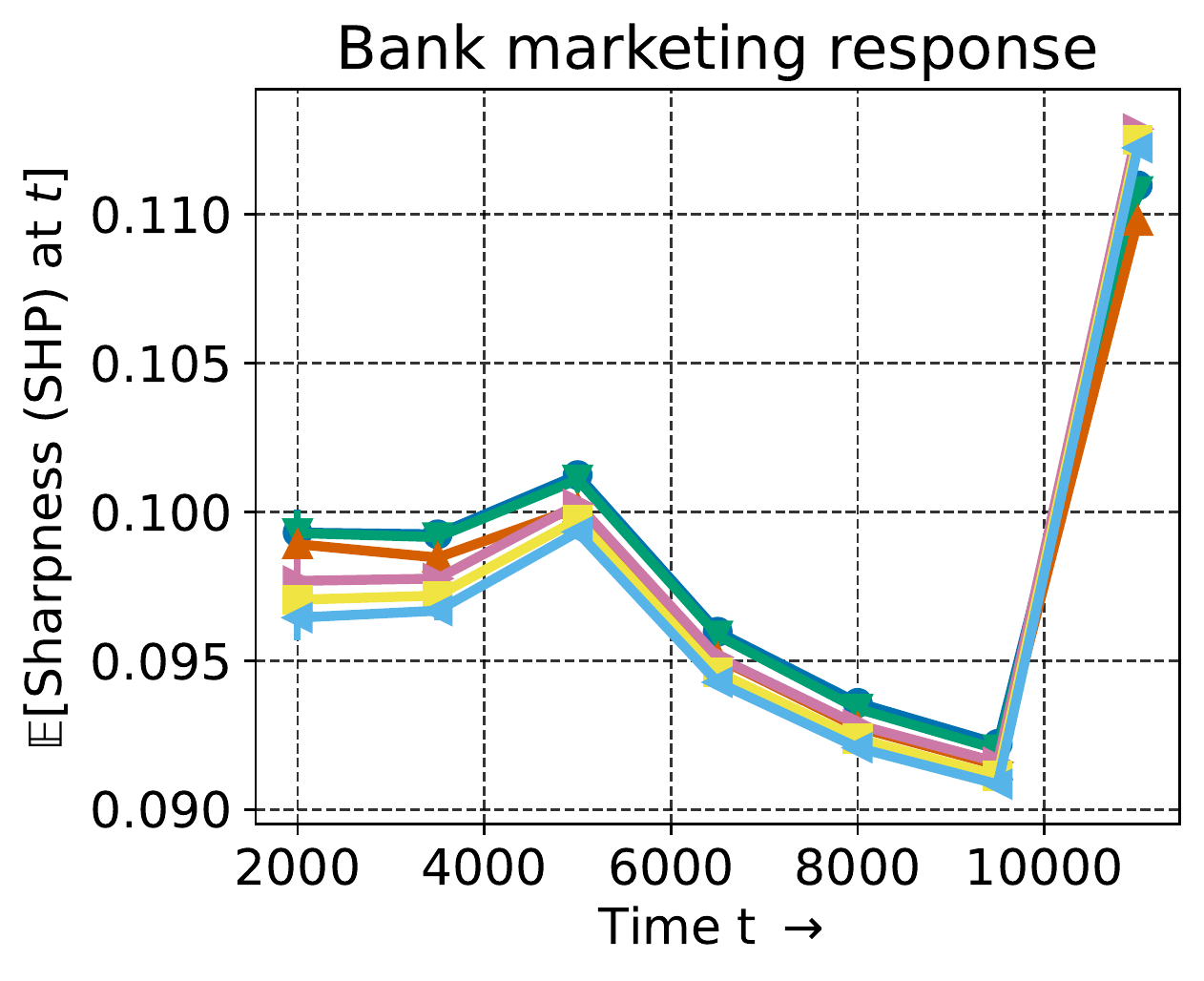}
        \includegraphics[trim=0 0 0 0, clip, width=0.24\linewidth]{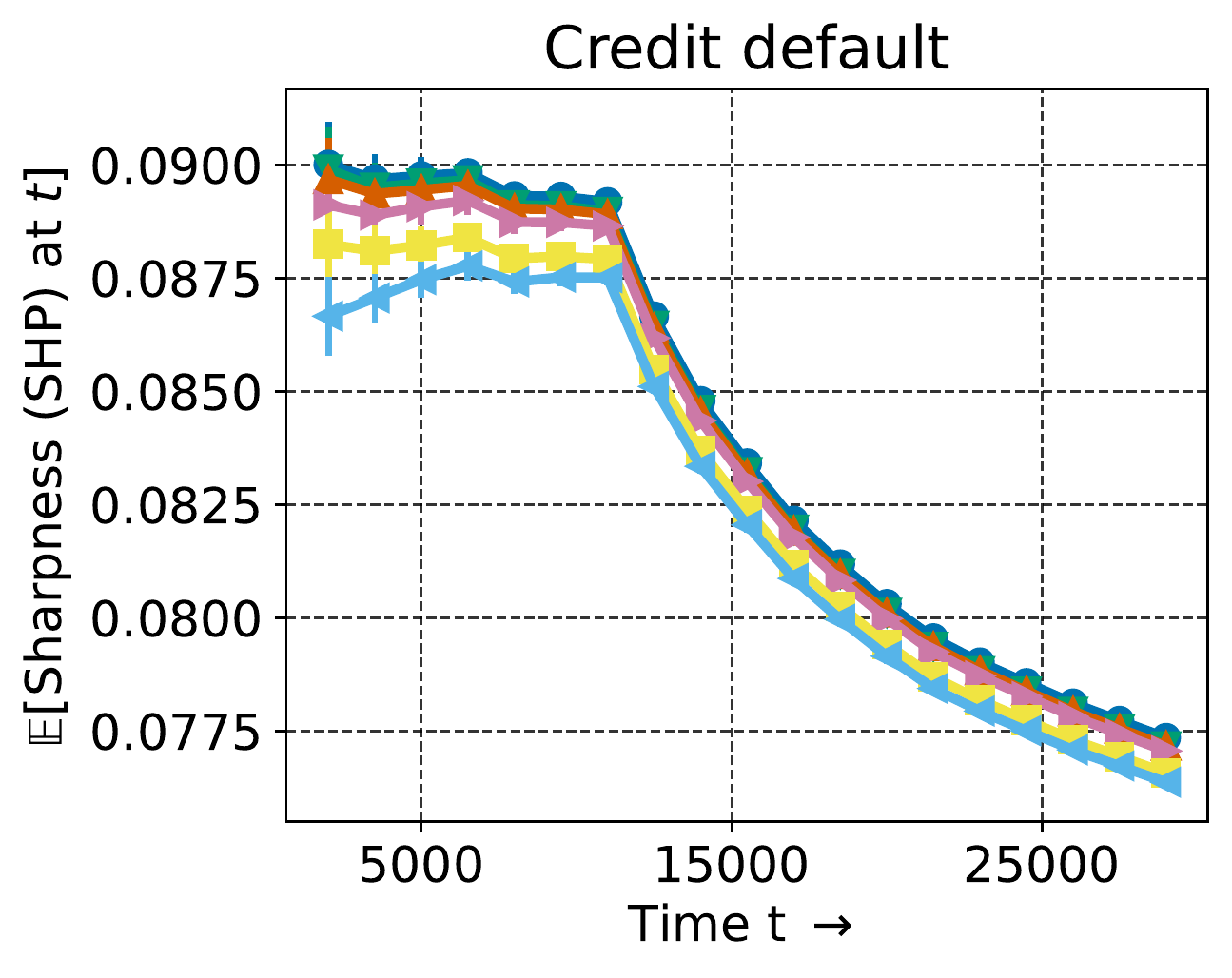}
        \includegraphics[trim=0 0 0 0, clip, width=0.24\linewidth]{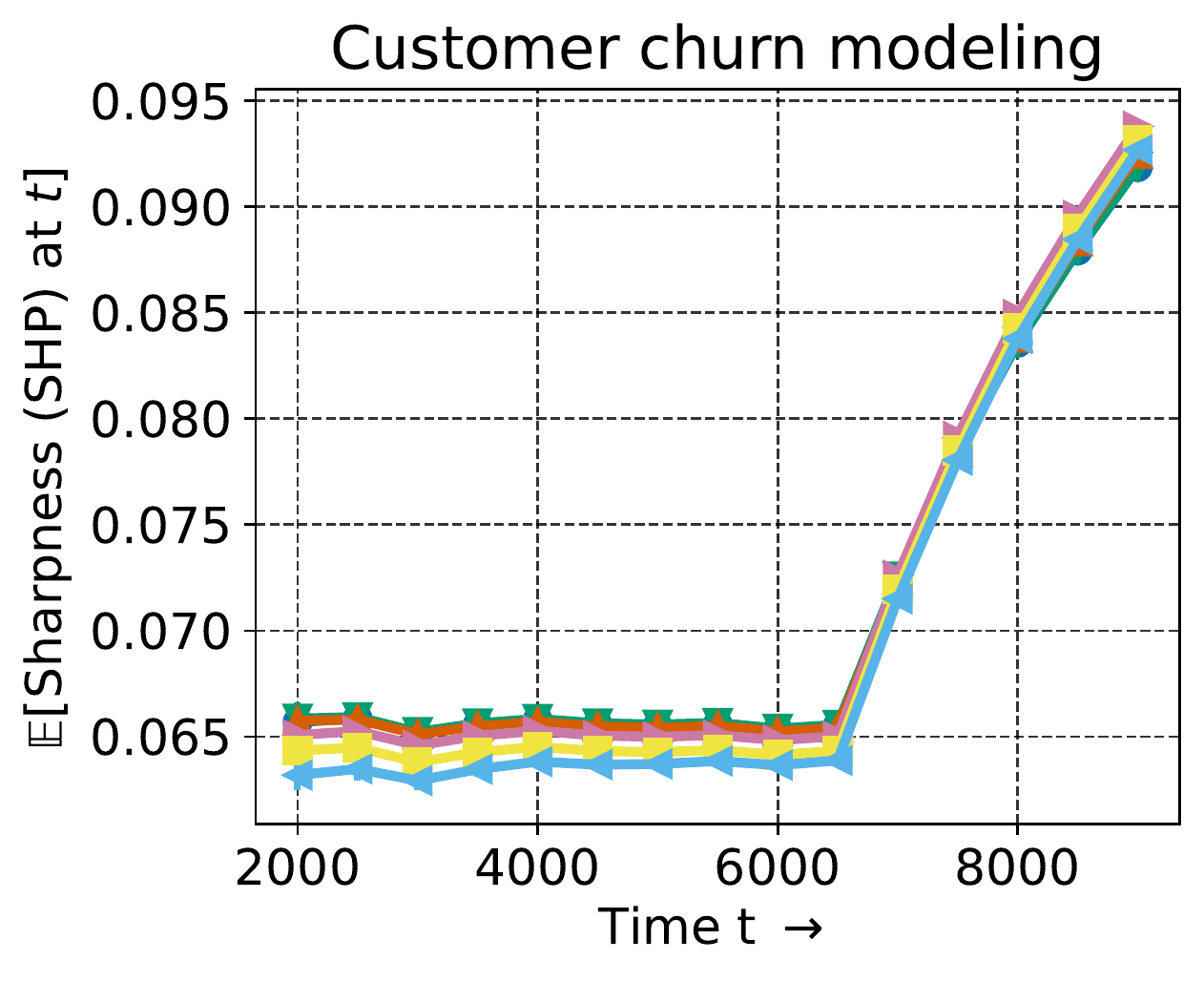}
        \includegraphics[trim=0 0 0 0, clip, width=0.24\linewidth]{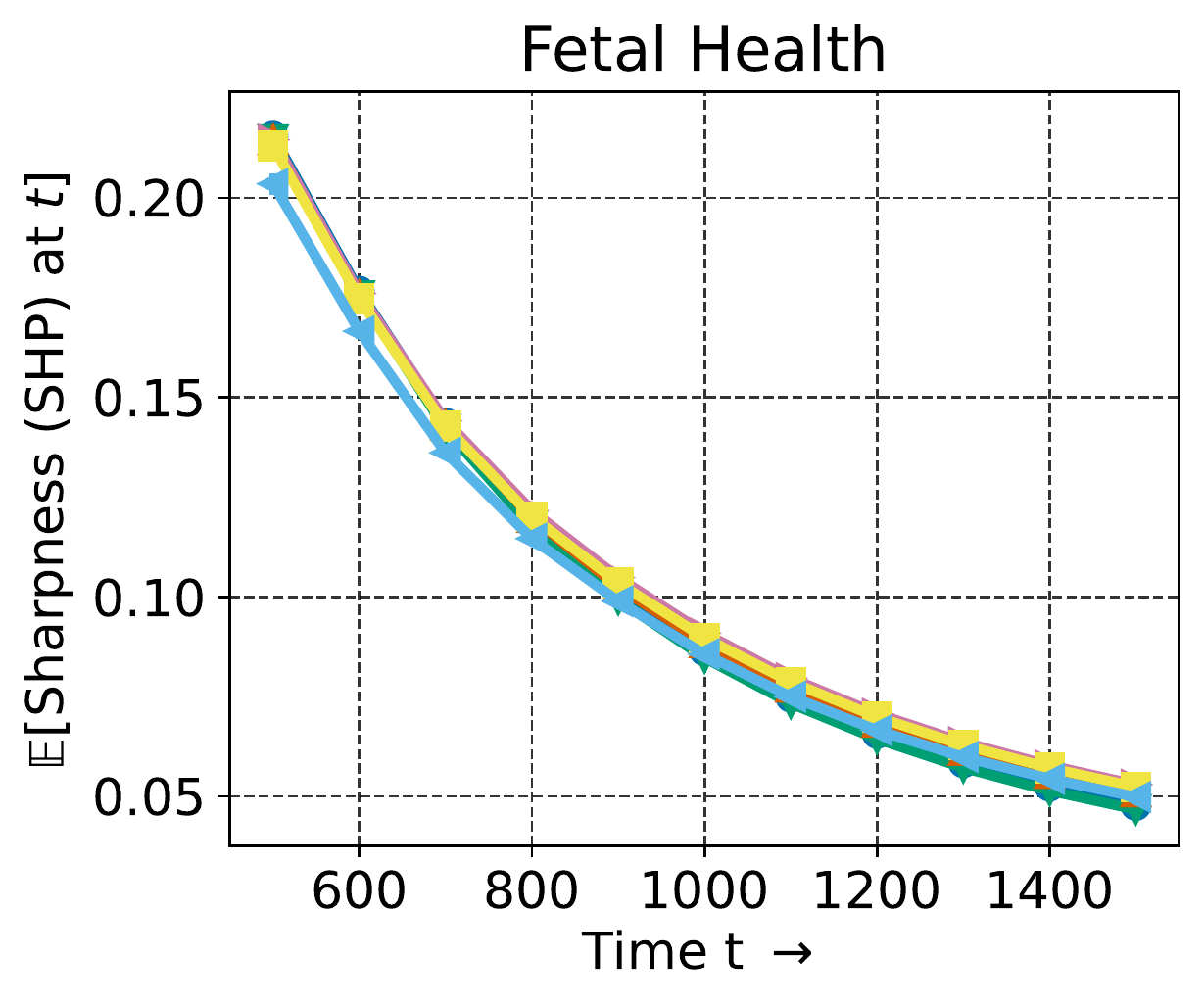}
        \vspace{-0.1cm}
        \caption{\textbf{Sharpness results with drifting data.} %
        \shp~values over time of considered models on four datasets with synthetically induced drifts (Section~\ref{sec:experiments-real}). The plots have invisible error bars since variation across the 100 runs was small. The drop in expected sharpness is below $0.005$ at all times except on the Fetal Health Dataset. 
    \vspace{-0.3cm}}
    \label{fig:real-data-shp}
\end{figure*}

\begin{figure*}[htp]
    \centering
    \begin{subfigure}{\linewidth}
    \centering
    \includegraphics[trim=0 10cm 0 0, clip, width=0.7\linewidth]{figs_comparisons_legend.pdf}
    \end{subfigure}
        \centering
        \includegraphics[trim=0 0 0 0, clip, width=0.24\linewidth]{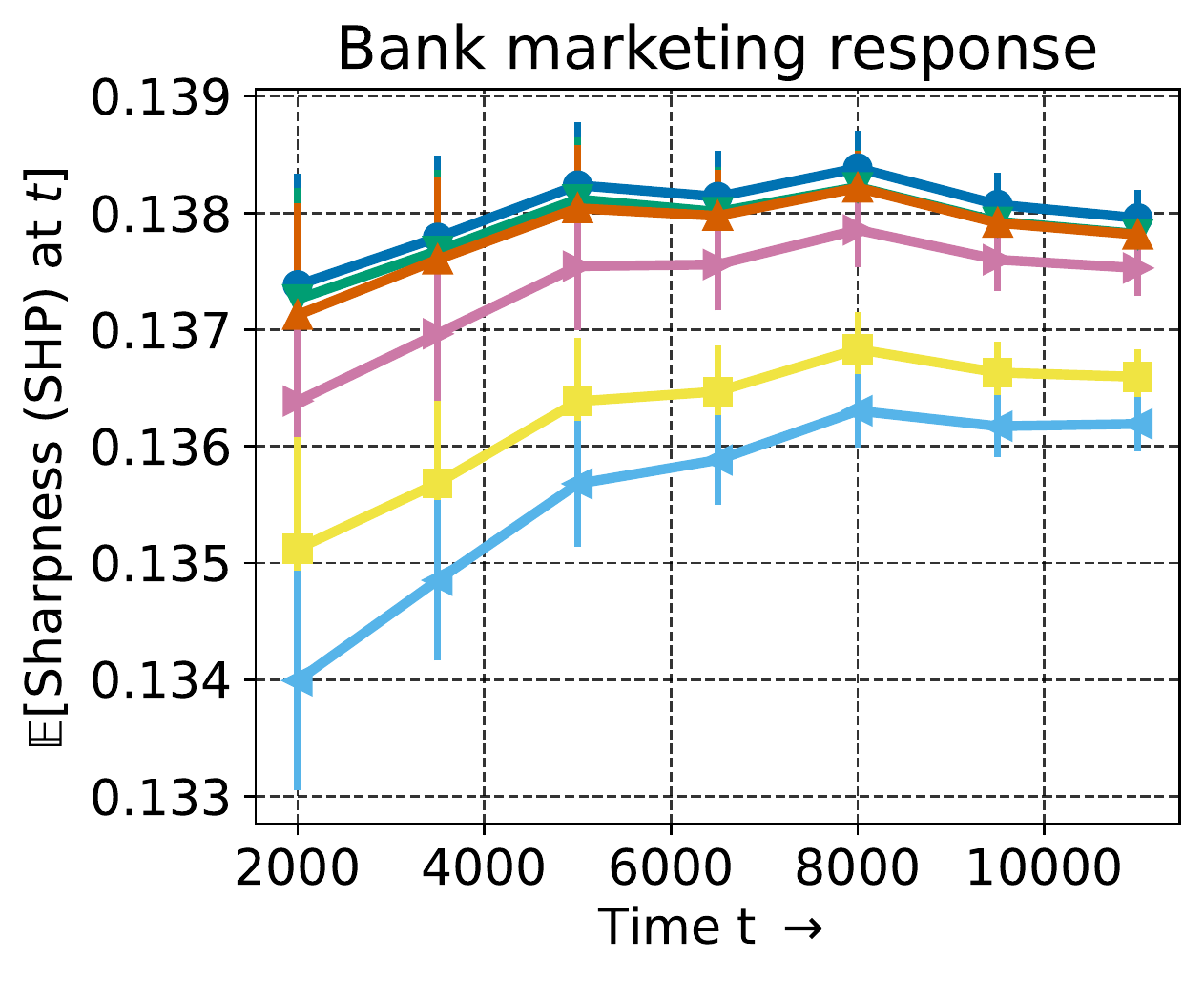}
        \includegraphics[trim=0 0 0 0, clip, width=0.24\linewidth]{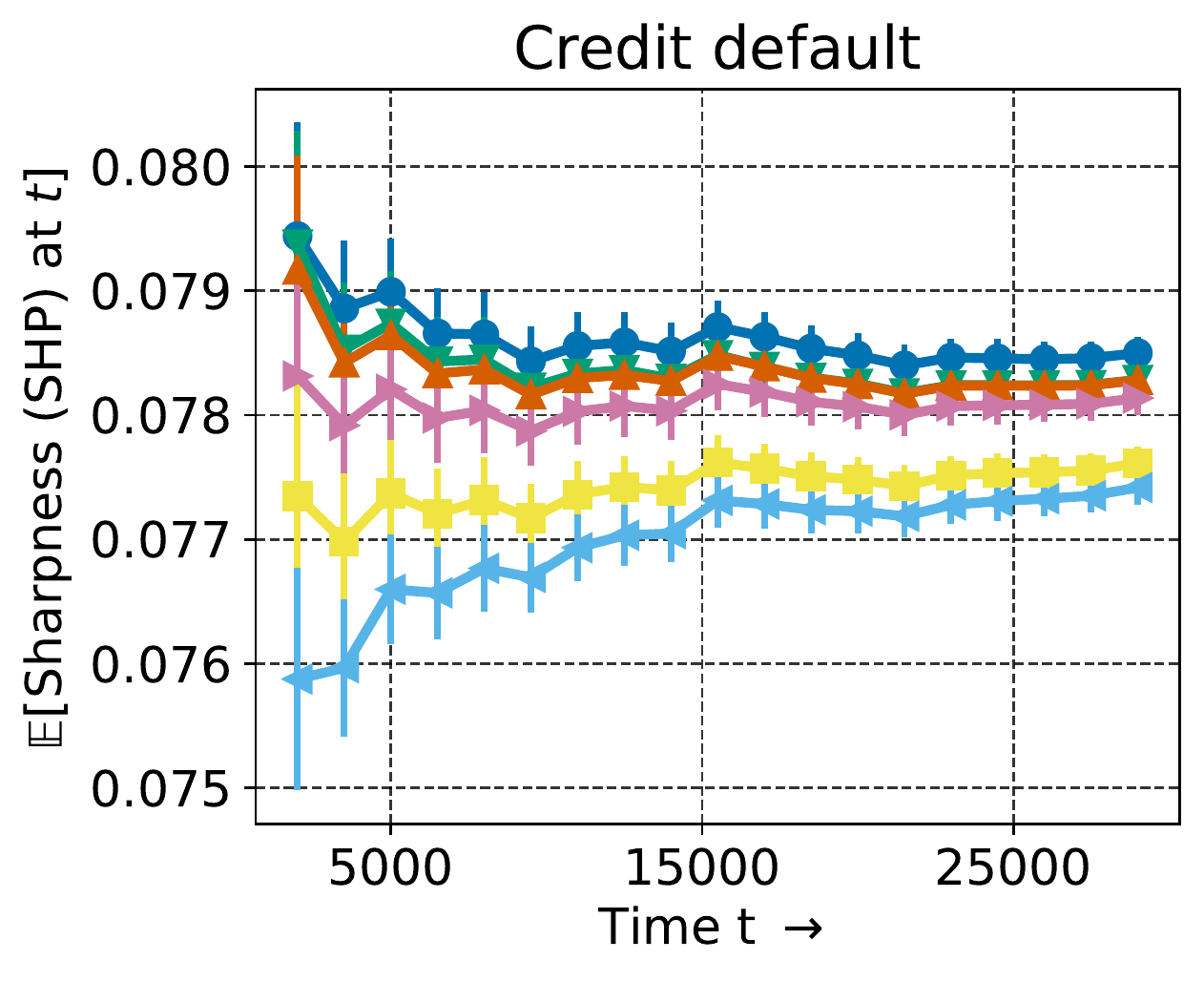}
        \includegraphics[trim=0 0 0 0, clip, width=0.24\linewidth]{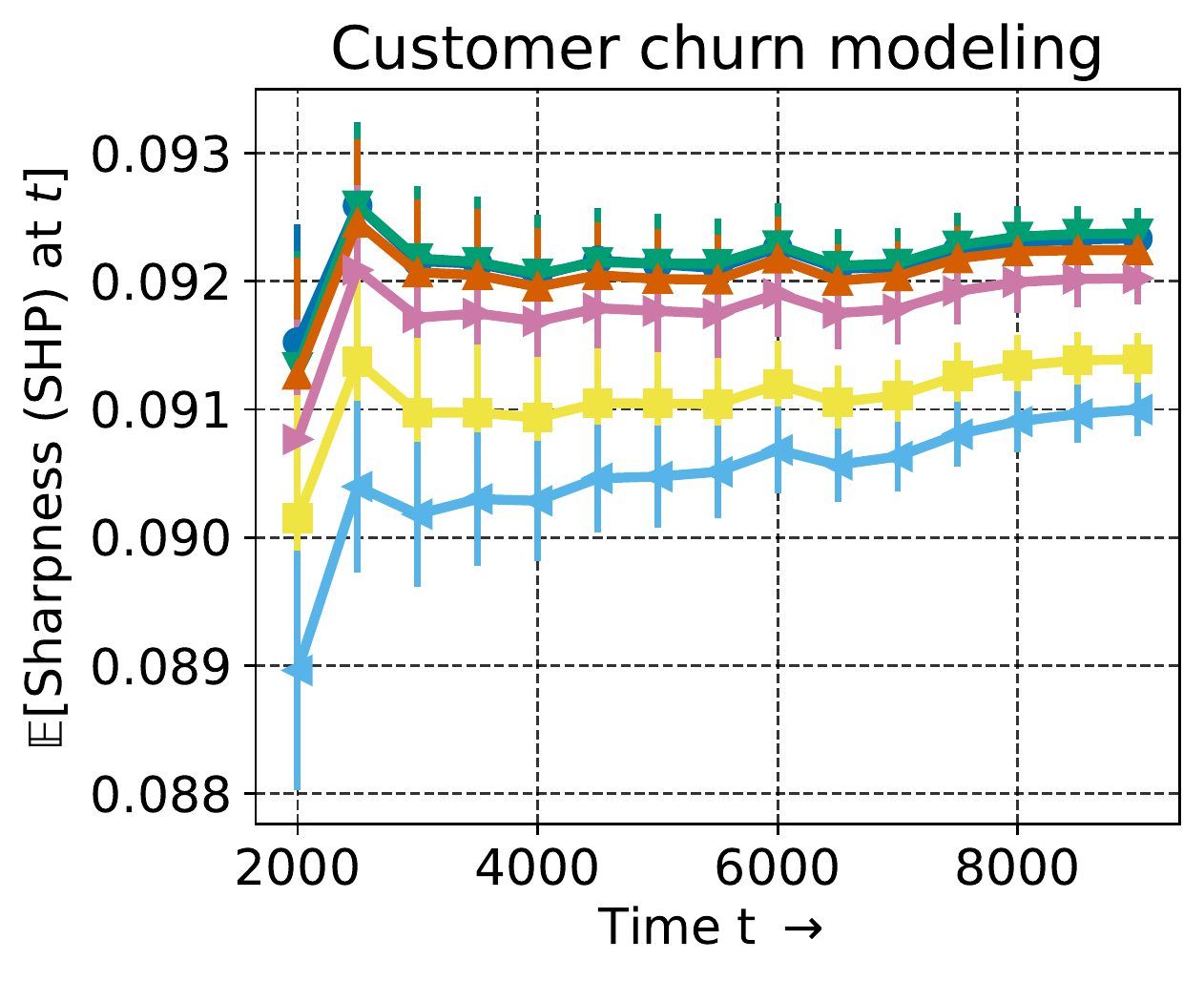}
        \includegraphics[trim=0 0 0 0, clip, width=0.24\linewidth]{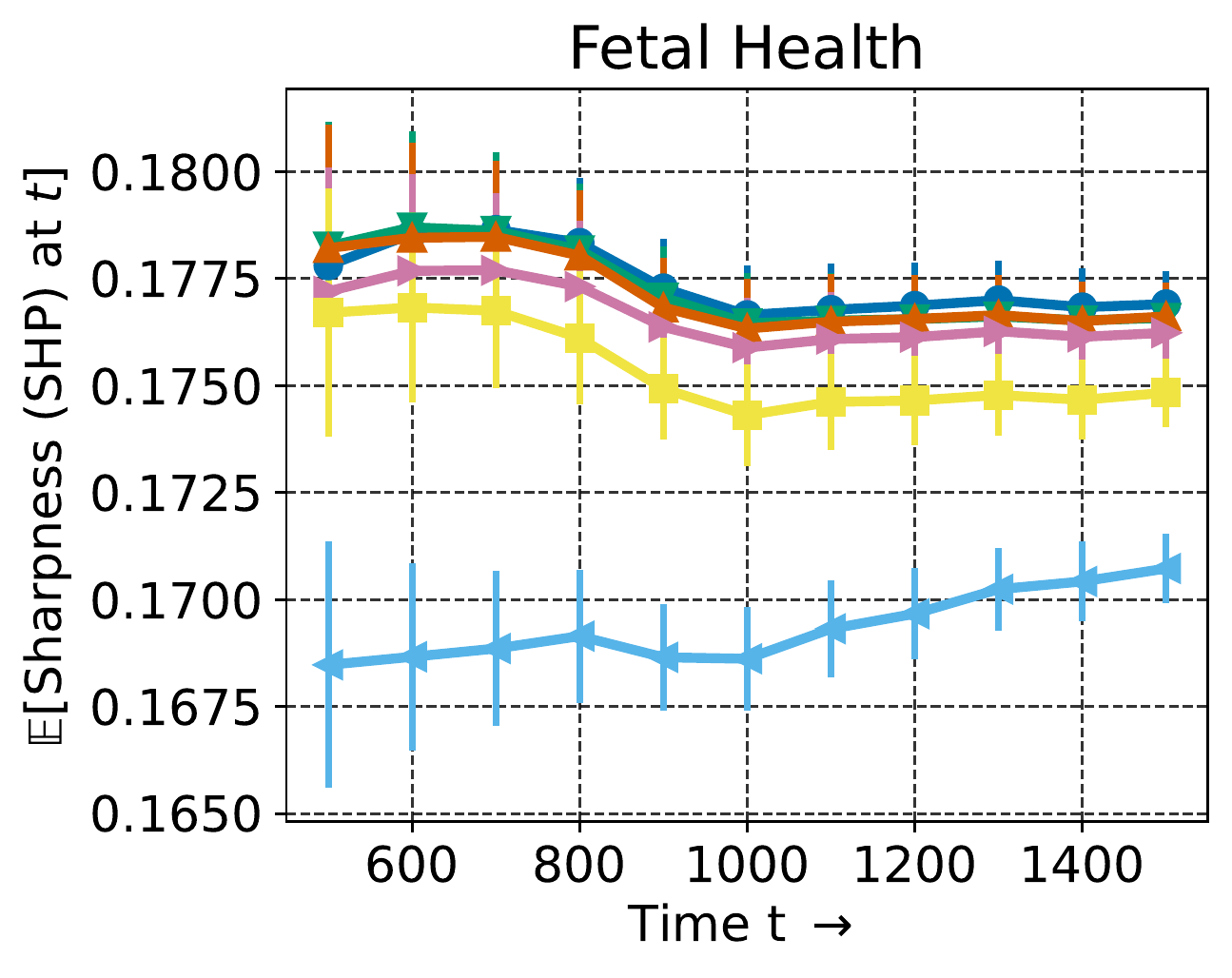}
        \caption{\textbf{Sharpness results with i.i.d.\ data.} %
        \shp~values over time of considered models on four shuffled (ie, nearly i.i.d.) datasets (Section~\ref{sec:experiments-real}). The drop in expected sharpness is less than $0.005$ in all cases except for the HOPS forecaster on the Fetal Health dataset, where it is $0.01$. }%
    \label{fig:real-data-iid-shp}
\end{figure*}

\section{Experimental details and additional results}
\label{appsec:all-additional-collection}
Some implementation details, metadata, information on metrics, and additional results and figures are collected here. 

\subsection{Metadata for datasets used in Section~\ref{sec:experiments-real}}%
\label{appsec:additional-exp}

\begin{figure*}[htp]
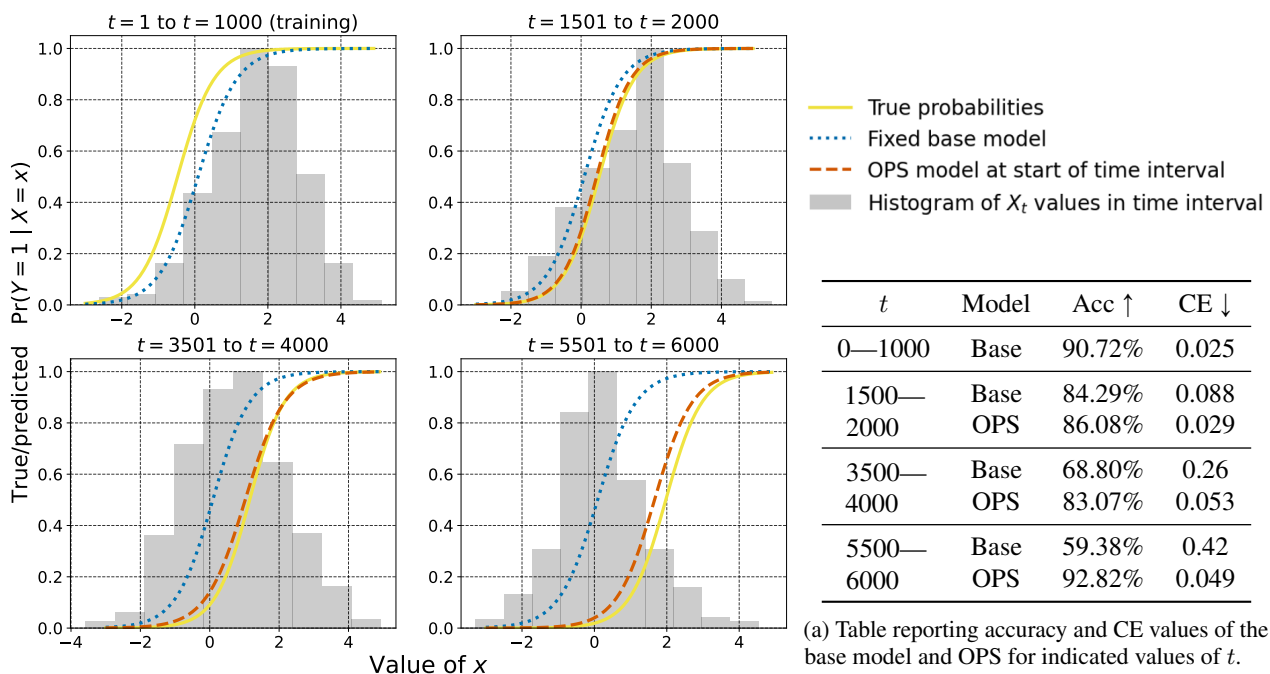

    \centering
    \begin{subfigure}[t]{0.63\textwidth}
        \centering
        \vskip 0pt
        \includegraphics[trim=1cm 0 0 0, clip, width=1.1\textwidth]{figs_label_shift_1d.pdf}
    \end{subfigure}
    \begin{subfigure}[t]{0.36\textwidth}
        \vskip 2cm
        \centering\includegraphics[width=\textwidth, trim=0 -3cm 0 0, clip]{figs_legend_updated.png}
        \begin{tabular}{cccc}\toprule
        $t$ & Model & Acc $\uparrow$ & CE $\downarrow$ \\\midrule
         0---1000 & Base & 90.72\% & 0.025\\
         \midrule
         \multirow{2}{*}{\parbox{1cm}{1500---2000}} & Base & 84.29\% & 0.088\\
          & OPS &  86.08\% & 0.029\\
         \midrule
         \multirow{2}{*}{\parbox{1cm}{3500---4000}} & Base & 68.80\% & 0.26\\
          & OPS & 83.07\% & 0.053 \\
         \midrule
         \multirow{2}{*}{\parbox{1cm}{5500---6000}} & Base & 59.38\% & 0.42 \\
          & OPS & 92.82\% & 0.049 \\\bottomrule
        \end{tabular}
        \caption{Table reporting accuracy and CE values of the base model and OPS for indicated values of $t$.}
        \label{fig:label-shift-illustration-table}
    \end{subfigure}
    \caption{The adaptive behavior of OPS for the simulated label drift scenario described in Section~\ref{sec:cov-shift-illustrative}.}
    \label{fig:label-shift-illustration}
\end{figure*}

\begin{figure*}[htp]
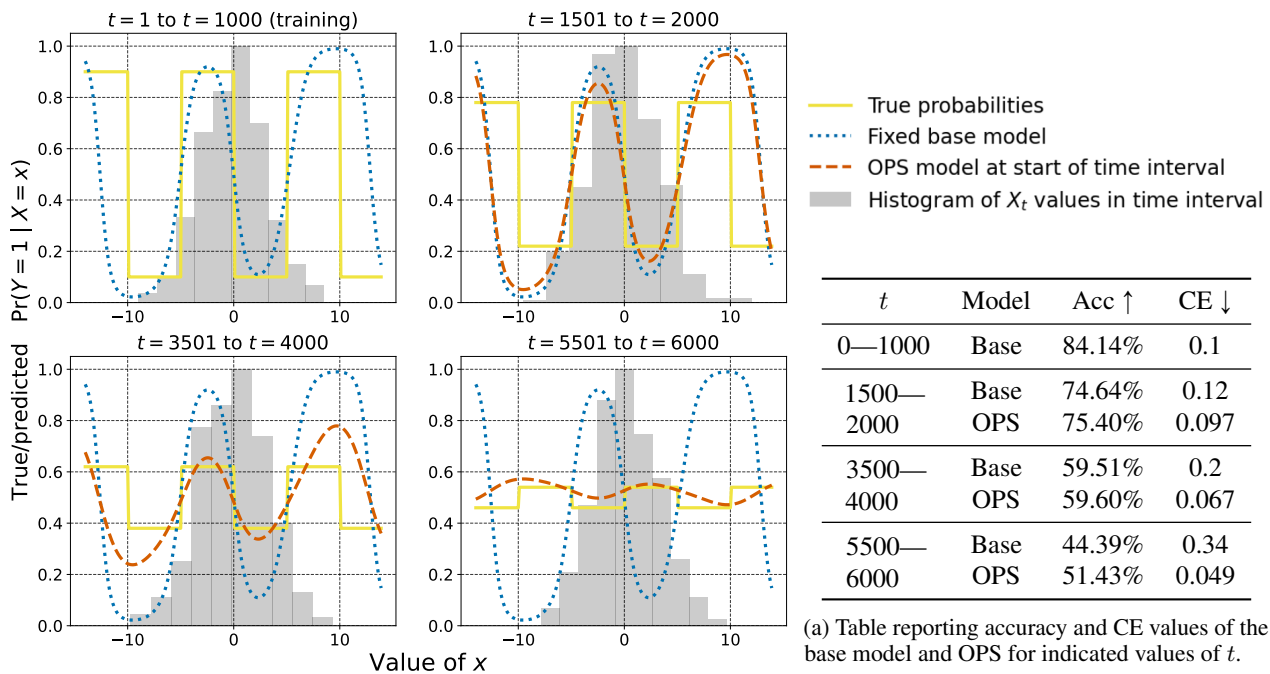

    \centering
    \begin{subfigure}[t]{0.63\textwidth}
        \centering
        \vskip 0pt
        \includegraphics[trim=1cm 0 0 0, clip, width=1.1\textwidth]{figs_regression_shift_1d.pdf}
    \end{subfigure}
    \begin{subfigure}[t]{0.36\textwidth}
        \vskip 2cm
        \centering\includegraphics[width=\textwidth, trim=0 -3cm 0 0, clip]{figs_legend_updated.png}
        \begin{tabular}{cccc}
        \toprule
        $t$ & Model & Acc $\uparrow$ & CE $\downarrow$ \\\midrule
         0---1000 & Base & 84.14\% & 0.1\\
         \midrule
         \multirow{2}{*}{\parbox{1cm}{1500---2000}} & Base & 74.64\% & 0.12\\
          & OPS &  75.40\% & 0.097\\
         \midrule
         \multirow{2}{*}{\parbox{1cm}{3500---4000}} & Base & 59.51\% & 0.2\\
          & OPS & 59.60\% & 0.067 \\
         \midrule
         \multirow{2}{*}{\parbox{1cm}{5500---6000}} & Base & 44.39\% & 0.34 \\
          & OPS & 51.43\% & 0.049 \\\bottomrule
        \end{tabular}
        \caption{Table reporting accuracy and CE values of the base model and OPS for indicated values of $t$.}
        \label{fig:reg-shift-illustration-table}
    \end{subfigure}
    \caption{The adaptive behavior of OPS for the simulated regression-function drift scenario described in Section~\ref{sec:cov-shift-illustrative}.}
    \label{fig:regression-shift-illustration}
\end{figure*}

Table~\ref{tab:metadata} contains metadata for the datasets we used in Section~\ref{sec:experiments-real}. $T_\text{train}$ refers to the number of training examples. The ``sort-by" column indicates which covariate was used to order data points. In each case some noise was added to the covariate in order to create variation for the experiments. The exact form of drift can be found in the python file \texttt{sec\_4\_experiments\_core.py} in the repository \url{https://github.com/AIgen/df-posthoc-calibration/tree/main/Online\%20Platt\%20Scaling\%20with\%20Calibeating}.

\subsection{Additional plots and details for label drift and regression-function drift experiments from Section~\ref{sec:intro}}
\label{appsec:intro-experiments-additional}
Figures~\ref{fig:cov-shift-illustration},~\ref{fig:label-shift-illustration}, and \ref{fig:regression-shift-illustration} report accuracy (Acc) and calibration error (CE) values for the base model and the OPS model in the three dataset drift settings we considered. The Acc values are straightforward averages and can be computed without issues. However, estimation of CE on real datasets is tricky and requires sophisticated techniques such as adaptive binning, debiasing, heuristics for selecting numbers of bins, or kernel estimators \citep{kumar2019calibration, roelofs2022mitigating, widmann2019calibration}. The issue typically boils down to the fact that $\text{Pr}(Y = 1 \mid X = x)$ cannot be estimated for every $x \in \Xcal$ without making smoothness assumptions or performing some kind of binning. However, in the synthetic experiments of Section~\ref{sec:intro}, $\text{Pr}(Y = 1 \mid X)$ is known exactly, so such techniques are not required. For some subset of forecasts $p_s, p_2, \ldots, p_t$,  %
we compute 
\begin{equation*}
    \text{CE} = \frac{1}{t-s+1}\sum_{i=s}^t \abs{p_i - \text{Pr}(Y_i = 1 \mid X_i = \x_i)},
\end{equation*}
on the instantiated values of $X_s, X_{s+1}, \ldots, X_t$. Thus, what we report is the true \ce~given covariate values. 

\subsection{Additional results with windowed histogram binning and changing bin width}
\label{appsec:additional-exps}

\textbf{Comparison to histogram binning (HB)}. HB is a recalibration method that has been shown to have excellent empirical performance as well as theoretical guarantees \citep{zadrozny2001obtaining, gupta2021distribution}. There are no online versions of HB that we are aware of, so we use the same windowed approach as windowed Platt and beta scaling for benchmarking (see Section~\ref{sec:ftl-wps} and the second bullet in Section~\ref{appsec:beta-scaling}). This leads to windowed histogram binning (WHB), the fixed-batch HB recalibrator that is updated every $O(100)$ time-steps. We compare WHB to OPS and OBS (see Section~\ref{sec:beta-scaling}). Since tracking improves both OPS and OBS, we also consider tracking WHB. Results are presented in Figure~\ref{fig:real-data-bs}.

We find that WHB often performs better than OPS and OBS in the i.i.d. case, and results are mixed in the drifting case. However, since WHB is a binning method, it inherently produces something akin to a running average, and so tracking does not improve it further. The best methods (TOPS, TOBS) are the ones that combine one of our proposed parametric online calibrators (OPS, OBS) with tracking.

\textbf{Changing the bin width $\epsilon$}. In the main paper, we used $\epsilon = 0.1$ and defined corresponding bins as in \eqref{eq:B-bins}. This binning reflects in three ways on the experiments we performed. First, $\epsilon$-binning is used to divide forecasts into representative bins before calibeating (equations~\eqref{eq:tops-update}, \eqref{eq:hops-update}). Second, $\epsilon$-binning is used to define the sharpness and calibration error metrics. Third, the hedging procedure F99 requires specifying a binning scheme, and we used the same $\epsilon$-bins. 

Here, we show that the empirical results reported in the main paper are not sensitive to the chosen representative value of $\epsilon=0.1$. We run the same experiment used to produce Figures~\ref{fig:real-data-ce} and \ref{fig:real-data-iid-ce} but with $\epsilon=0.05$ (Figure~\ref{fig:real-data-eps-0.05}) and $\epsilon = 0.2$ (Figure~\ref{fig:real-data-eps-0.2}). The qualitative results remain identical, with TOPS still the best performer and hardly affected by the changing epsilon. In fact, the plots for all methods except HOPS are indistinguishable from their $\epsilon=0.1$ counterparts at first glance. HOPS is slightly sensitive to $\epsilon$: the performance improves slightly with $\epsilon = 0.05$, and worsens slightly with $\epsilon=0.2$. 

\subsection{Reproducibility}
\label{appsec:implementation}
All results in this paper can be reproduced exactly, including the randomization, using the IPython notebooks that can be found at \url{https://github.com/aigen/df-posthoc-calibration} in the folder \texttt{Online Platt scaling with Calibeating}. The README page in the folder contains a table describing which notebook to run to reproduce individual figures from this paper. %

\begin{figure*}[htp]
    \centering
    \begin{subfigure}{\linewidth}
    \centering
    \includegraphics[trim=0 10cm 0 0, clip, width=0.8\linewidth]{figs_comparisons_legend.pdf}
    \end{subfigure}
    \begin{subfigure}{\linewidth}
        \centering
        \includegraphics[trim=0 0 0 0, clip, width=0.24\linewidth]{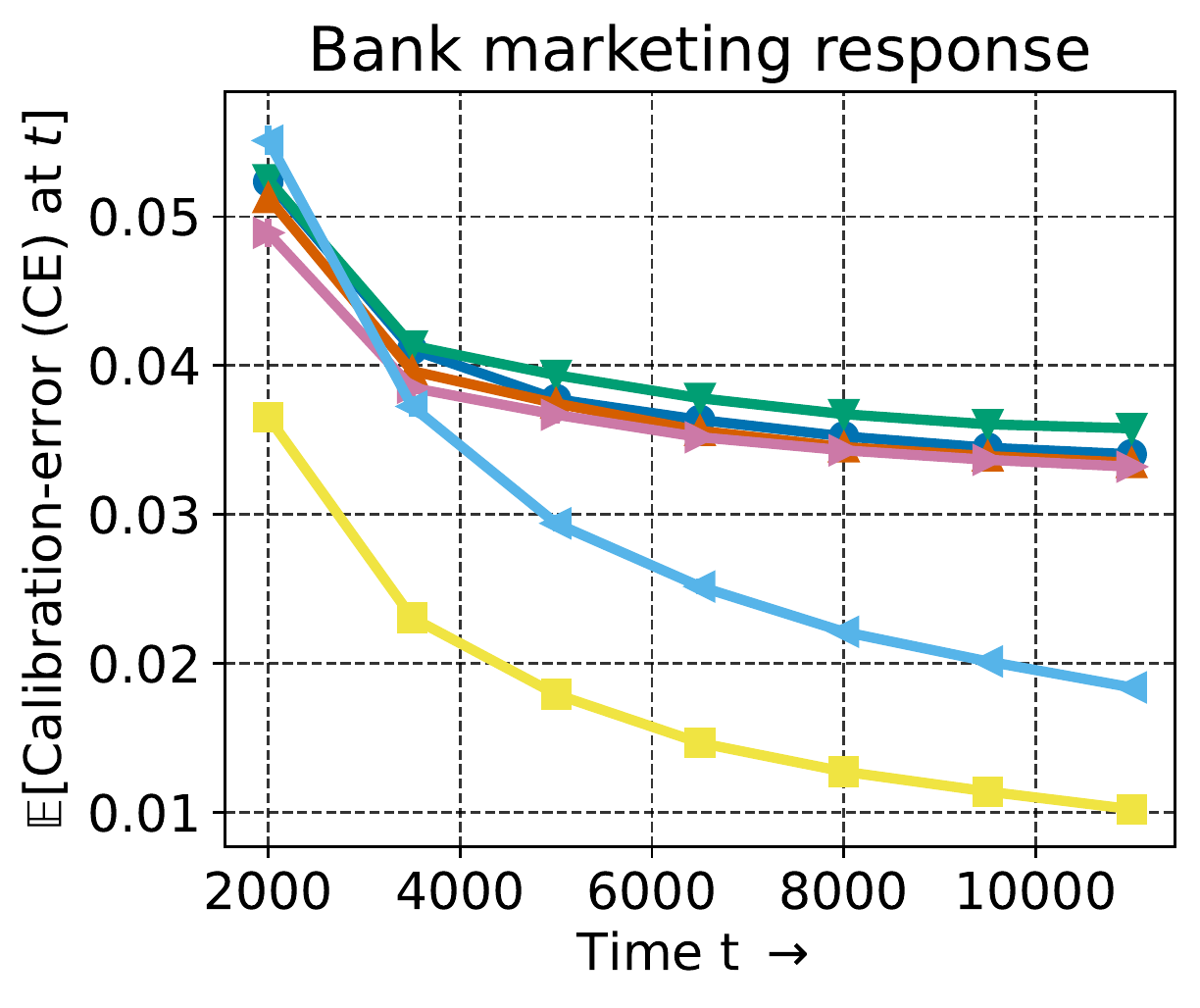}
        \includegraphics[trim=0 0 0 0, clip, width=0.24\linewidth]{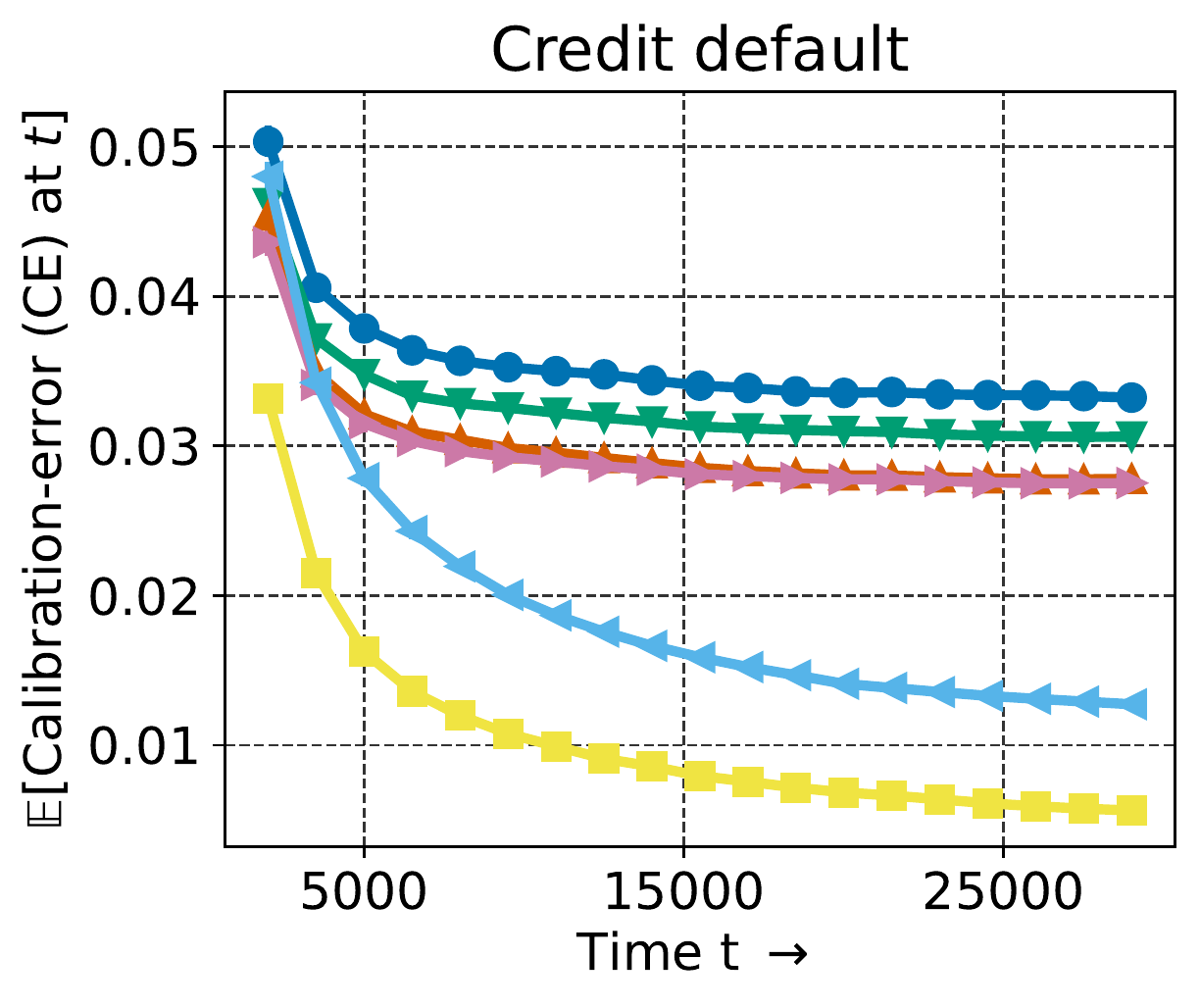}
        \includegraphics[trim=0 0 0 0, clip, width=0.24\linewidth]{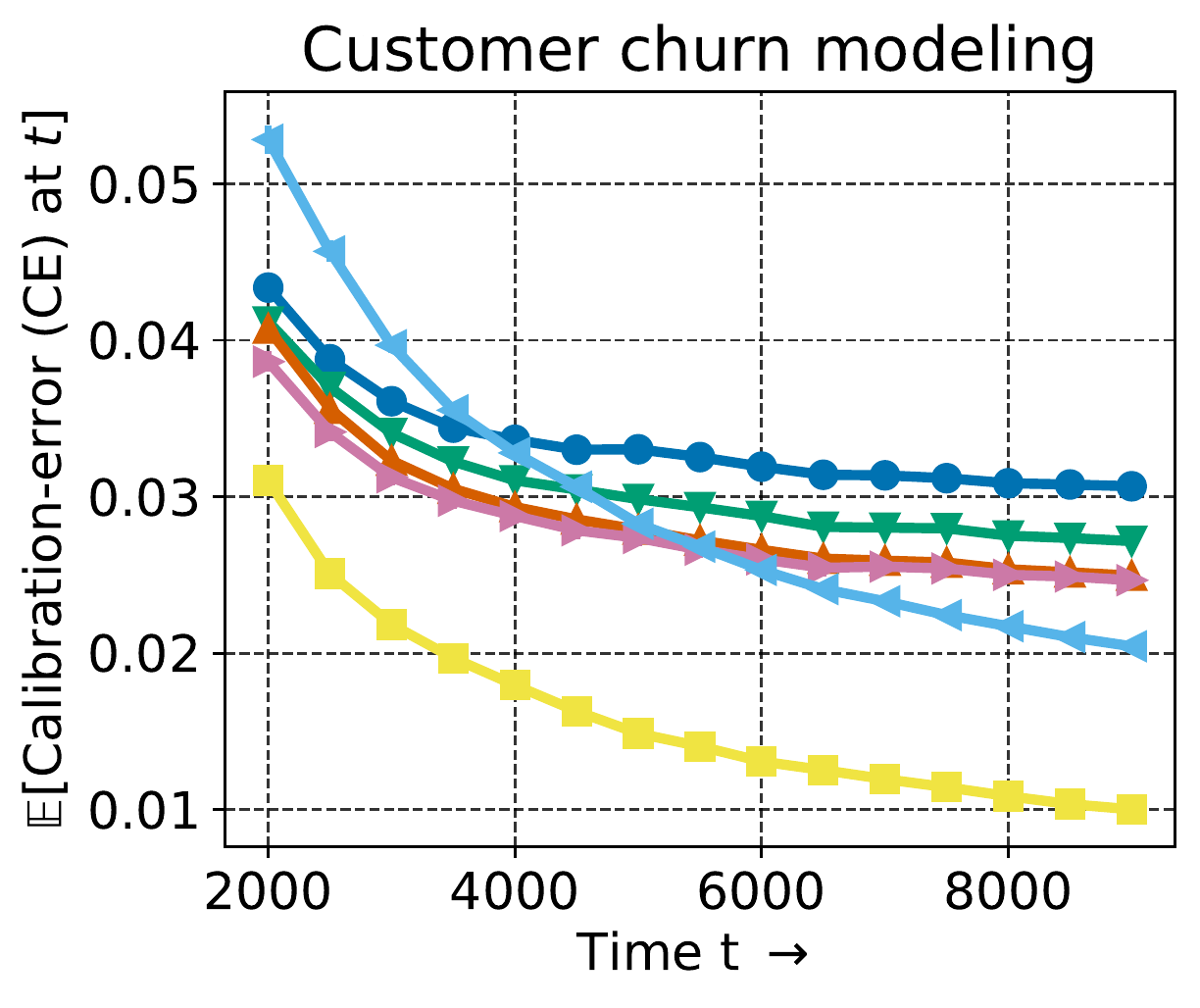}
        \includegraphics[trim=0 0 0 0, clip, width=0.24\linewidth]{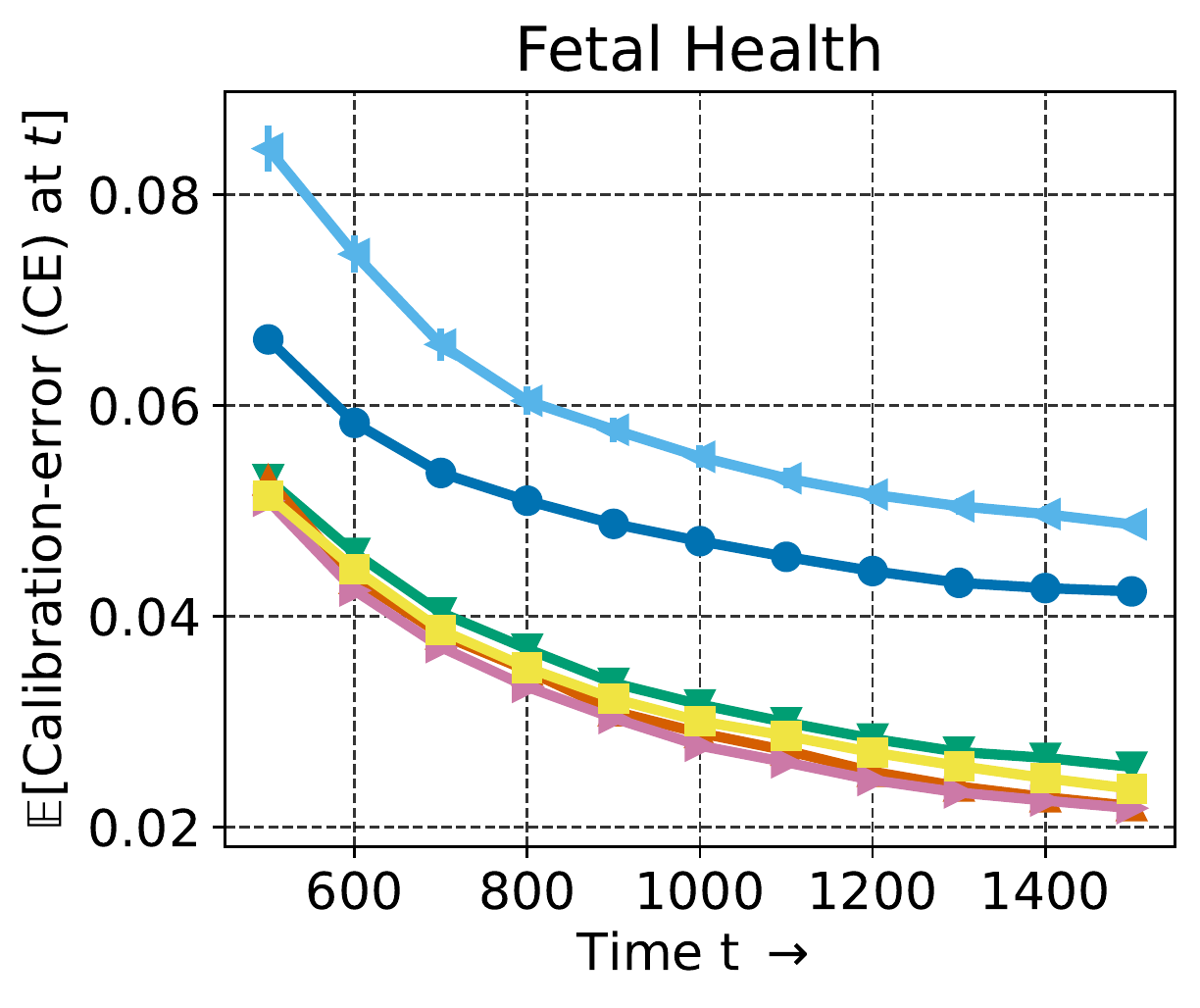}
        \caption{Calibration error for i.i.d.\ data streams.}
    \end{subfigure}
    \vskip .1cm
    \begin{subfigure}{\linewidth}
        \centering
        \includegraphics[trim=0 0 0 0, clip, width=0.24\linewidth]{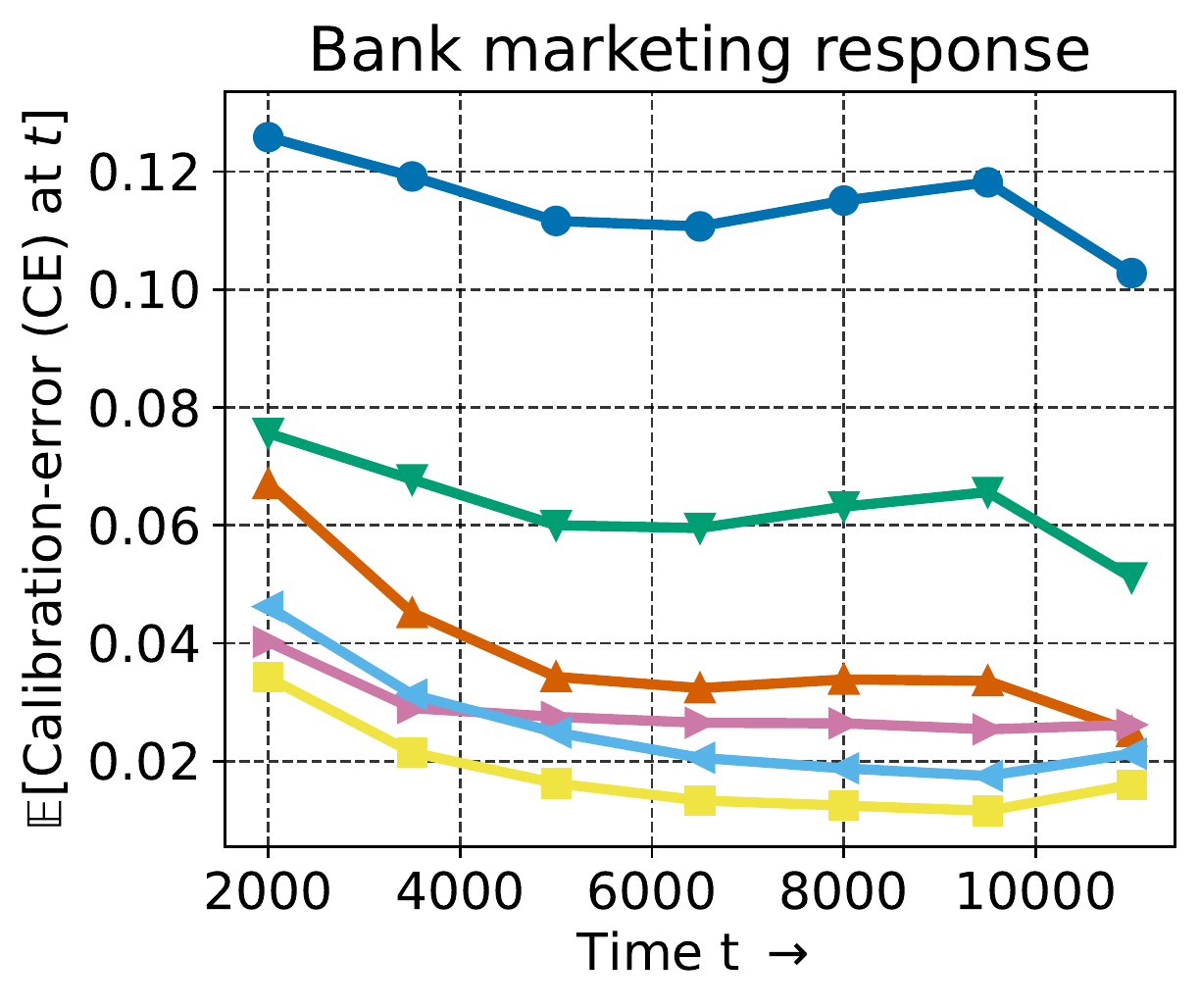}
        \includegraphics[trim=0 0 0 0, clip, width=0.24\linewidth]{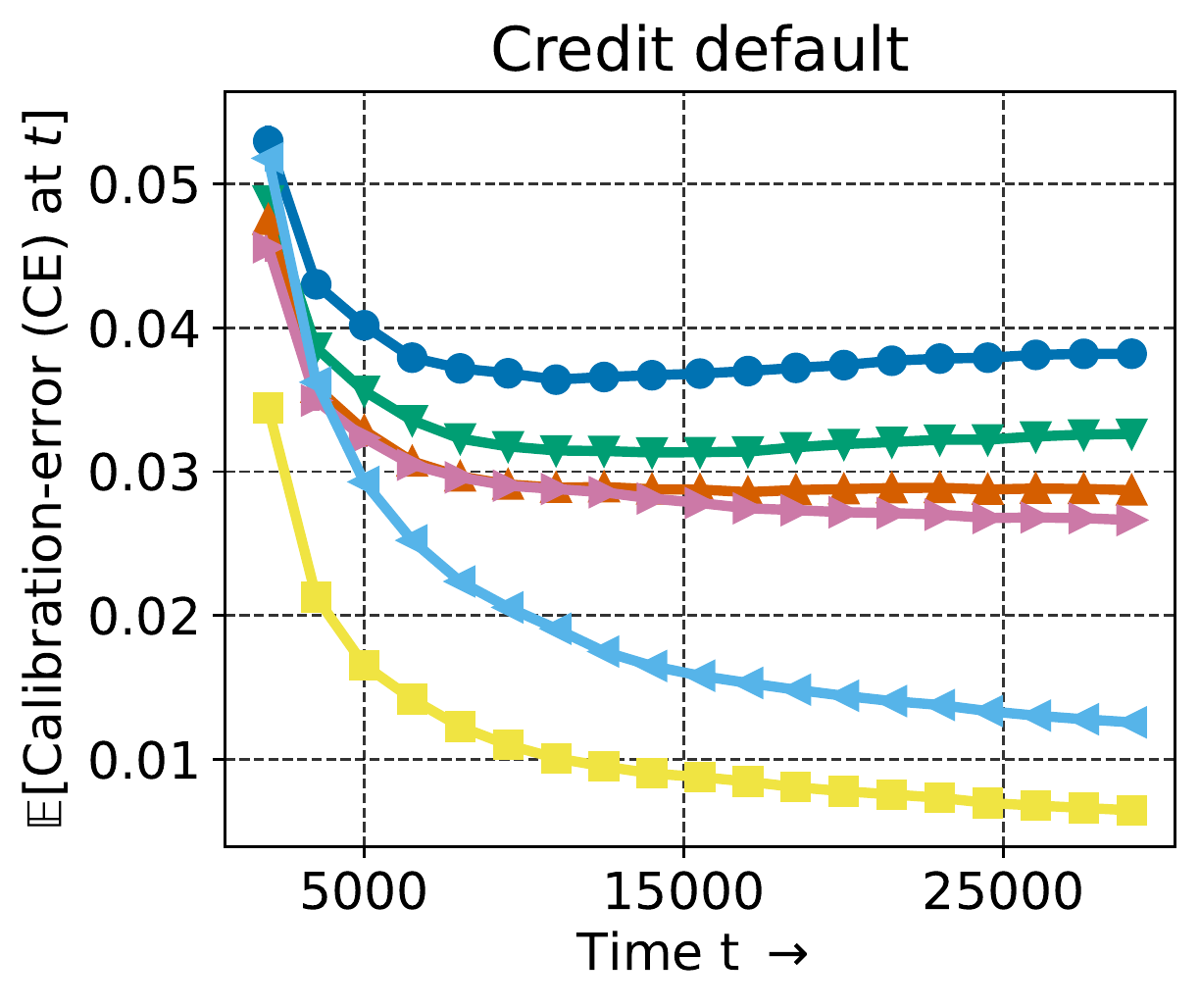}
        \includegraphics[trim=0 0 0 0, clip, width=0.24\linewidth]{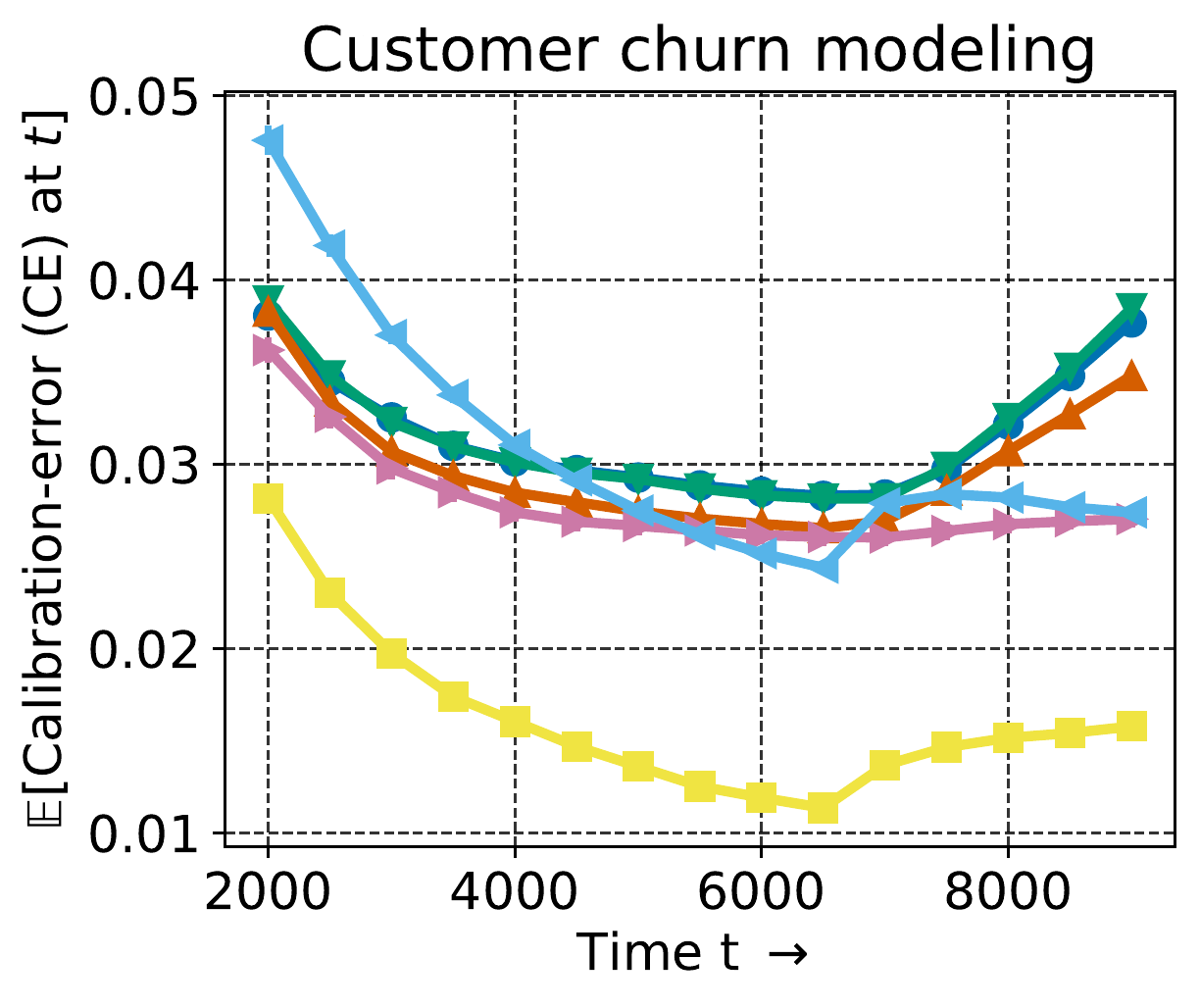}
        \includegraphics[trim=0 0 0 0, clip, width=0.24\linewidth]{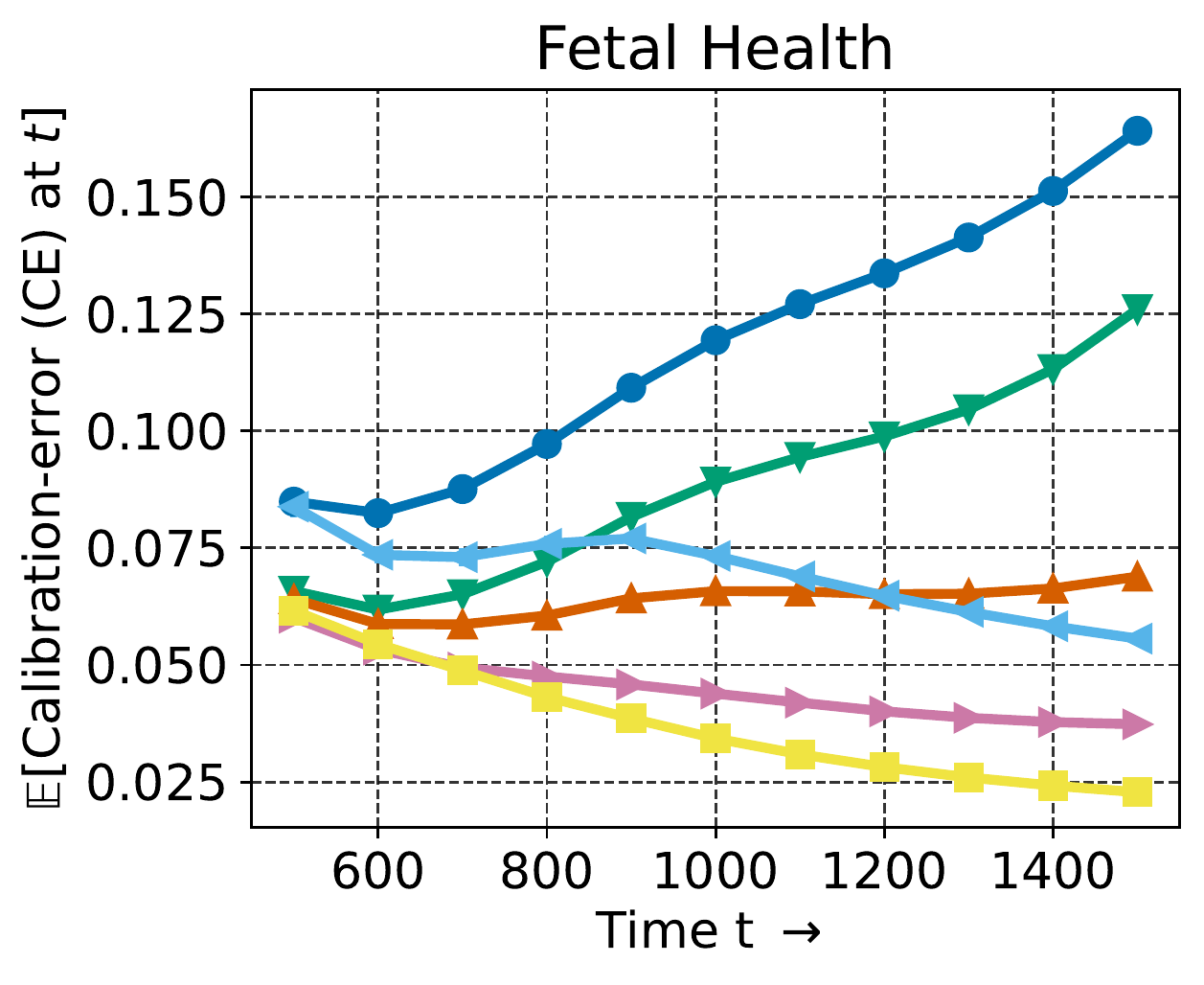}
        \caption{Calibration error for drifting data streams.}
    \end{subfigure}
            \caption{Results for the same experimental setup as Figures~\ref{fig:real-data-ce} and~\ref{fig:real-data-iid-ce}, but with $\epsilon = 0.05$.}
    \label{fig:real-data-eps-0.05}
\end{figure*}

\begin{figure*}[htp]
    \centering
    \begin{subfigure}{\linewidth}
    \centering
    \includegraphics[trim=0 10cm 0 0, clip, width=0.8\linewidth]{figs_comparisons_legend.pdf}
    \end{subfigure}
    \begin{subfigure}{\linewidth}
        \centering
        \includegraphics[trim=0 0 0 0, clip, width=0.24\linewidth]{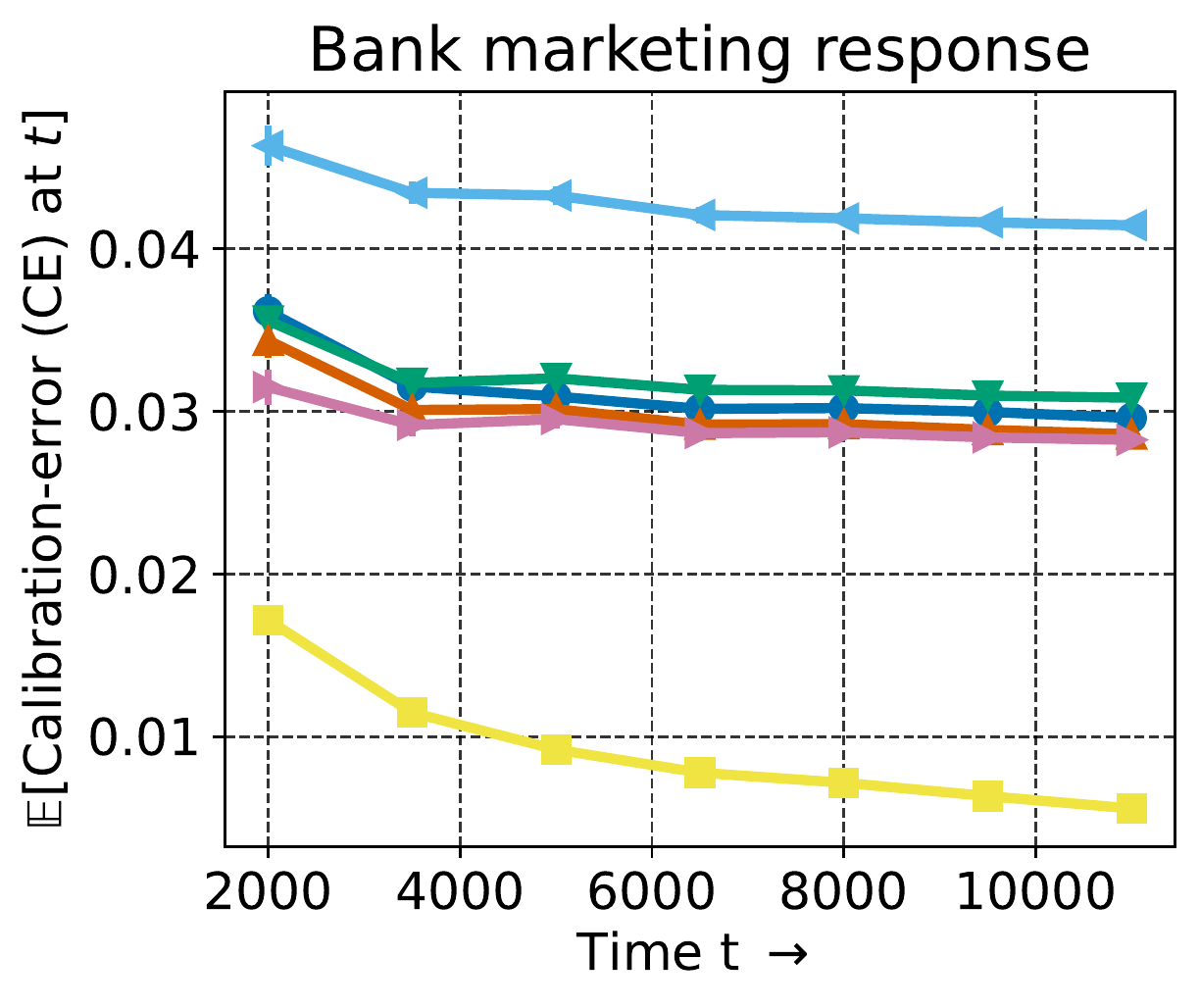}
        \includegraphics[trim=0 0 0 0, clip, width=0.24\linewidth]{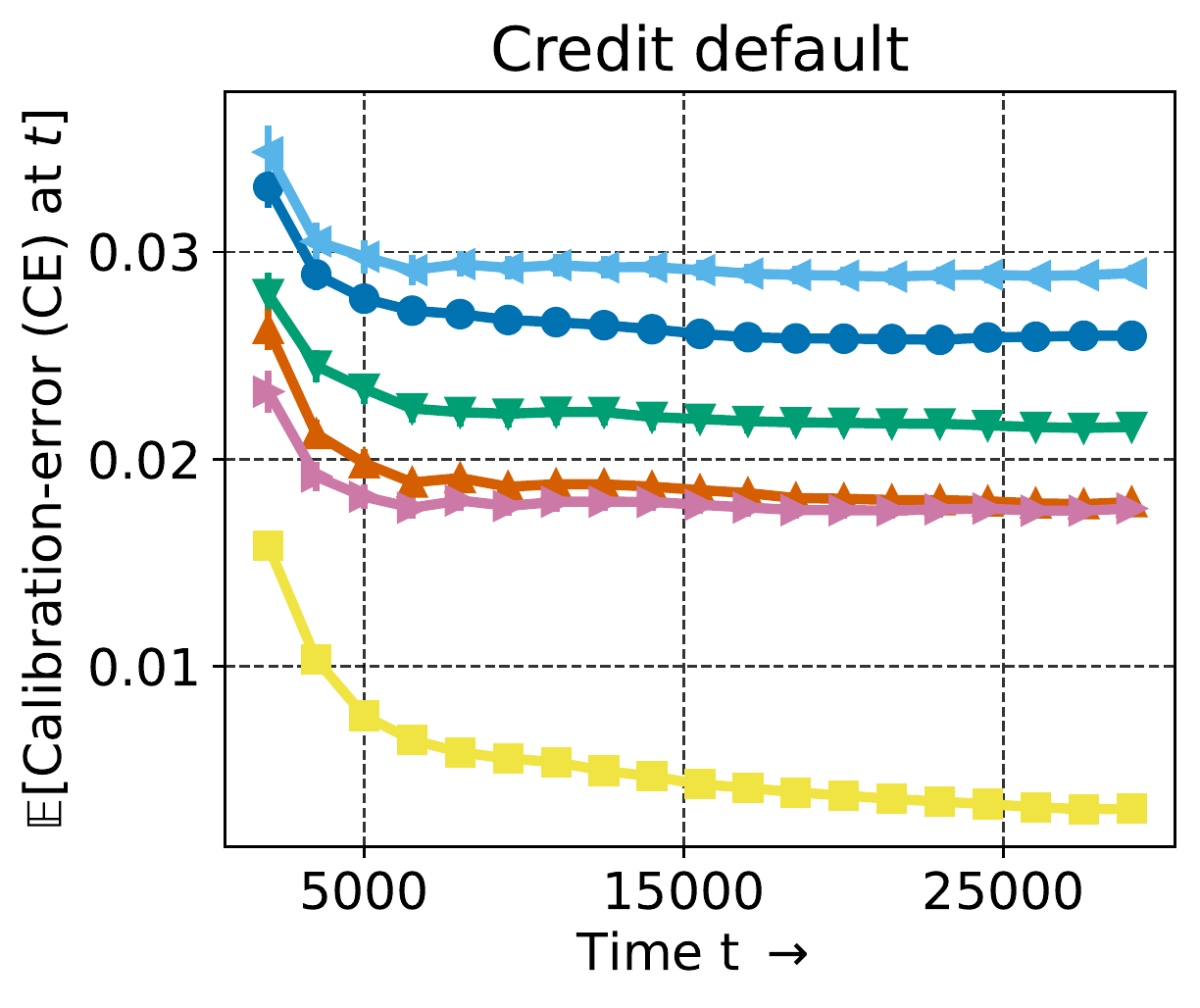}
        \includegraphics[trim=0 0 0 0, clip, width=0.24\linewidth]{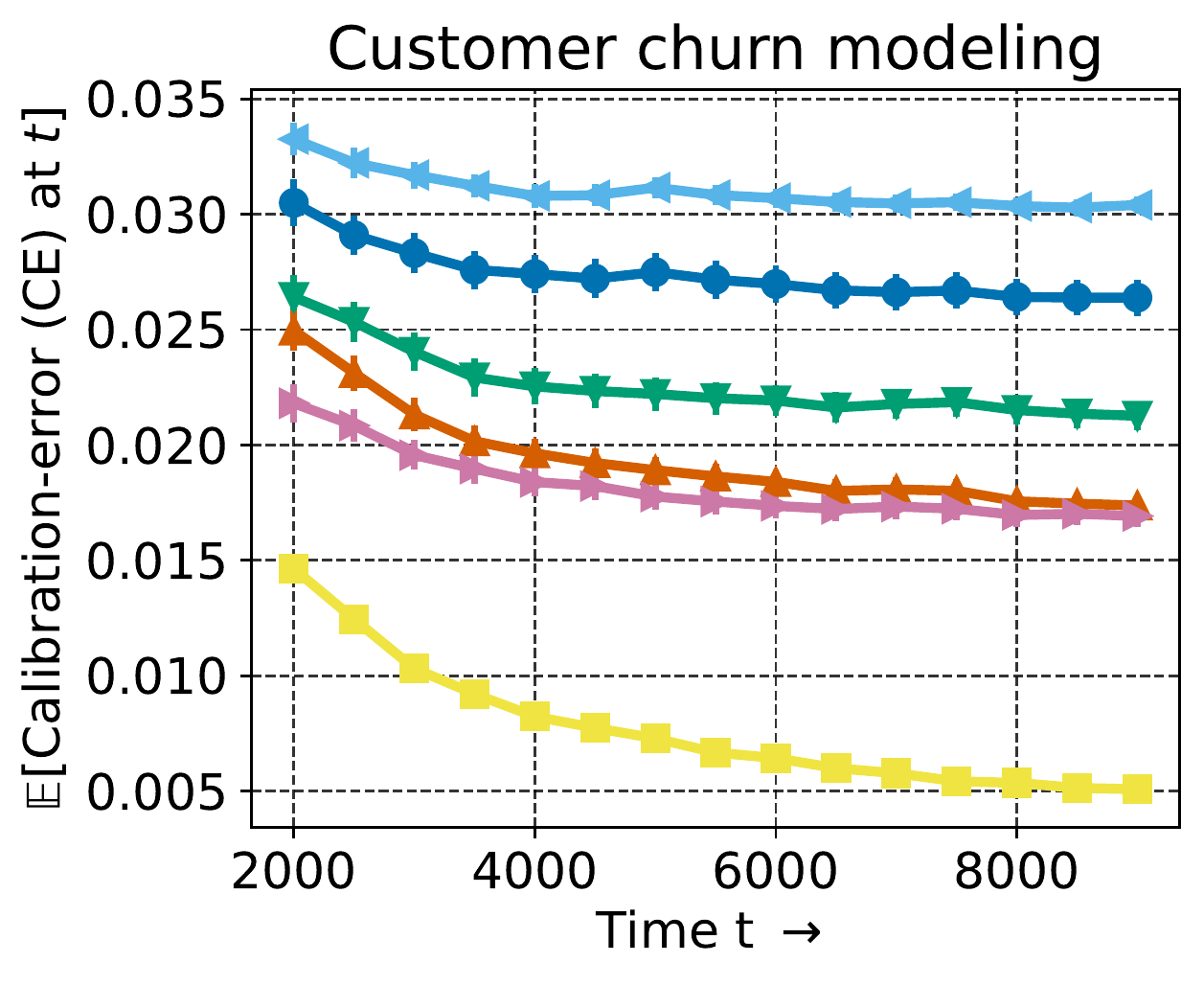}
        \includegraphics[trim=0 0 0 0, clip, width=0.24\linewidth]{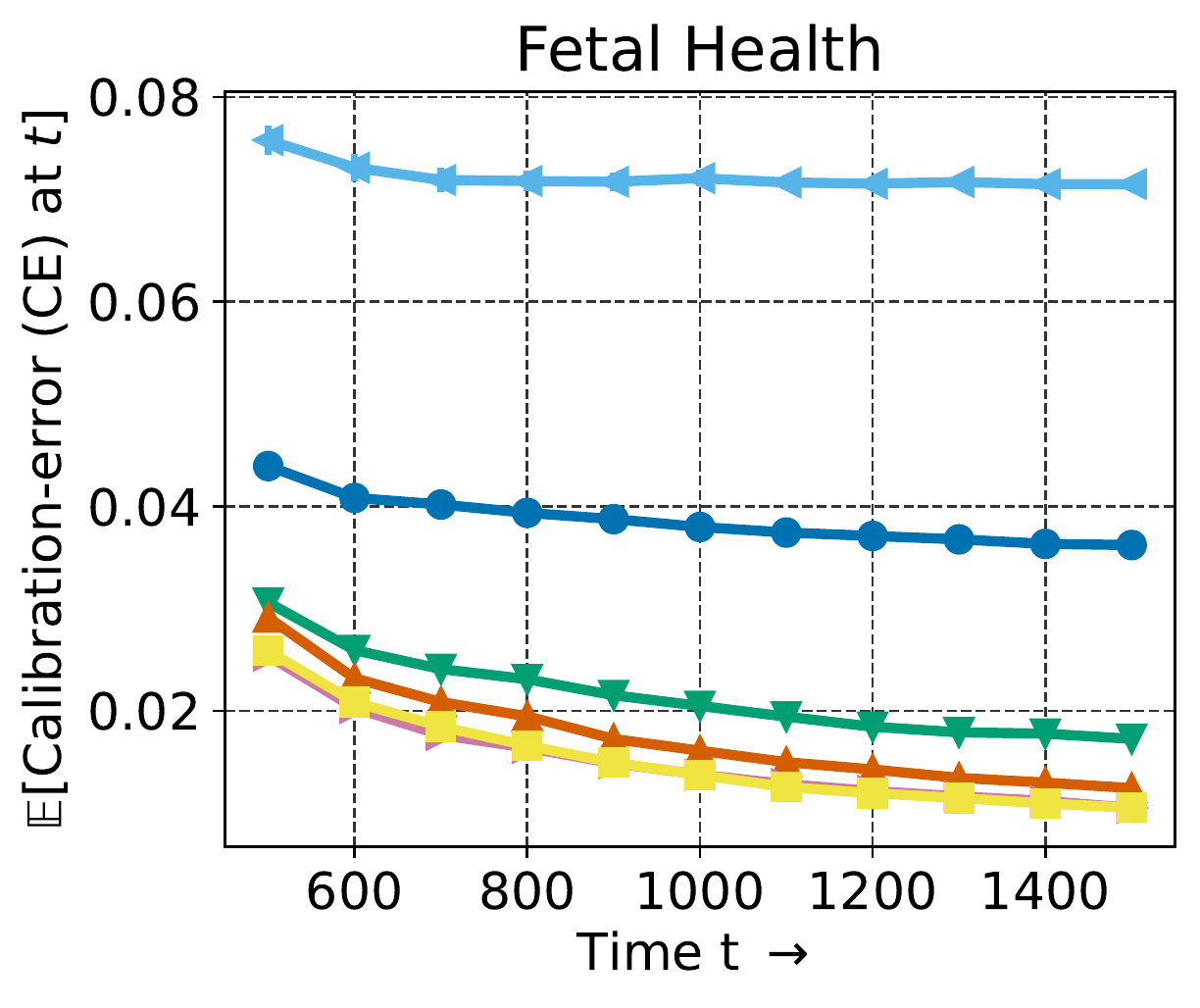}
        \caption{Calibration error for i.i.d.\ data streams.}
    \end{subfigure}
    \vskip .1cm
    \begin{subfigure}{\linewidth}
        \centering
        \includegraphics[trim=0 0 0 0, clip, width=0.24\linewidth]{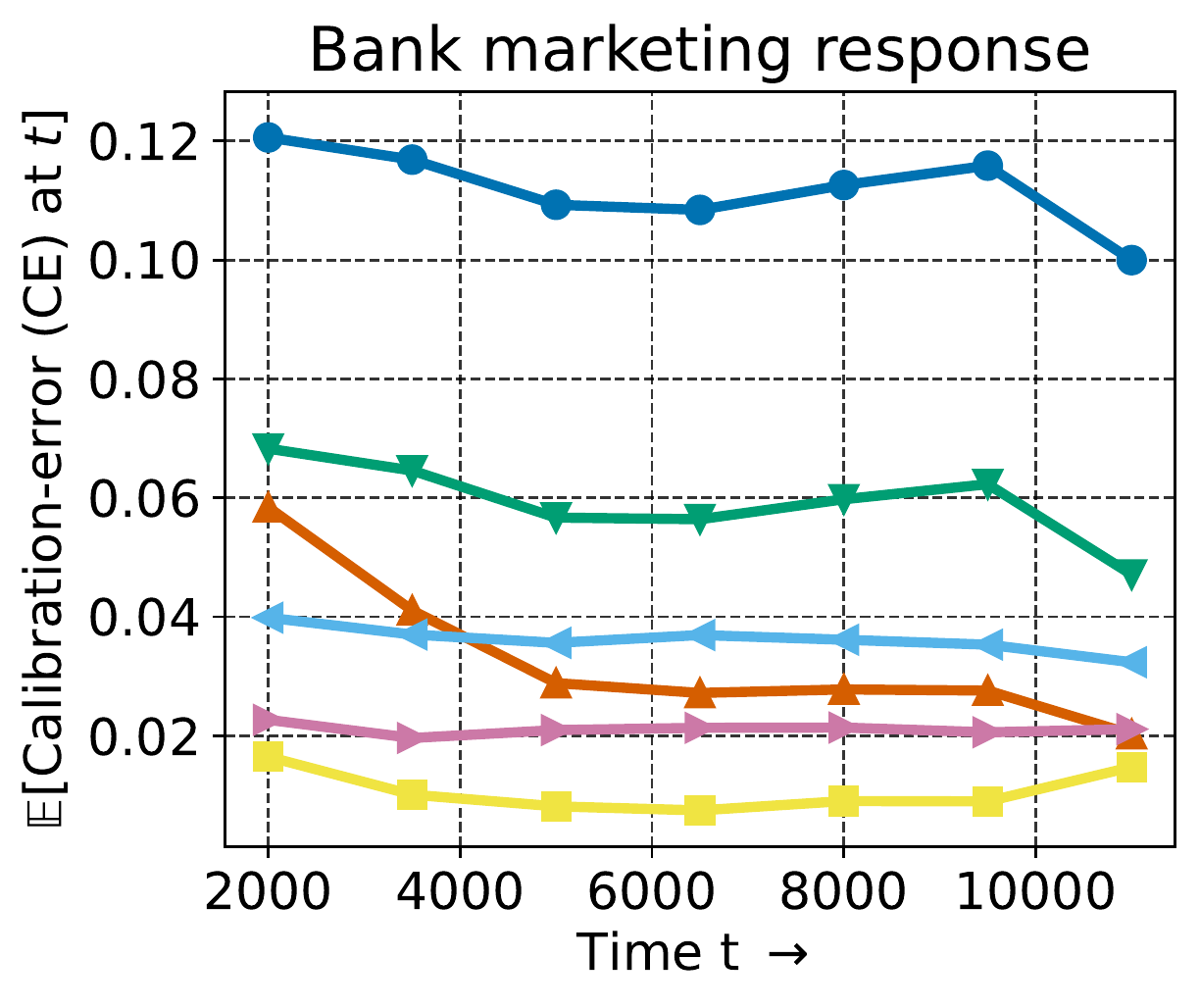}
        \includegraphics[trim=0 0 0 0, clip, width=0.24\linewidth]{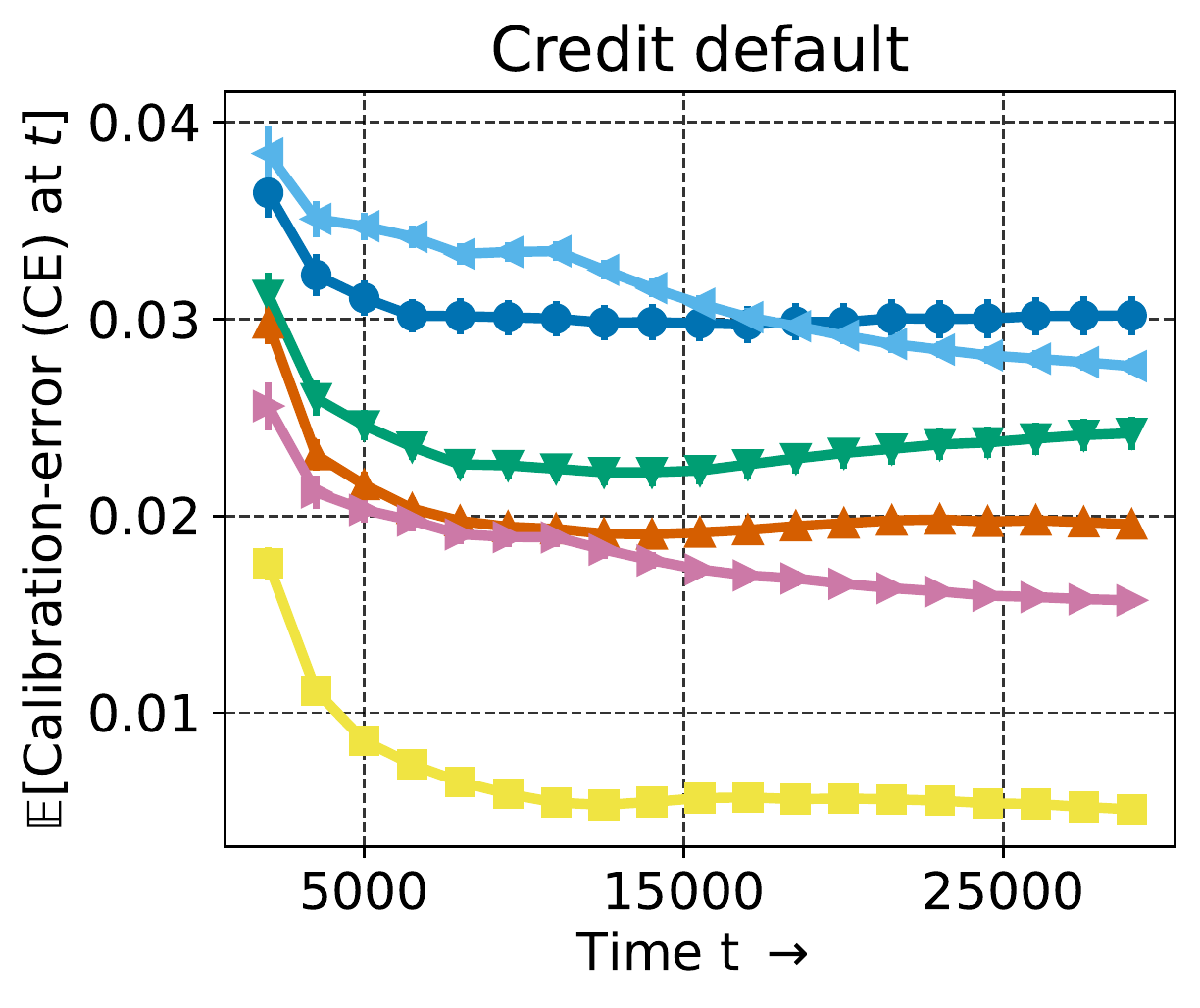}
        \includegraphics[trim=0 0 0 0, clip, width=0.24\linewidth]{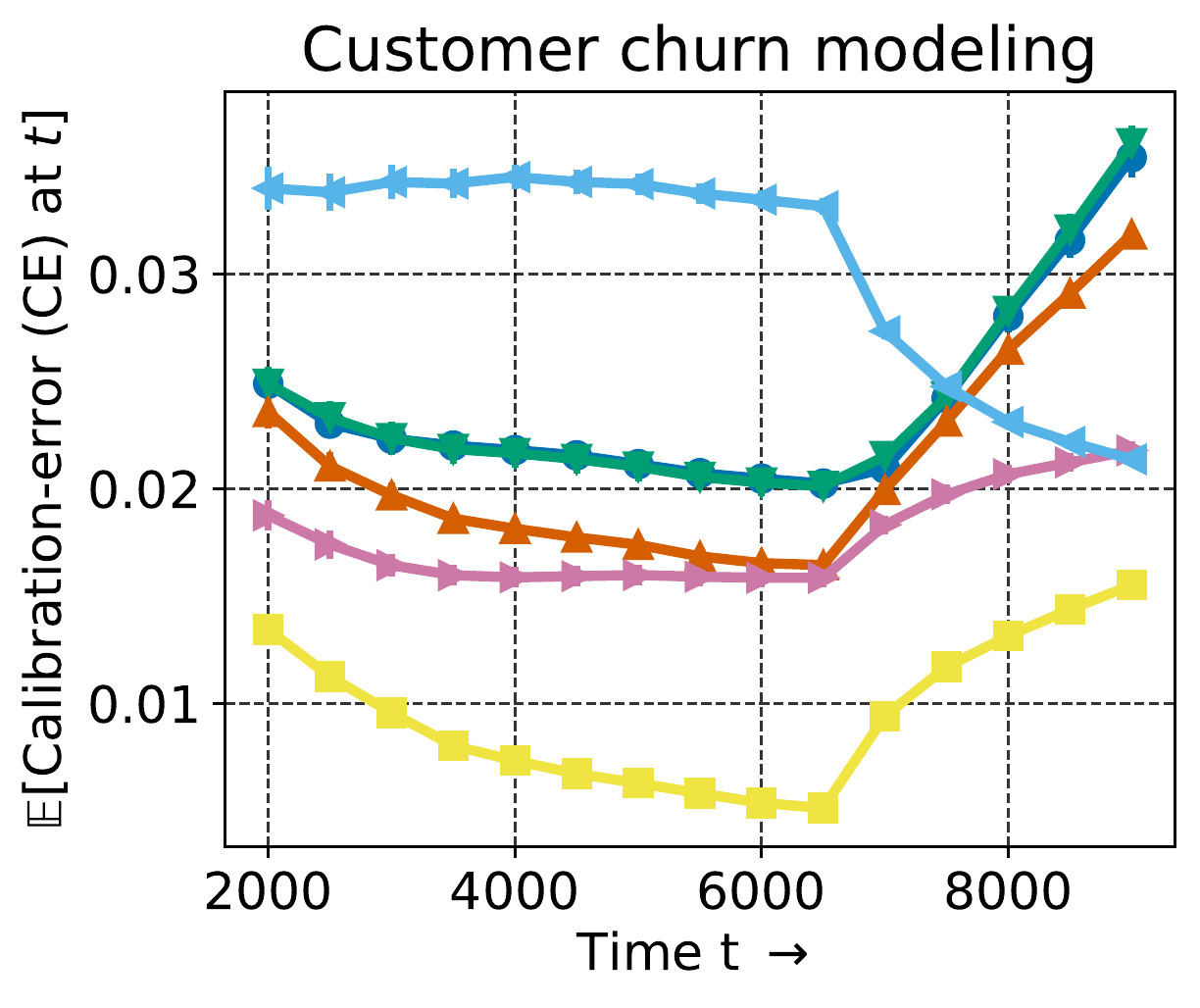}
        \includegraphics[trim=0 0 0 0, clip, width=0.24\linewidth]{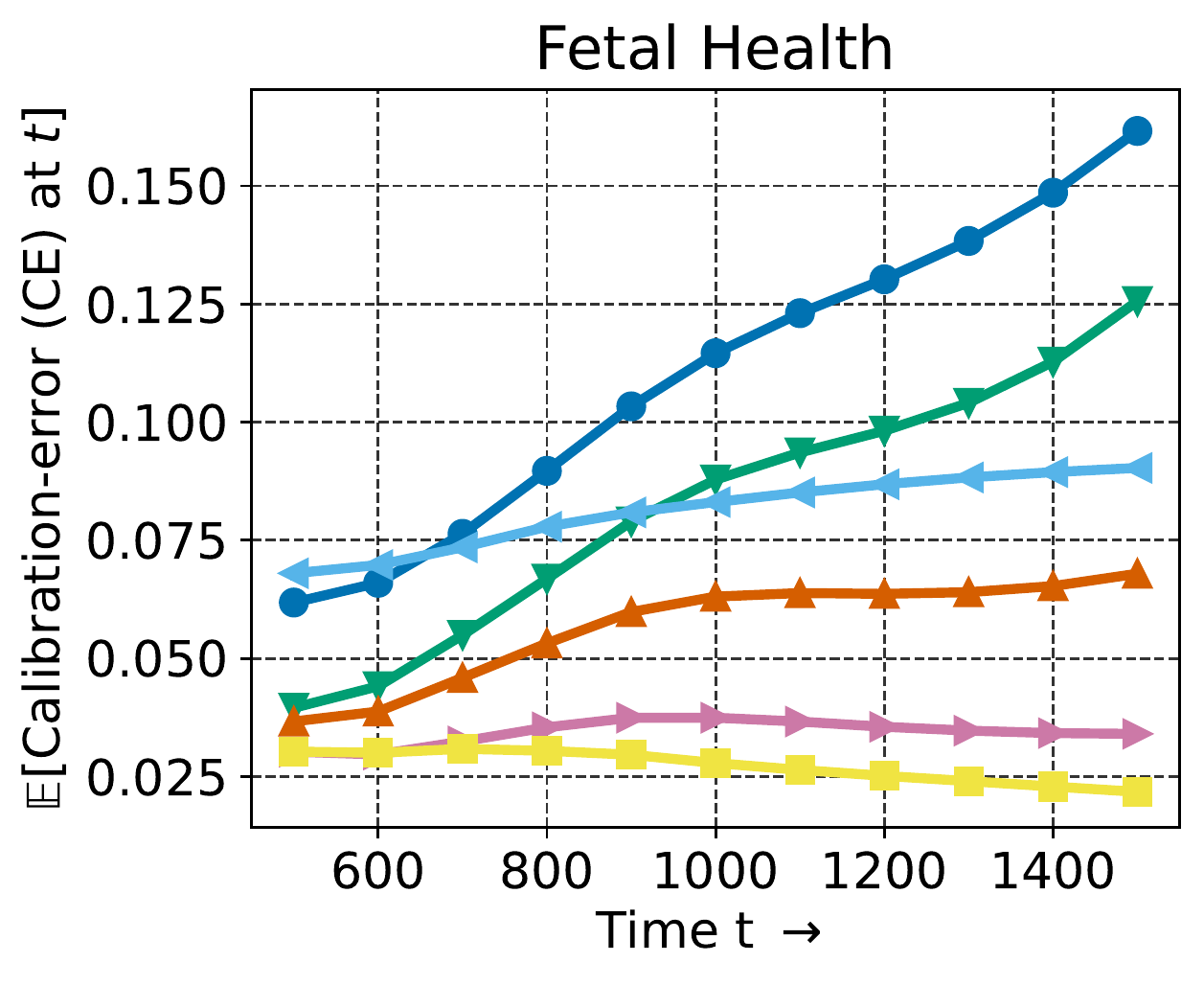}
        \caption{Calibration error for drifting data streams.}
    \end{subfigure}
            \caption{Results for the same experimental setup as Figures~\ref{fig:real-data-ce} and~\ref{fig:real-data-iid-ce}, but with $\epsilon = 0.2$.}
    \label{fig:real-data-eps-0.2}
\end{figure*}

\begin{figure*}[t]
    \centering
    \begin{subfigure}{\linewidth}
    \centering
    \includegraphics[trim=0 10cm 0 0, clip, width=0.8\linewidth]{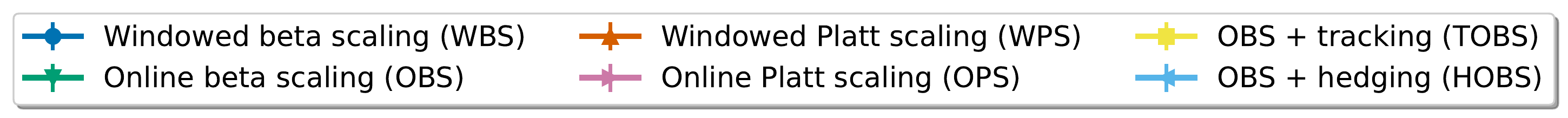}
    \end{subfigure}
    \begin{subfigure}{\linewidth}
        \centering
        \includegraphics[trim=0 0 0 0, clip, width=0.24\linewidth]{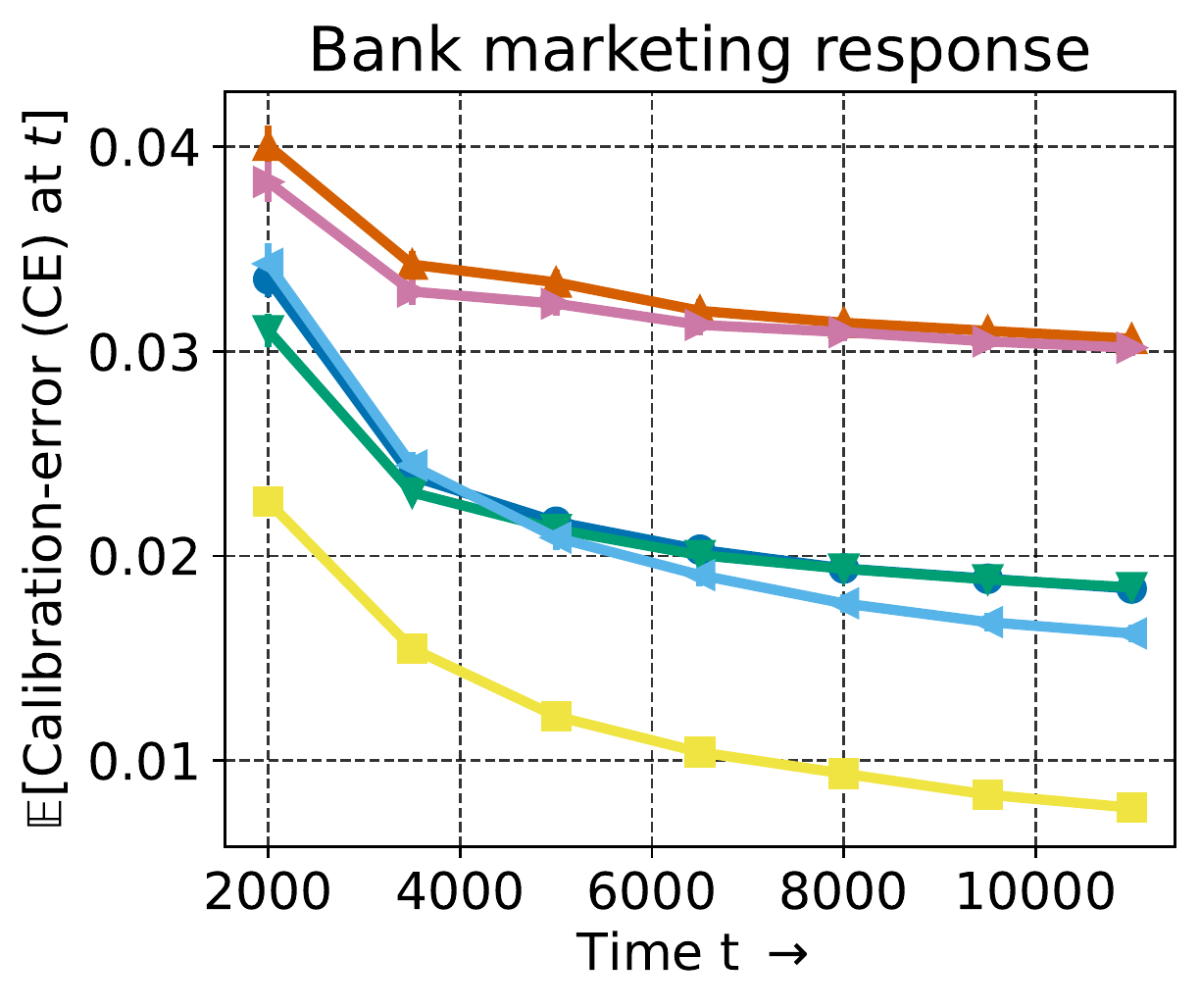}
        \includegraphics[trim=0 0 0 0, clip, width=0.24\linewidth]{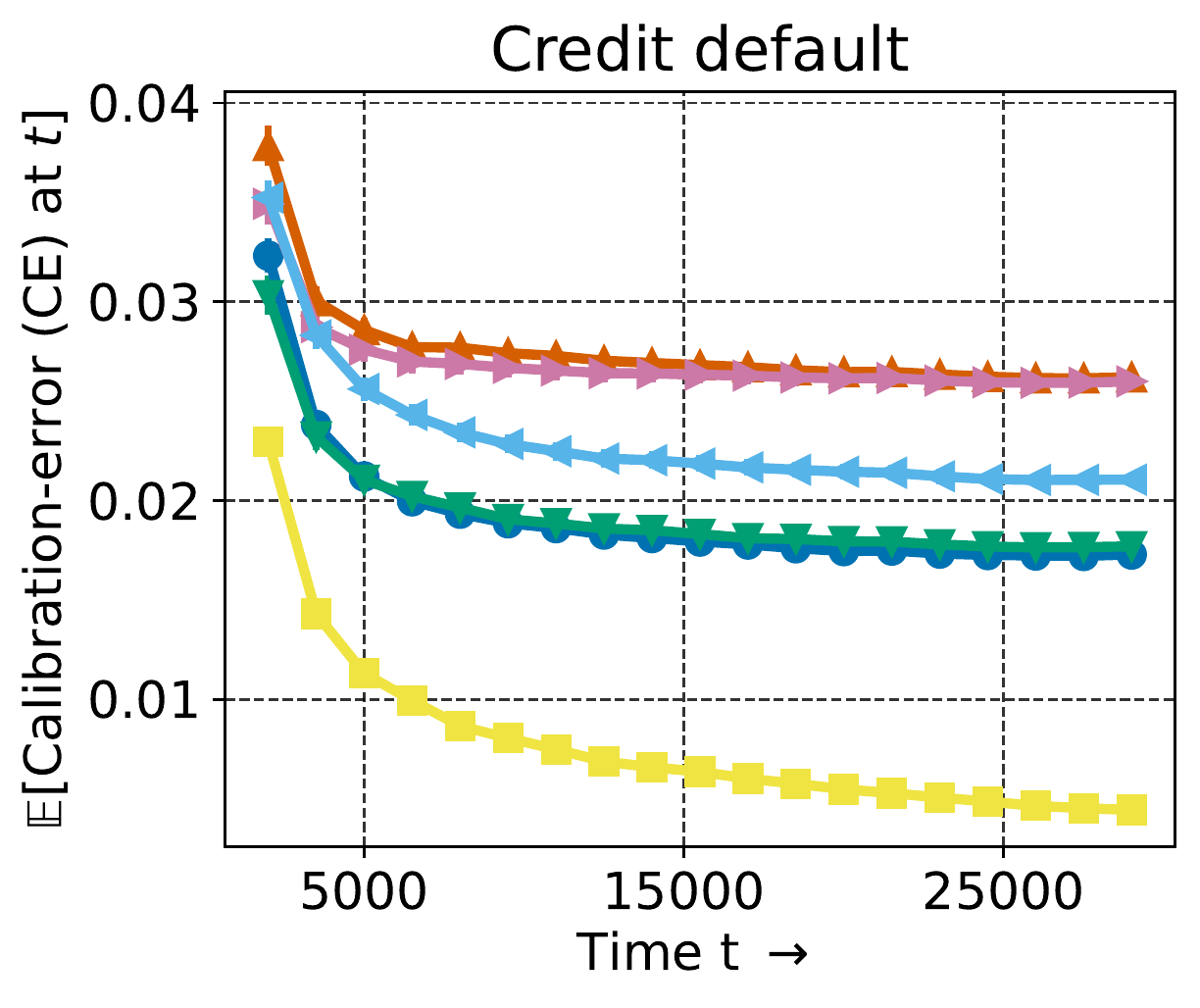}
        \includegraphics[trim=0 0 0 0, clip, width=0.24\linewidth]{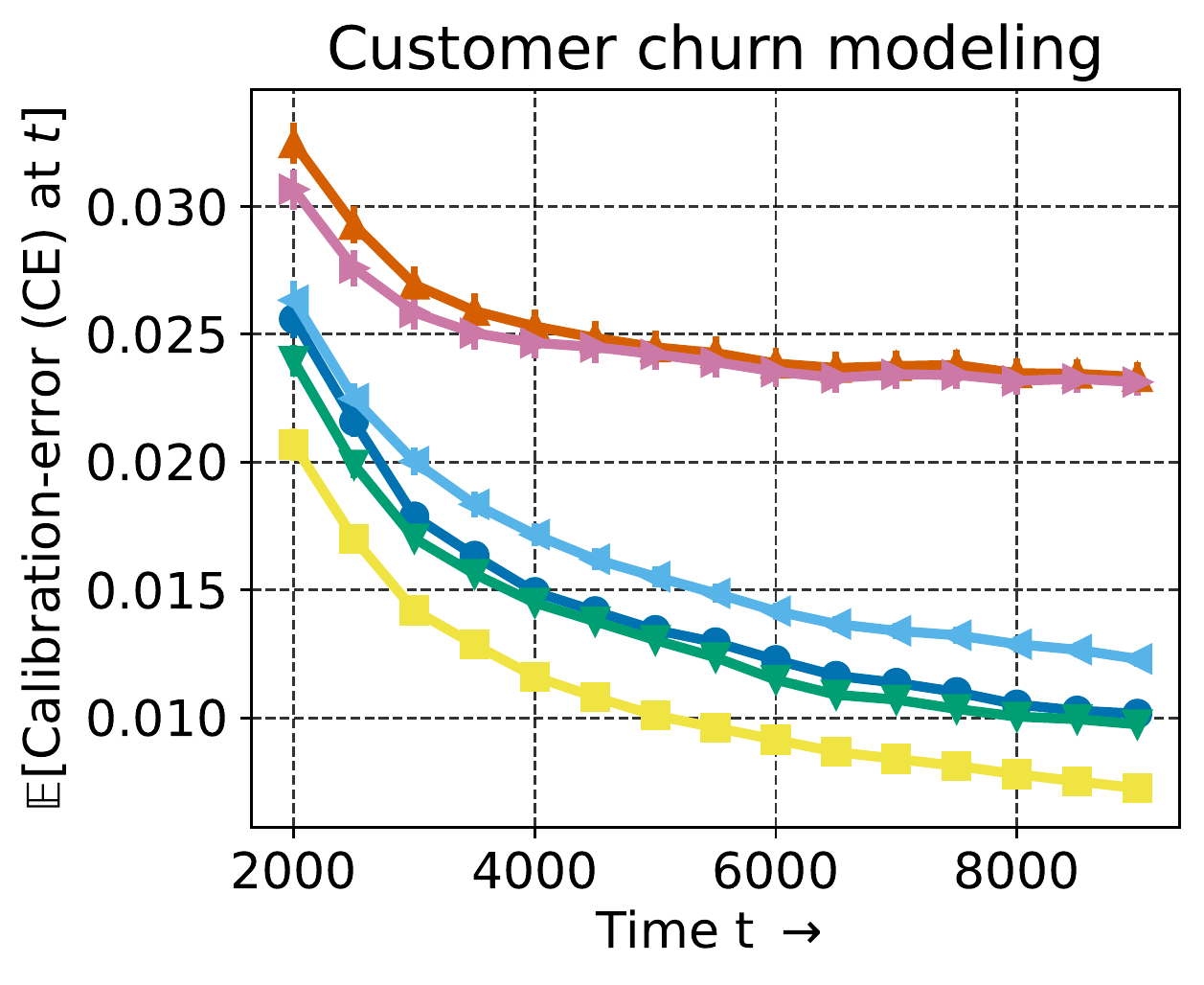}
        \includegraphics[trim=0 0 0 0, clip, width=0.24\linewidth]{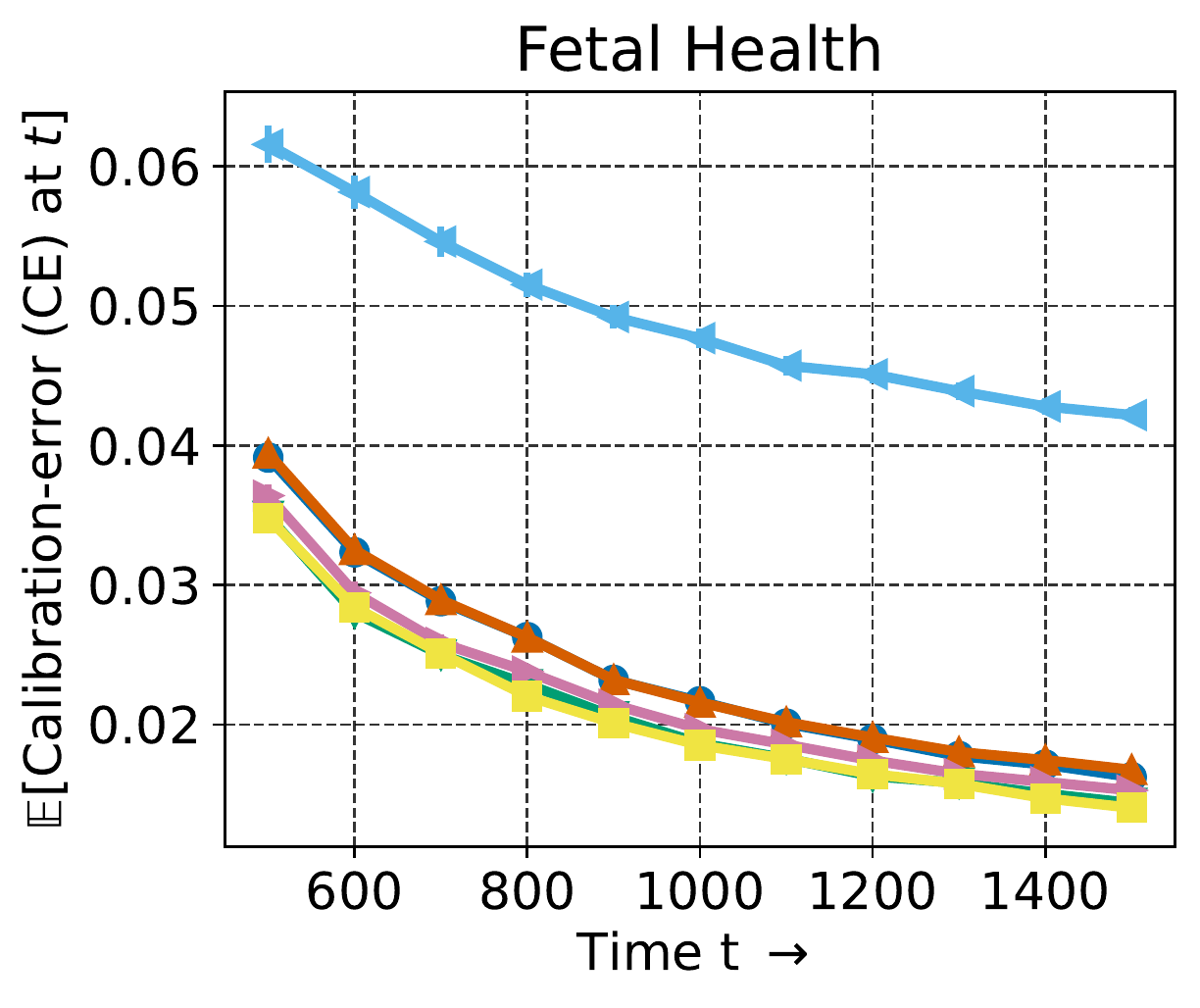}
        \caption{Calibration error for i.i.d.\ data streams.}
    \end{subfigure}
    \vskip .1cm
    \begin{subfigure}{\linewidth}
        \centering
        \includegraphics[trim=0 0 0 0, clip, width=0.24\linewidth]{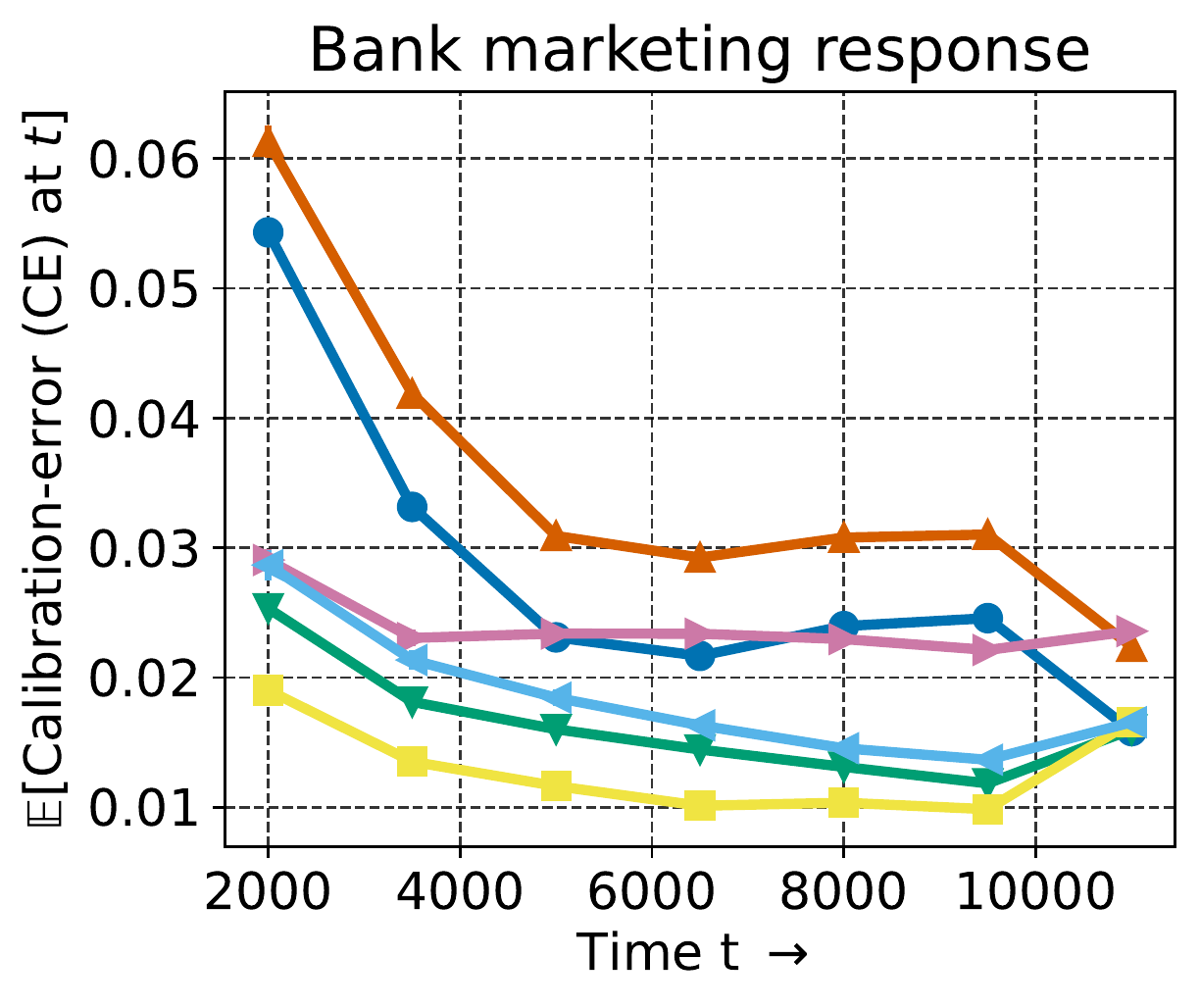}
        \includegraphics[trim=0 0 0 0, clip, width=0.24\linewidth]{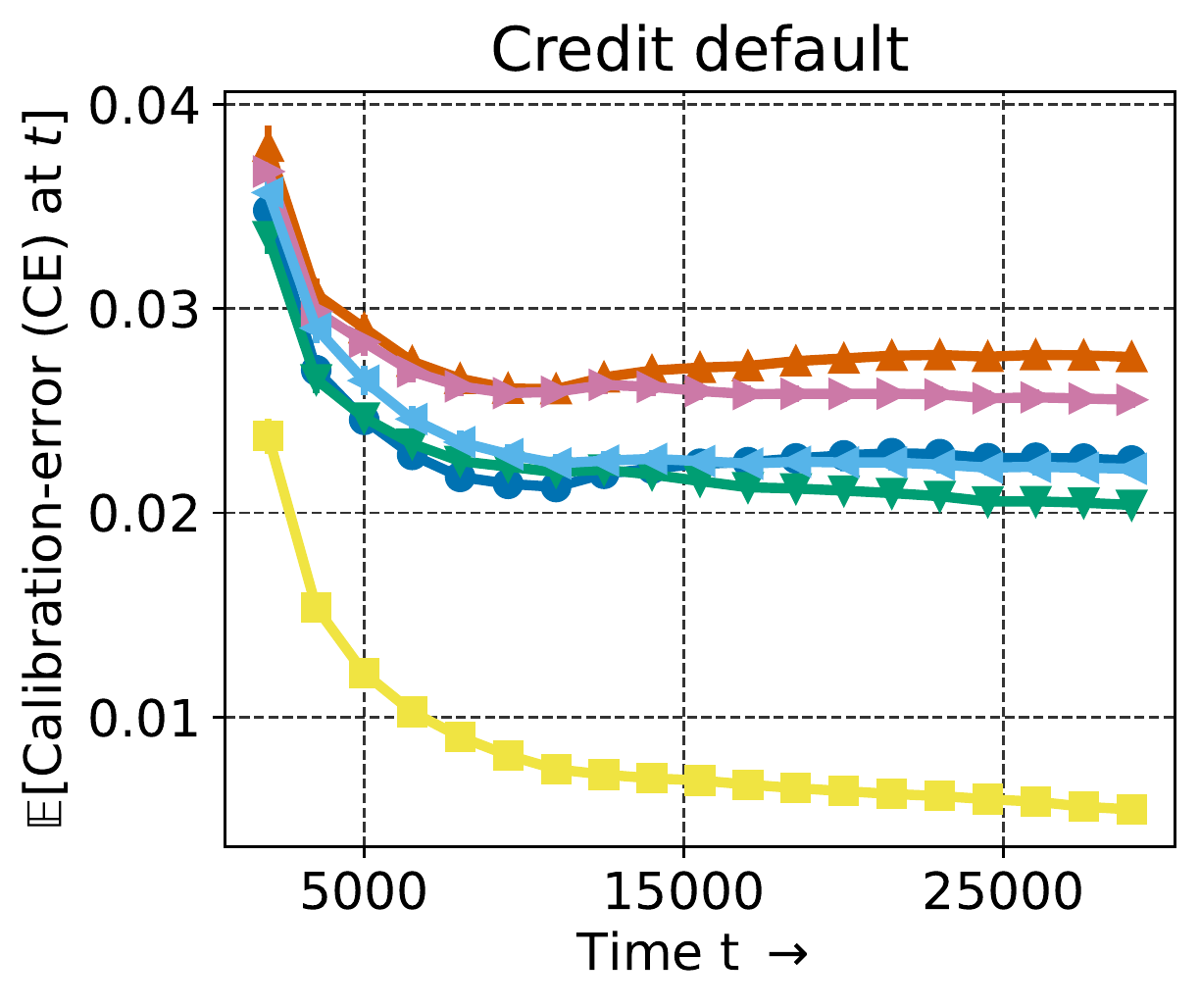}
        \includegraphics[trim=0 0 0 0, clip, width=0.24\linewidth]{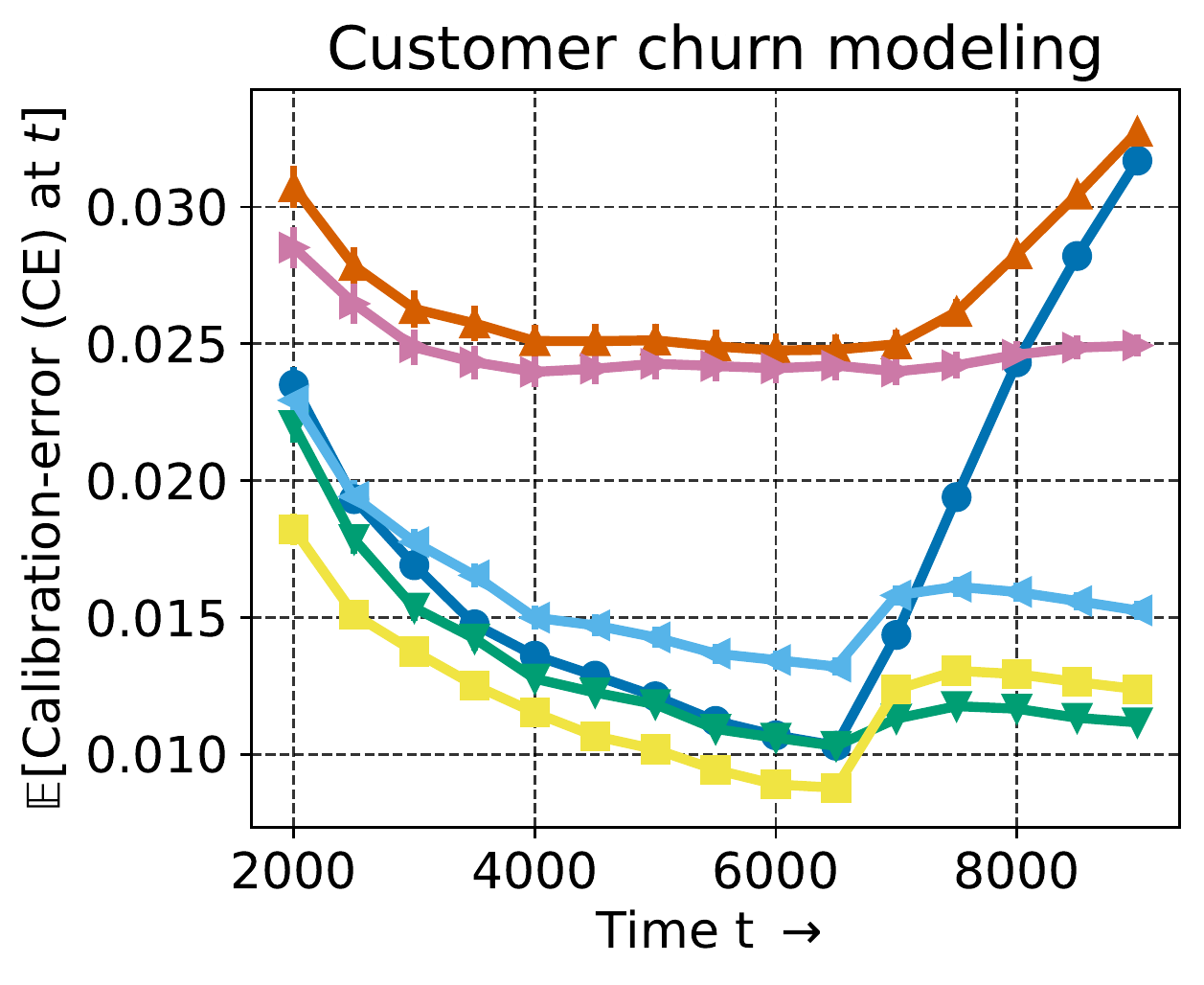}
        \includegraphics[trim=0 0 0 0, clip, width=0.24\linewidth]{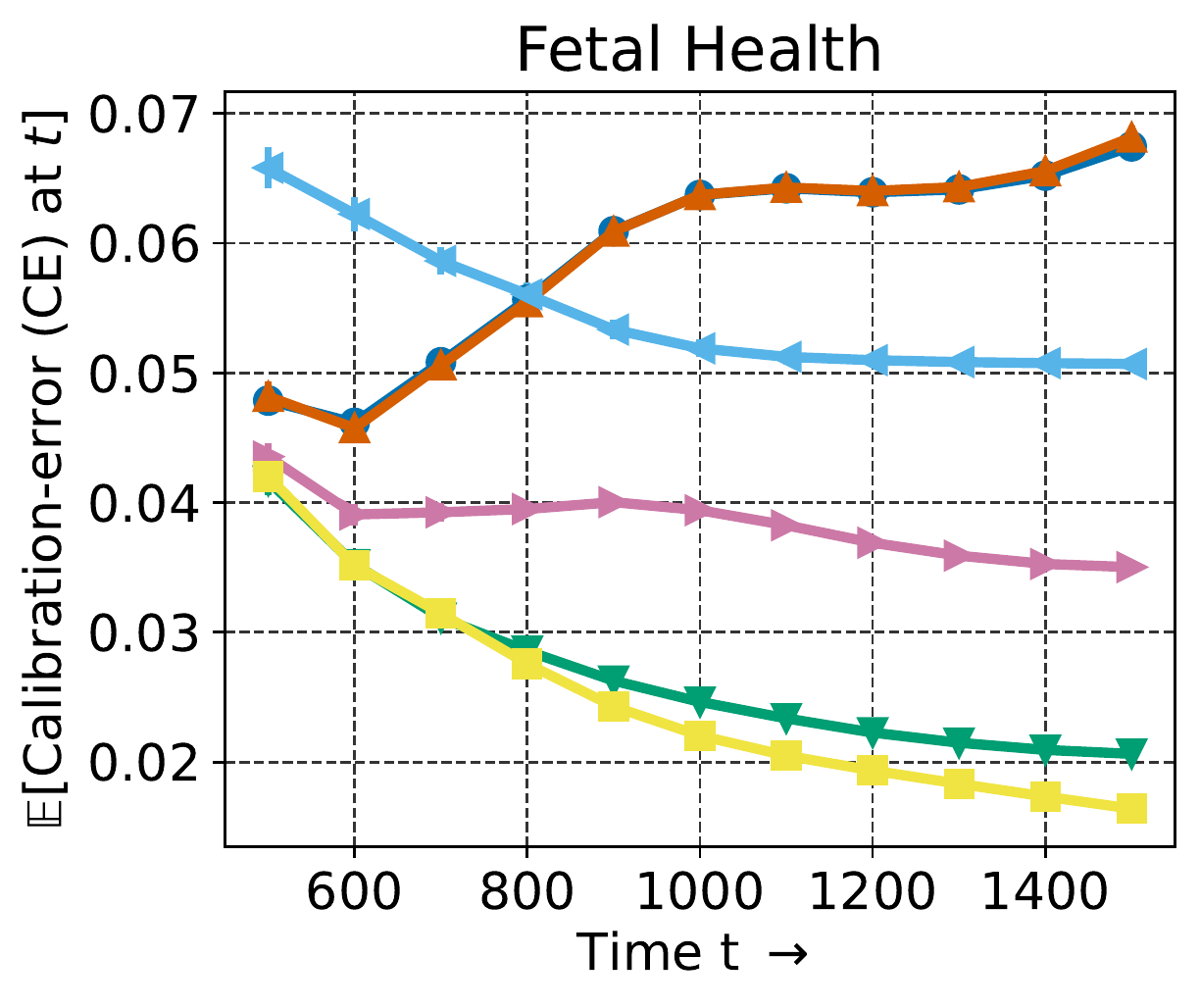}
        \caption{Calibration error for drifting data streams.}
    \end{subfigure}
            \caption{Performance of online beta scaling (OBS) and its calibeating variants on real datasets with and without distribution drift. OBS further improves upon OPS in most cases. In each plot, TOBS is the best-performing method.}
    \label{fig:real-data-tops}
\end{figure*}

\begin{figure*}[htp]
    \centering
    \begin{subfigure}{\linewidth}
    \centering
    \includegraphics[trim=0 10cm 0 0, clip, width=0.8\linewidth]{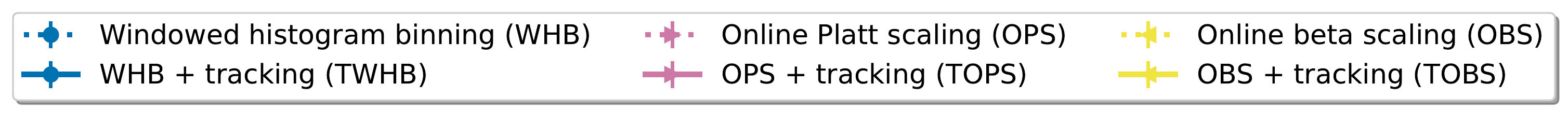}
    \end{subfigure}
    \begin{subfigure}{\linewidth}
        \centering
        \includegraphics[trim=0 0 0 0, clip, width=0.24\linewidth]{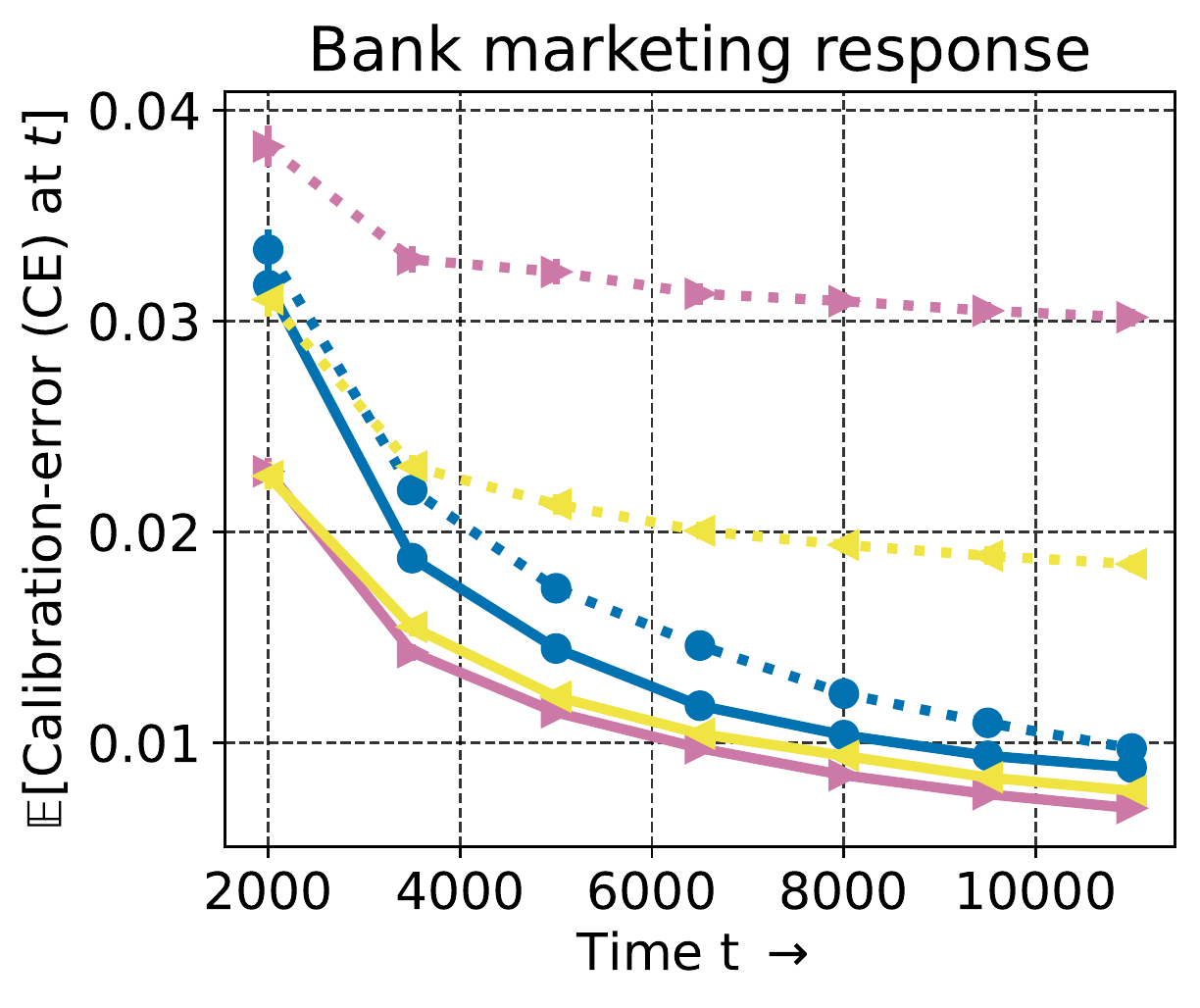}
        \includegraphics[trim=0 0 0 0, clip, width=0.24\linewidth]{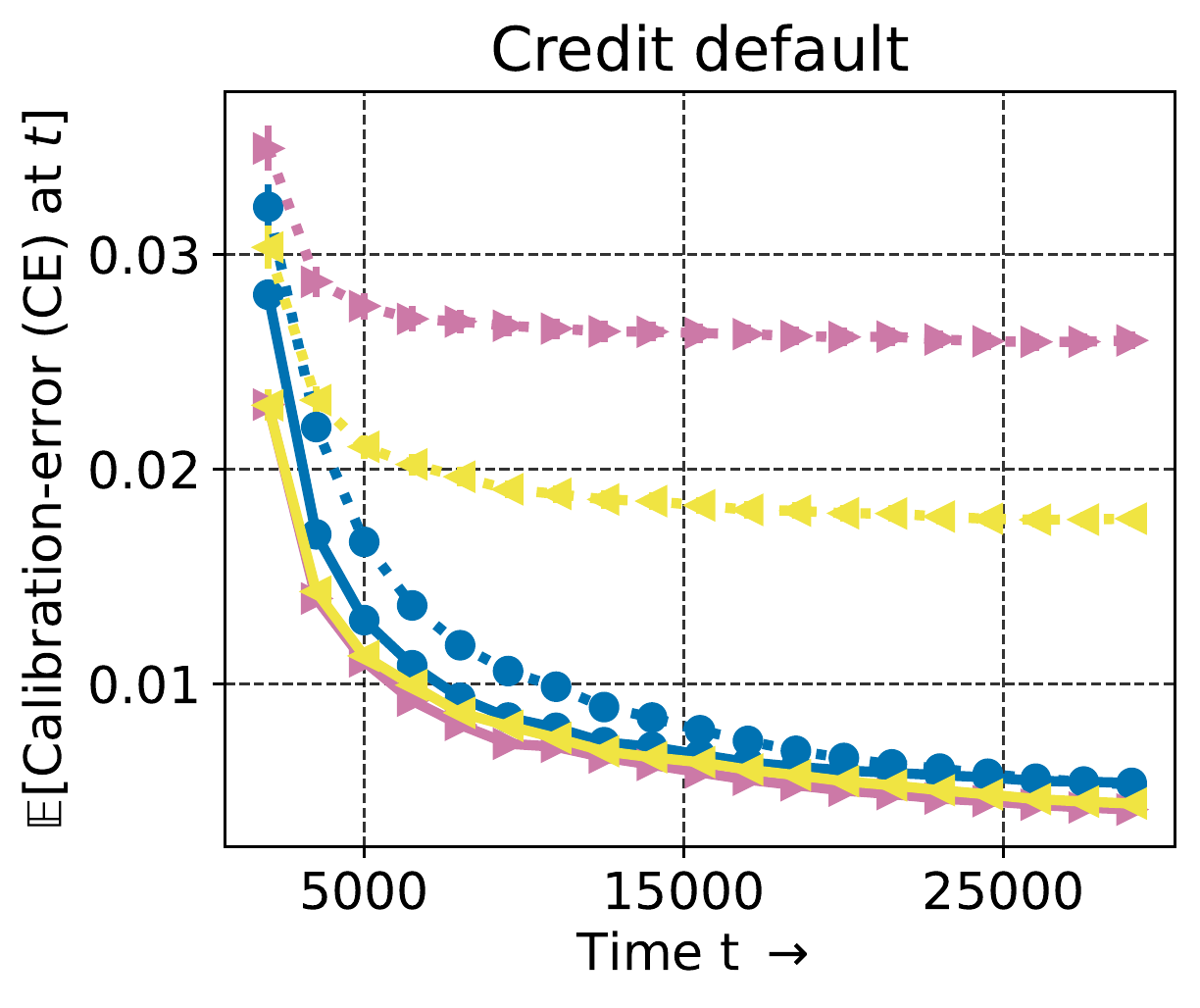}
        \includegraphics[trim=0 0 0 0, clip, width=0.24\linewidth]{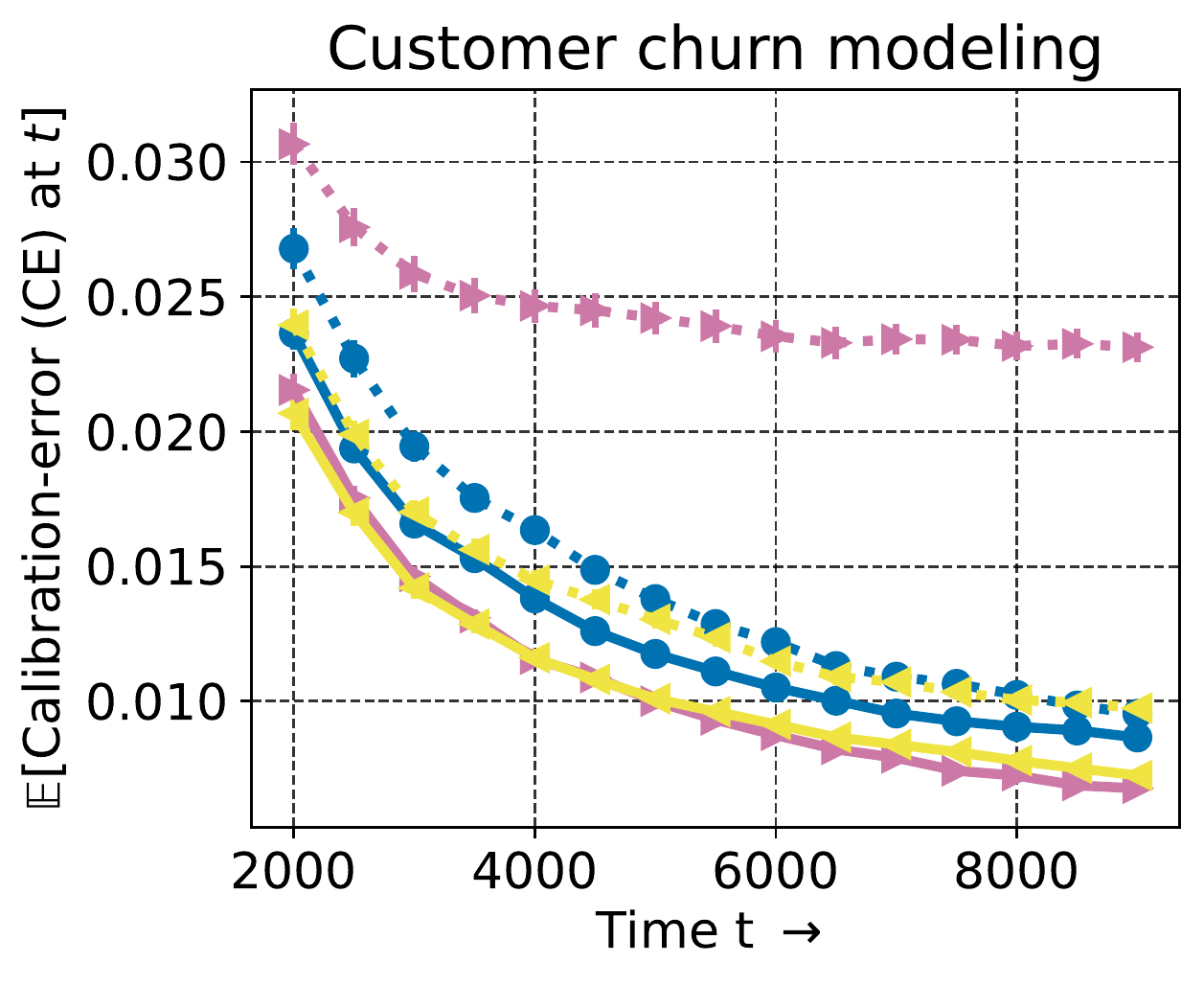}
        \includegraphics[trim=0 0 0 0, clip, width=0.24\linewidth]{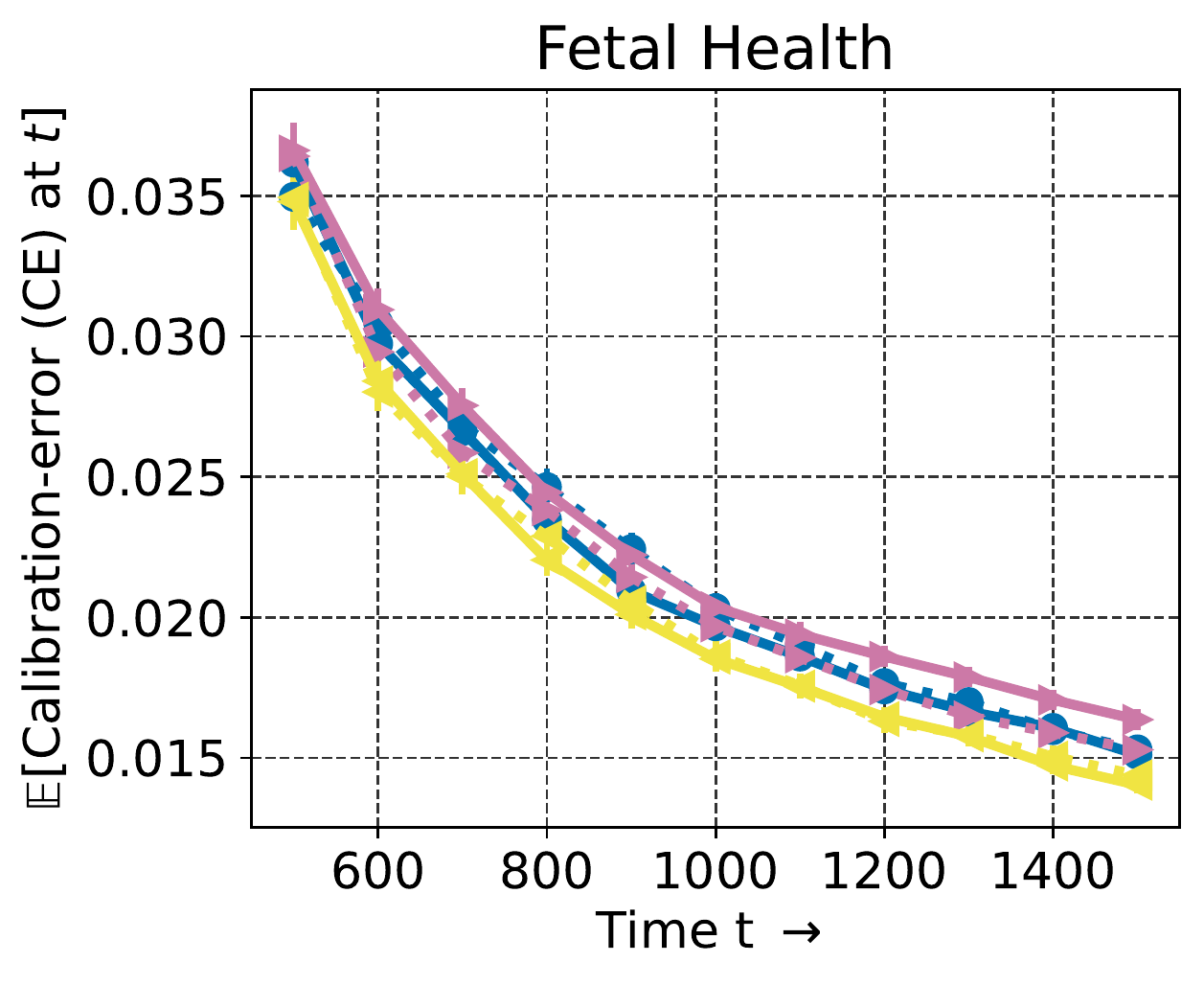}
        \caption{Calibration error for i.i.d.\ data streams.}
    \end{subfigure}
    \vskip .1cm
    \begin{subfigure}{\linewidth}
        \centering
        \includegraphics[trim=0 0 0 0, clip, width=0.24\linewidth]{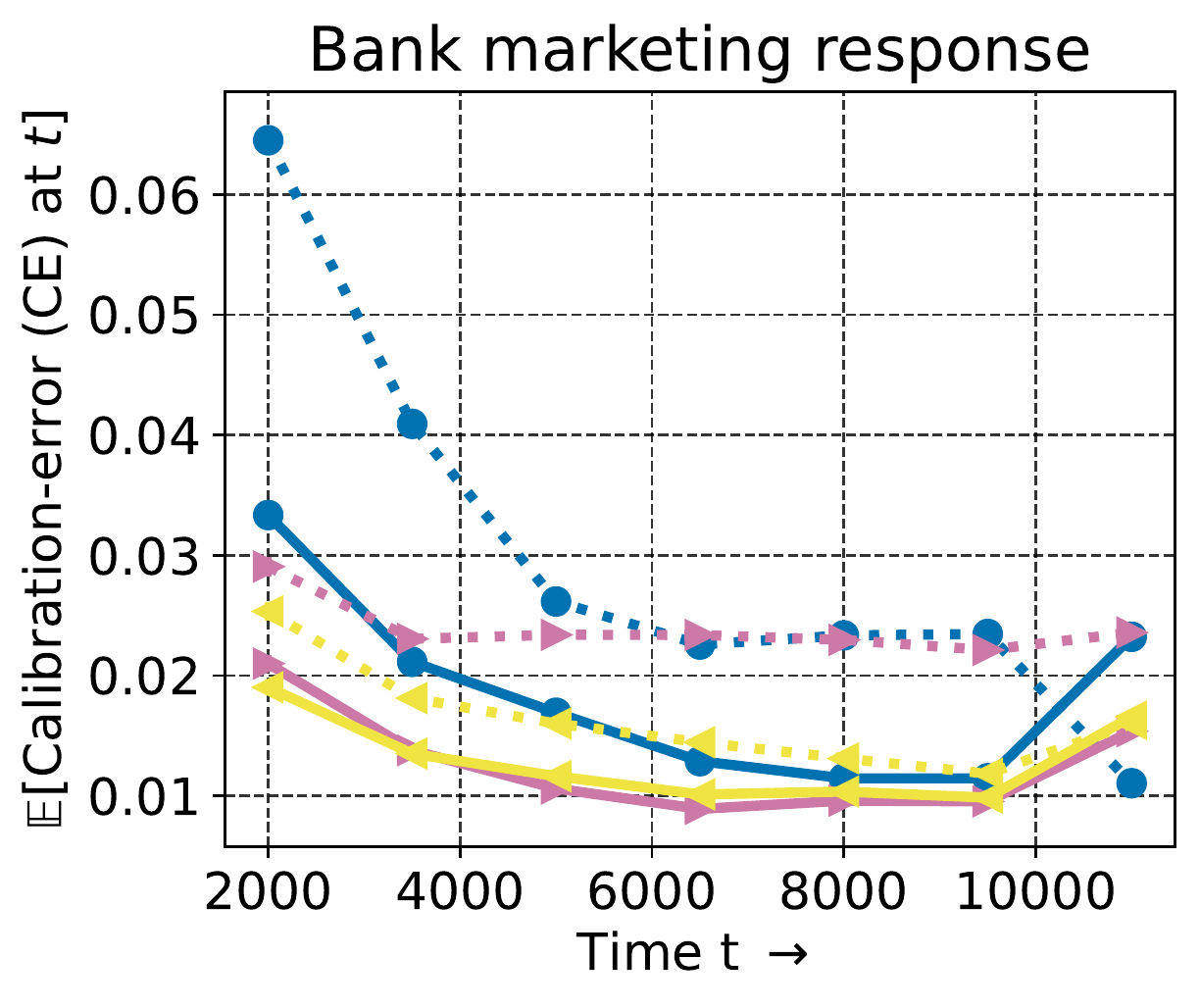}
        \includegraphics[trim=0 0 0 0, clip, width=0.24\linewidth]{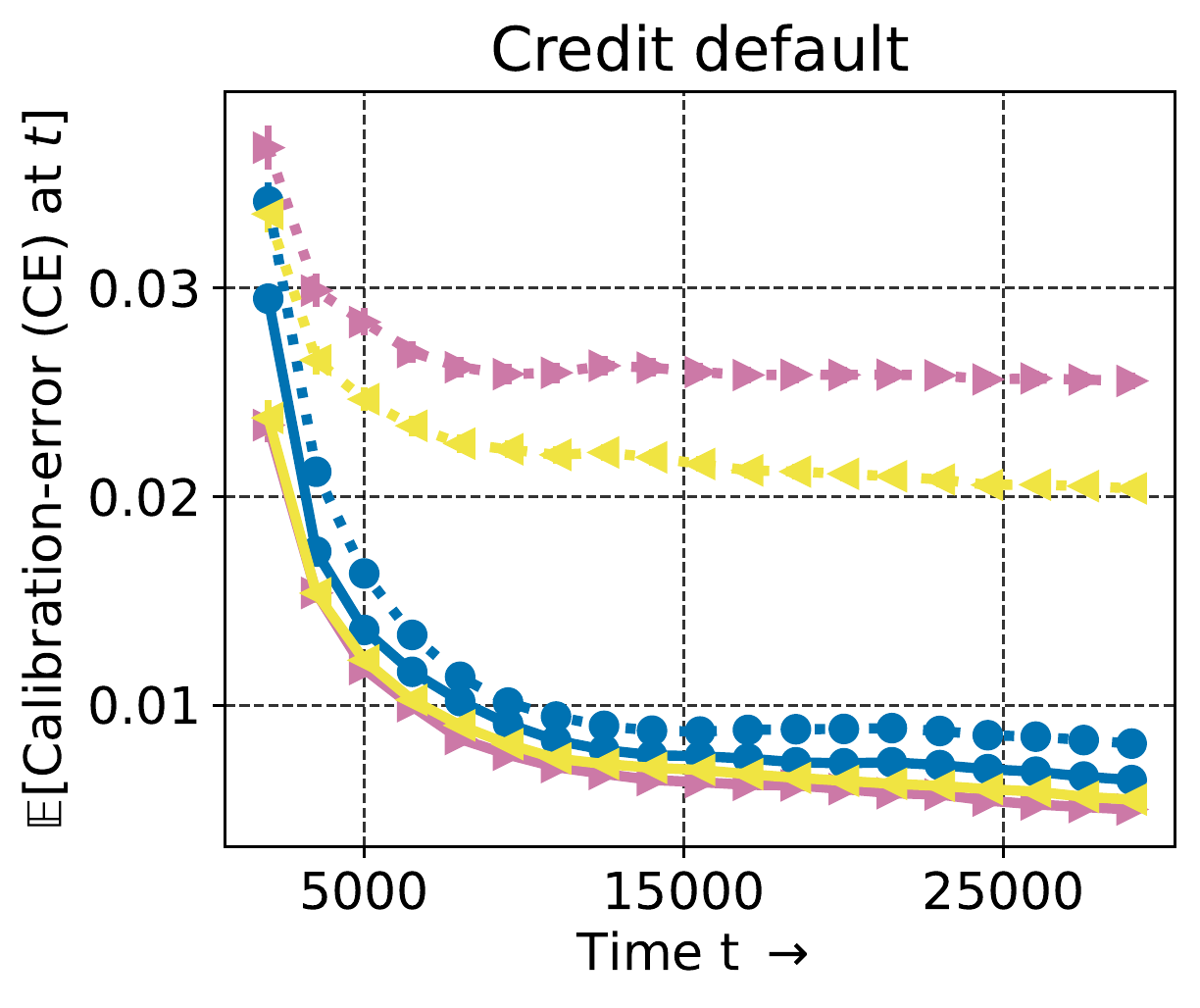}
        \includegraphics[trim=0 0 0 0, clip, width=0.24\linewidth]{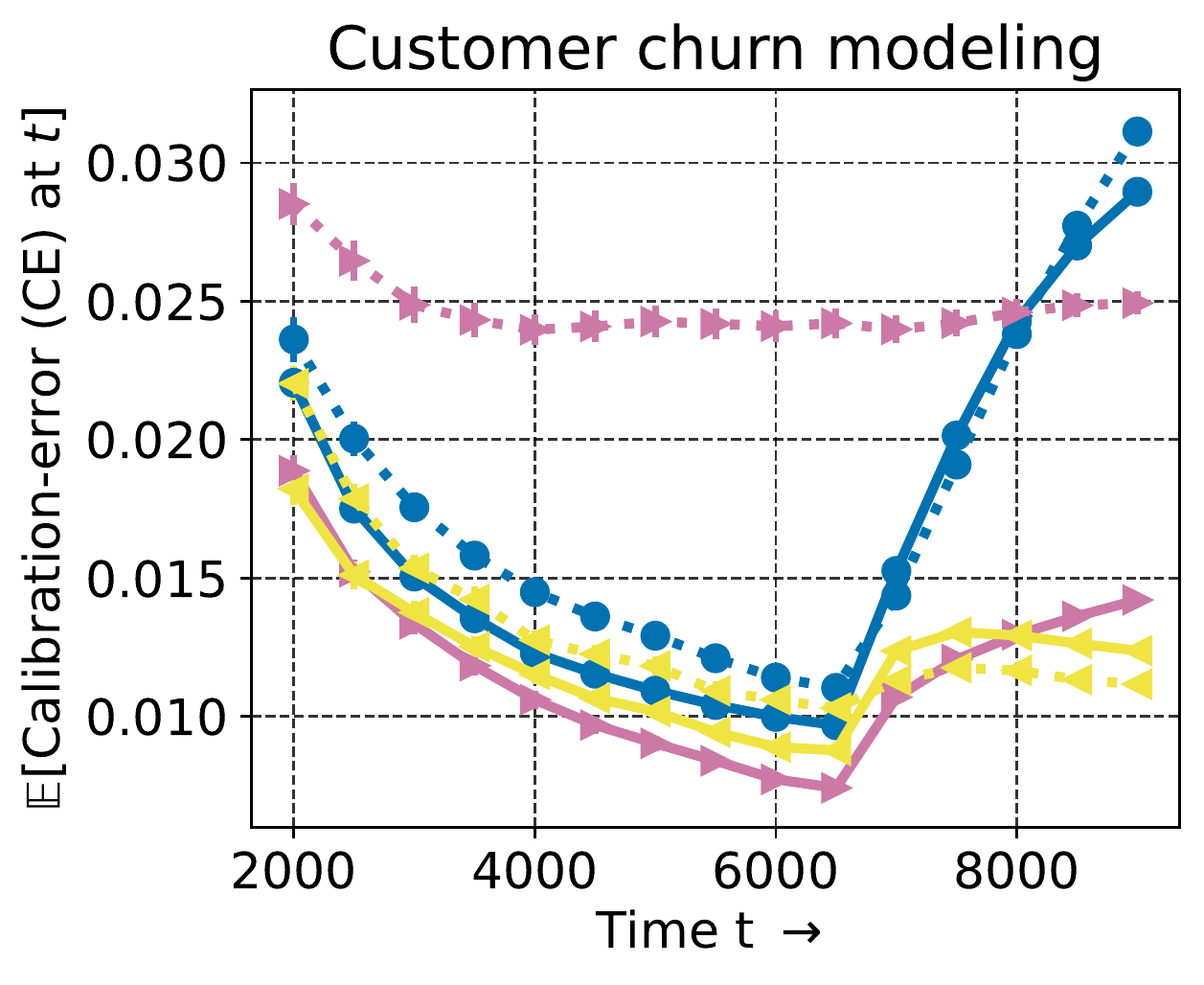}
        \includegraphics[trim=0 0 0 0, clip, width=0.24\linewidth]{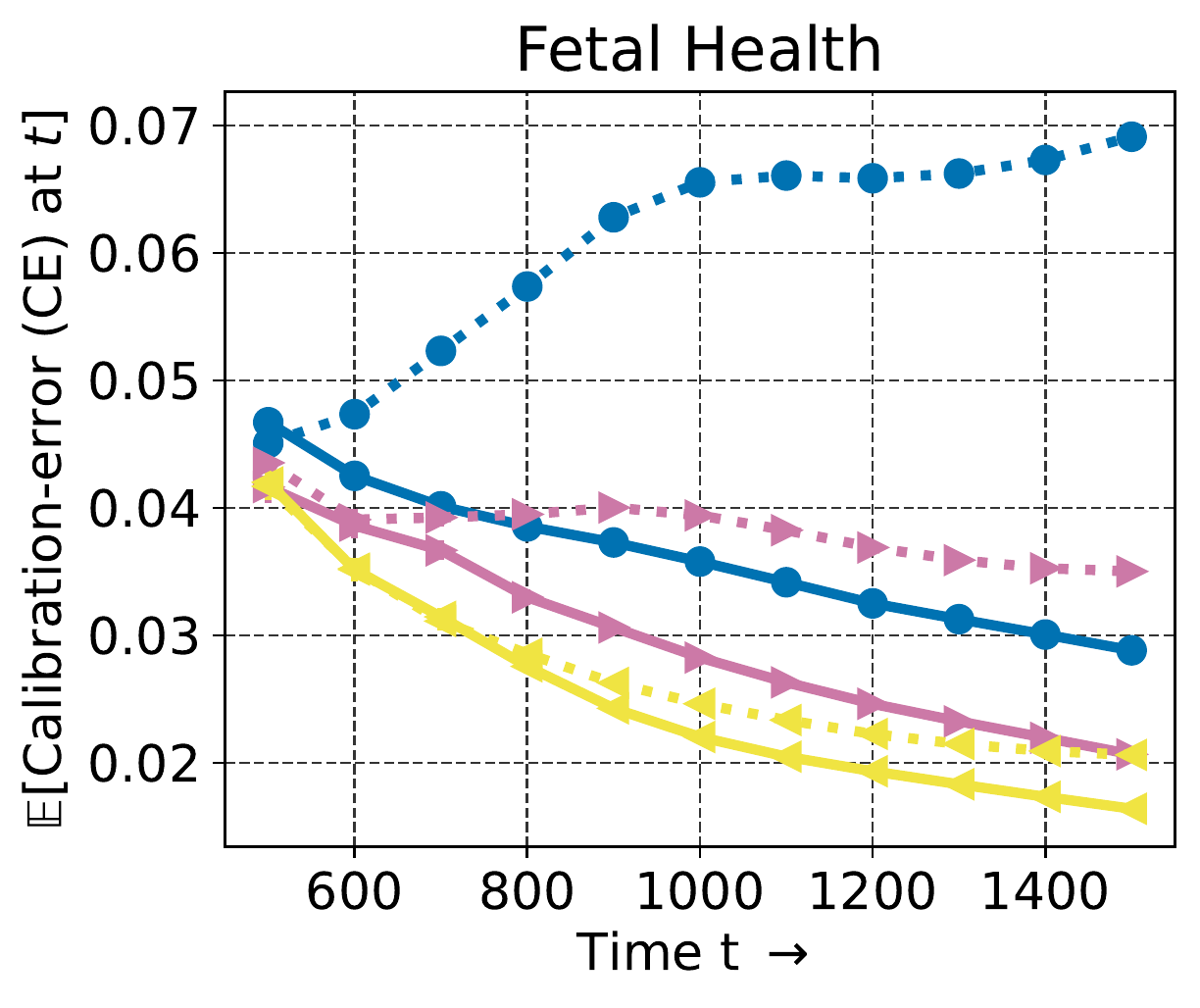}
        \caption{Calibration error for drifting data streams.}
    \end{subfigure}

        \caption{Comparing the performance of windowed histogram binning (WHB), online Platt scaling (OPS), online beta scaling (OBS), and their tracking variants on real datasets with and without distribution drifts. Among non-tracking methods (dotted lines), WHB performs well with i.i.d.\ data, while OBS performs well for drifting data. Among tracking methods (solid lines), TOBS and TOPS are the best-performing methods in every plot. Tracking typically does not improve WHB much since WHB is already a binning method (so tracking is implicit).}
    \label{fig:real-data-bs}
\end{figure*}

\clearpage

\begin{algorithm}[t]
\begin{algorithmic}
	\STATE {\bfseries Input: }$\mathcal{K} = \{(x, y): \|(x, y)\|_2 \leq 100\}$, time horizon $T$, and initialization parameter $(a_1^\ops, b_1^\ops) = (1, 0) =: \theta_1 \in \mathcal{K}$\;
        \STATE {\bfseries Hyperparameters:} $\gamma = 0.1$, $\rho = 100$
    \STATE Set $A_0 = \rho \mathbf{I}_2$
    \FOR{$t=1$ {\bfseries to} $T$}
    \STATE Play $\theta_t$, observe log-loss $l(m^{\theta_t}(f(\x_t)), y_t)$ and its gradient $\nabla_t := \nabla_{\theta_t}l(m^{\theta_t}(f(\x_t)), y_t)$
    \STATE $A_t = A_{t-1} + \nabla_t \nabla_t^\intercal$
    \STATE Newton step: $\widetilde{\theta}_{t+1} = \theta_t - \frac{1}{\gamma} A_t^{-1} \nabla_t$
    \STATE Projection: $(a_{t+1}^\ops, b_{t+1}^\ops) = \theta_{t+1} = \argmin_{\theta \in \mathcal{K}} (\widetilde{\theta}_{t+1}-\theta)^\intercal A_t(\widetilde{\theta}_{t+1}-\theta)$
    \ENDFOR
    \end{algorithmic}
 	\caption{Online Newton Step for OPS (based on \citet[Algorithm 12]{hazan2016introduction})} 
  \label{alg:ops-ons}
\end{algorithm}

\section{Online beta scaling}

\label{appsec:beta-scaling}
This is an extended version of Section~\ref{sec:beta-scaling}, with some repetition but more details. A recalibration method closely related to Platt scaling is beta scaling \citep{kull2017beyond}. The beta scaling mapping $m$ has three parameters $(a, b, c) \in \Real^3$, and corresponds to a sigmoid transform over  two pseudo-features derived from $f(\x)$: $\log(f(\x))$ and $\log(1-f(\x))$:
\begin{equation}
    m^{a, b, c}(f(\x)) := \sigmoid(a\cdot\log(f(\x)) + b\cdot\log(1-f(\x)) + c).%
    \label{eq:bs-model-class}
\end{equation}
Observe that enforcing $b = -a$ recovers Platt scaling since $\logit(z) = \log(z) - \log(1-z)$. The beta scaling parameters can be learnt following identical protocols as Platt scaling.
\begin{itemize}
    \item The \textbf{traditional method} is to optimize parameters by minimizing the log-likelihood (equivalently, log-loss) over a fixed held-out batch of points. 
    \item A \textbf{natural benchmark} for online settings is to update the parameters at some frequency (such as every 50 or 100 steps). At each update, the beta scaling parameters are set to the optimal value based on all data seen so far, and these parameters are used for prediction until the next update occurs. We call this benchmark windowed beta scaling (WBS); it is analogous to the windowed Platt scaling (WPS) benchmark considered in the main paper. 
    \item Our \textbf{proposed method} for online settings, called online Beta scaling (OBS), is to use a log-loss regret minimization procedure, similar to OPS. Analogously to \eqref{eq:regret-definition}, $R_T$ for OBS predictions $p_t^\obs = m^{a_t, b_t, c_t}(f(\x_t))$ is defined as
    \begin{equation}
        R_T(\text{OBS}) = \sum_{t=1}^T l(p_t^\obs, y_t) - \min_{(a, b, c) \in \Bcal}\sum_{t=1}^T l(m^{a,b, c}(f(\x_t)), y_t),
    \end{equation}  
    where $\Bcal := \{(a, b, c) \in \Real^3: a^2 + b^2 + c^2\leq B^2\}$ for some $B \in \Real$, and $l$ is the log-loss. We use online Newton step (Algorithm~\ref{alg:ops-ons}) to learn $(a_t, b_t, c_t)$, with the following initialization and hyperparameter values:  
    \begin{itemize}
        \item $\mathcal{K} = \{(x, y, z): \|(x, y, z)\|_2 \leq 100\}$, $(a_1^\obs, b_1^\obs, c_1^\obs) = (1, 1, 0)$;
        \item $\gamma = 0.1$, $\rho = 25$, $A_0 = \rho \mathbf{I}_3$.
    \end{itemize}
    These minor changes have to be made simply because the dimensionality changes from two to three. The empirical results we present shortly are based on an implementation with exactly these fixed hyperparameter values that do not change across the experiments (that is, we do not do any hyperparameter tuning). 
\end{itemize}
    
Due to the additional degree of freedom, beta scaling is more expressive than Platt scaling. In the traditional batch setting, it was demonstrated by \citet{kull2017beyond} that this expressiveness typically leads to better (out-of-sample) calibration performance. We expect this relationship between Platt scaling and beta scaling to hold for their windowed and online versions as well. We confirm this intuition through an extension of the real dataset experiments of Section~\ref{sec:experiments-real} to include WBS and OBS (Figure~\ref{fig:real-data-tops}). In the main paper we reported that the base model (BM) and fixed-batch Platt scaling model (FPS) perform the worst by a margin, so these lines are not reported again. We find that OBS performs better than both OPS and WBS, so we additionally report the performance of calibeating versions of OBS instead of OPS. That is, we replace OPS + tracking (TOPS) with OBS + tracking (TOBS), and OPS + hedging (HOPS) with OBS + hedging (HOBS).

A regret bound similar to Theorem~\ref{thm:ops-regret} can be derived for OBS by instantiating ONS and AIOLI regret bounds with $d=3$ (instead of $d=2$ as done for OPS). The calibeating theorems (\ref{thm:tops-sharpness-guarantee} and \ref{thm:cops-calibration-guarantee}) hold regardless of the underlying expert, and so also hold for OBS.

\section{F99 online calibration method}
\label{appsec:f99}
We describe the F99 method proposed by \citet{foster1999proof}, and used in our implementation of HOPS (Section~\ref{sec:hops}). The description is borrowed with some changes from \citet{gupta2022faster}. Recall that the F99 forecasts are the mid-points of the $\epsilon$-bins \eqref{eq:B-bins}: $B_1 = [0, \epsilon), B_2 = [\epsilon, 2\epsilon), \ldots, B_m = [1-\epsilon, 1]$. 
For $b \in [m] := \{1, 2, \ldots, m\}$ and $t \geq 1$, define:
\begin{align*}
    \text{(mid-point of  $B_b$) }~& m_b = (b-0.5)/m = b\epsilon - \epsilon/2,\\
    \text{(left end-point of  $B_b$) }~& l_b = (b-1)/m = (b-1)\epsilon, \\
    \text{(right end-point of  $B_b$) }~& r_b = b/m = b\epsilon,\\
\end{align*}  
F99 maintains some quantities as more data set is observed and forecasts are made. These are, 
\begin{align*}
    \text{(frequency of forecasting $m_b$) }~& N^t_b = \abs{\{\indicator{p_s = m_b}: s \leq t\}},\\
    \text{(observed average when $m_b$ was forecasted) }~& p^t_b =\begin{cases} \sum_{s=1}^{t} y_s \indicator{p_s = m_b}/N_b^t ~~\text{ if } N^t_b > 0\\ \text{$m_b$~~~~~~~~~~~~~~~~~~~~~~~~~~~~~if $N^t_b = 0$},\end{cases}\\
    \text{(deficit) }~& d_b^t = l_b - p_b^t,\\
    \text{(excess) }~& e_b^t = p_b^t - r_b.
\end{align*}
The terminology ``deficit" is used to indicate that $p_b^t$ is smaller $l_b$ similarly. ``Excess" is used to indicate that $p_b^t$ is larger than $r_b$ similarly. The F99 algorithm is as follows. Implicit in the description is computation of the quantities defined above.

\noindent\fbox{%
    \parbox{\textwidth}{%
    \begin{center}
    \textbf{F99: the online adversarial calibration method of \citet{foster1999proof}}
    \end{center}
    \begin{itemize}
        \item 
    At time $t = 1$, forecast $p_{1} = m_1$.  
    \item At time $t+1$ ($t \geq 1)$, if
\begin{align*}
    \text{condition A: there exists an $b \in [m]$ such that $d_b^t \leq 0$ and $e_b^t \leq 0$,}
\end{align*} 
is satisfied, forecast $p_{t+1} = m_b$ for any $i$ that verifies condition A. Otherwise, 
\begin{align*}
\text{condition B: there exists a $b \in [m-1]$ such that $e_b^t > 0$ and $d_{b+1}^t > 0$,}
\end{align*}
must be satisfied (see Lemma 5 \citep{gupta2022faster}). For any index $b$ that satisfies condition B, forecast 
\begin{equation*}
    p_{t+1} =
    \left\{
    	\begin{array}{ll}
    		m_b  & \mbox{ with probability } \frac{d_{b+1}^t}{d_{b+1}^t + e_b^t} \\
    		m_{b+1} & \mbox{ with probability } \frac{e_b^t}{d_{b+1}^t + e_b^t}.
    	\end{array}
    \right.
\end{equation*}
These randomization probabilities are revealed before $y_{t+1}$ is set by the agent that is generating outcomes, but the actual $p_t$ value is drawn after $y_{t+1}$ is revealed. 
\end{itemize}
    }%
}

\begin{figure}[h]
    \centering
    \includegraphics[width=0.8\linewidth]{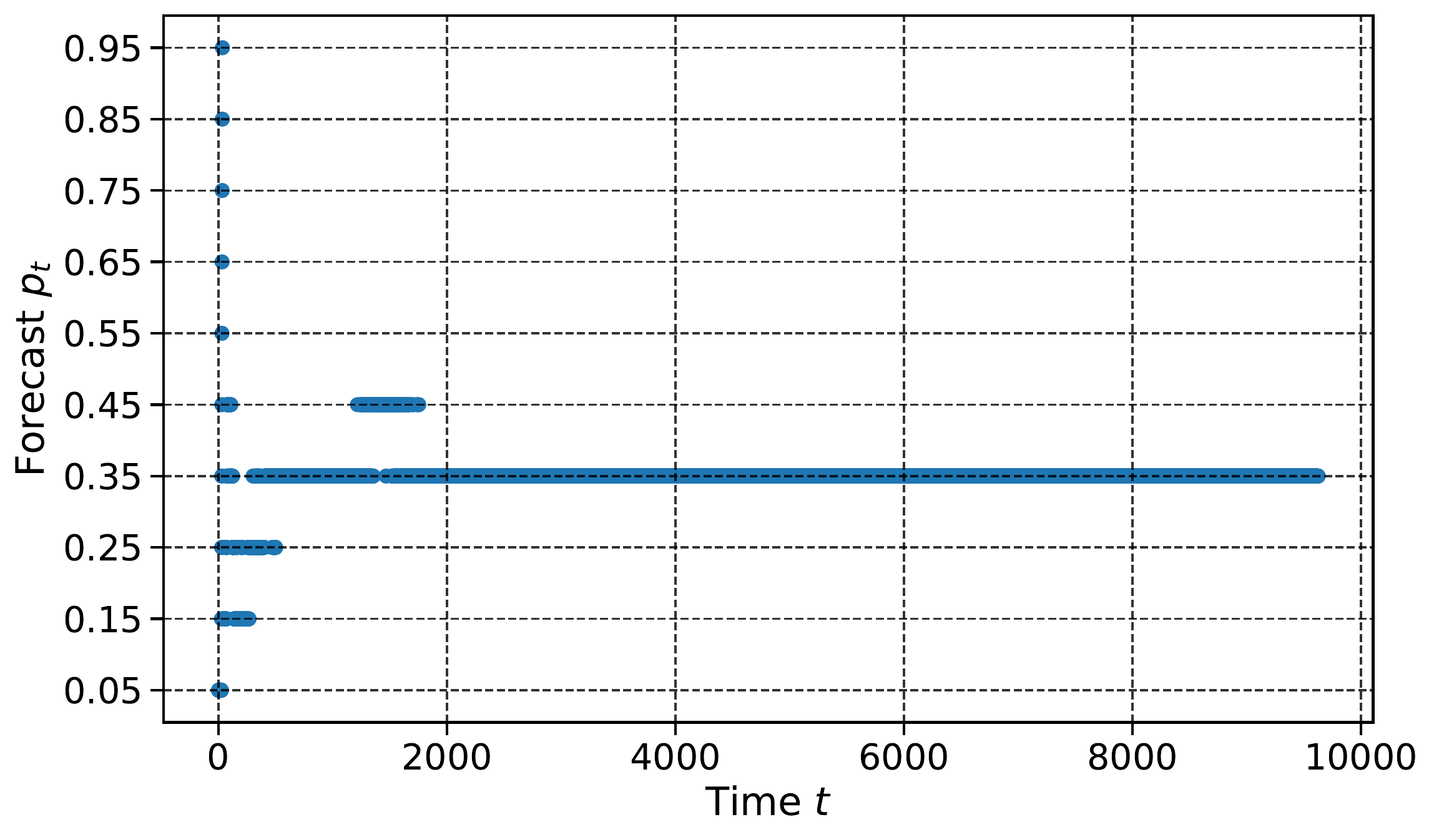}
    \caption{\citet{foster1999proof}'s $\epsilon$-calibrated forecaster on Pittsburgh's hourly rain data (2008-2012). The forecaster makes predictions on the grid $(0.05, 0.15, \ldots, 0.95)$. In the long run, the forecaster starts predicting $0.35$ for every instance, closely matching the average number of instances on which it rained ($\approx 0.37$).}
    \label{fig:foster99-pfg-rain}
\end{figure}

\section{Forecasting climatology to achieve calibration}
\label{appsec:climatology-experiment}
Although Foster and Vohra's result \citeyearpar{foster1998asymptotic} guarantees that calibrated forecasting is possible against adversarial sequences, this does not immediately imply that the forecasts are useful in practice. To see this, consider an alternating outcome sequence, $y_t = \indicator{t \text{ is odd }}$. The forecast $p_t = \indicator{t \text{ is odd }}$ is calibrated and perfectly accurate. The forecast $p_t = 0.5$ (for every $t$) is also calibrated, but not very useful.

Thus we need to assess how a forecaster guaranteed to be calibrated for adversarial sequences performs on real-world sequences. In order to do so, we implemented the F99 forecaster (described in Appendix~\ref{appsec:f99}), on Pittsburgh's hourly rain data from January 1, 2008, to December 31, 2012. The data was obtained from \url{ncdc.noaa.gov/cdo-web/}. All days on which the hourly precipitation in inches (HPCP) was at least $0.01$ were considered as instances of $y_t = 1$. There are many missing rows in the data, but no complex data cleaning was performed since we are mainly interested in a simple illustrative simulation. F99 makes forecasts on an $\epsilon$-grid with $\epsilon = 0.1$: that is, the grid corresponds to the points $(0.05, 0.15, \ldots, 0.95)$. We observe (Figure~\ref{fig:foster99-pfg-rain}) that after around $2000$ instances, the forecaster \emph{always} predicts $0.35$. This is close to the average number of instances that it did rain which is approximately $0.37$ (this long-term average is also called \emph{climatology} in the meteorology literature). Although forecasting climatology can make the forecaster appear calibrated, it is arguably not a useful prediction given that there exist expert rain forecasters who can make sharp predictions for rain that change from day to day.

\section{Proofs}
\label{appsec:proofs}

\begin{proof}[Proof of Theorem~\ref{thm:ops-regret}]
The regret bounds for ONS and AIOLI depend on a few problem-dependent parameters. %
\begin{itemize}
    \item The dimension $d = 2$.
    \item The radius of the reference class $B$. 
    \item Bound on the norm of the gradient, which for logistic regression is also the radius of the space of input vectors. Due to the assumption on $f(\x_t)$, the norm of the input is at most $\sqrt{\logit(0.01)^2 + 1^2} = \sqrt{\logit(0.99)^2 + 1^2} \leq 5$. 
\end{itemize}

The AIOLI bound \eqref{eq:aioli-regret} follows from Theorem 1, equation (4) of \citet{jezequel2020efficient}, setting $d=2$ and $R=10$.

The ONS bound \eqref{eq:ons-regret} follows from Theorem 4.5 of \citet{hazan2016introduction}, plugging in $G = 5$, $D = 2B$, and $\alpha = e^{-B}$ which is the known exp-concavity constant of the logistic loss over a ball of radius $B$ \citep{foster2018logistic}. 

\end{proof}

In writing the proofs of the results in Section~\ref{sec:calibeating}, we will use an object closely connected to sharpness called refinement. For a sequence of forecasts $p_{1:T}$ and outcome sequence $y_{1:T}$, the refinement $\Rcal$ is defined as \begin{equation}
\vspace{-0.1cm}
    \Rcal(p_{1:T}) := \frac{1}{T}\sum_{b=1}^m N_b\cdot \widehat{y}_b(1-\widehat{y}_b),\label{eq:refinement-defn}
\end{equation}
where $\widehat{y}_b$ is the average of the outcomes in every $\epsilon$-bin; see the beginning of Section~\ref{subsec:sharpness-defn} where sharpness is defined. The function $x (\in [0,1]) \mapsto x(1-x)$ is minimized at the boundary points $\{0, 1\}$ and maximized at $1/2$. Thus refinement is lower if $\widehat{y}_b$ is close to $0$ or $1$, or in other words if the bins discriminate points well. This is captured formally in the following (well-known) relationship between refinement and sharpness. 

\begin{lemma}[Sharpness-refinement lemma]\label{lemma:shp-ref}
    For any forecast sequence $p_{1:T}$, the refinement $\Rcal$ defined in \eqref{eq:refinement-defn} and the sharpness $\shp$ defined in \eqref{eq:sharpnes-definition} are related as: 
    \begin{equation*}
        \Rcal(p_{1:T}) = \Bar{y}_T - \shp(p_{1:T}), 
    \end{equation*}
    where $\Bar{y}_T = \frac{1}{T}\sum_{t=1}^T y_t$. 
\end{lemma}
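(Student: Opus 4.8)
The plan is to prove the identity by summing the two quantities term-by-term and recognizing that the $\epsilon$-bins partition the index set $\{1,\dots,T\}$. I would work directly from the definitions, so the argument is a short algebraic manipulation rather than anything requiring inequalities or limiting arguments.

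First, I would add $\Rcal(p_{1:T})$ and $\shp(p_{1:T})$ and combine the per-bin terms. Since $\widehat{y}_b(1-\widehat{y}_b) + \widehat{y}_b^2 = \widehat{y}_b$, this gives $\Rcal(p_{1:T}) + \shp(p_{1:T}) = \frac{1}{T}\sum_{b=1}^m N_b\,\widehat{y}_b$. The next key step is to observe that $N_b\,\widehat{y}_b = \sum_{t \le T,\, p_t \in B_b} y_t$: this holds by the definition of $\widehat{y}_b$ when $N_b > 0$, and it holds trivially when $N_b = 0$ (both sides equal $0$, using the convention $\widehat{y}_b = 0$ for empty bins, so the empty-bin convention causes no issue).

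Finally, because the bins $B_1, \dots, B_m$ in \eqref{eq:B-bins} partition $[0,1]$, every forecast $p_t \in [0,1]$ falls into exactly one bin, so the double sum over bins and indices telescopes into a single sum over all $t$: $\sum_{b=1}^m \sum_{t \le T,\, p_t \in B_b} y_t = \sum_{t=1}^T y_t = T\,\Bar{y}_T$. Substituting back yields $\Rcal(p_{1:T}) + \shp(p_{1:T}) = \Bar{y}_T$, which is exactly the claimed $\Rcal(p_{1:T}) = \Bar{y}_T - \shp(p_{1:T})$.

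There is no real obstacle here; the only point that requires a moment's care is confirming that the empty-bin convention (setting $\widehat{y}_b = 0$ when $N_b = 0$) is consistent across both definitions so that empty bins contribute zero to every sum involved, and that the bins genuinely partition $[0,1]$ so no outcome $y_t$ is double-counted or omitted. Both are immediate from the setup in \Cref{subsec:sharpness-defn} and \eqref{eq:B-bins}.
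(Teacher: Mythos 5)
Your proposal is correct and follows essentially the same argument as the paper's proof: both rest on the identity $\widehat{y}_b(1-\widehat{y}_b) = \widehat{y}_b - \widehat{y}_b^2$ per bin and the observation that $\sum_{b} N_b \widehat{y}_b = \sum_{t=1}^T y_t$ because the bins partition the forecasts. Your explicit check of the empty-bin convention is a minor (harmless) addition that the paper leaves implicit.
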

\begin{proof}
    Observe that \[\Rcal(p_{1:T}) = \frac{1}{T}\sum_{b=1}^B N_b \widehat{y}_b - \frac{1}{T}\sum_{b=1}^B N_b \widehat{y}_b^2 = \frac{1}{T}\sum_{b=1}^B N_b \widehat{y}_b  - \shp(p_{1:T}).\] 
    The final result follows simply by noting that \[\sum_{b=1}^B N_b \widehat{y}_b = \sum_{b=1}^B\roundbrack{\sum_{t\leq T, p_t \in B_b} y_t} = \sum_{t=1}^T y_t.\] 
\end{proof}

We now state a second lemma, that relates $\Rcal$ to the Brier-score $\mathcal{BS}$ defined as
\begin{equation}
    \mathcal{BS}(p_{1:T}) := \frac{\sum_{t = 1}^T (y_t-p_t)^2}{T}.\label{eq:bs-defn}
\end{equation}
Unlike $\Rcal$ and $\shp$, $\mathcal{BS}$ is not defined after $\epsilon$-binning. It is well-known (see for example equation (1) of FH23) that if refinement is defined without $\epsilon$-binning (or if the Brier-score is defined with $\epsilon$-binning), then refinement is at most the Brier-score defined above. Since we define $\Rcal$ defined with binning, %
further work is required to relate the two.
\begin{lemma}[Brier-score-refinement lemma]
    \label{lemma:bs-ref}
   For any forecast sequence $p_{1:T}$ and outcome sequence $y_{1:T}$, the refinement $\Rcal$ and the Brier-score $\mathcal{BS}$ are related as 
   \begin{equation}
       \Rcal(p_{1:T}) \leq \mathcal{BS}(p_{1:T}) + \frac{\epsilon^2}{4} + \epsilon,
   \end{equation}
   where $\epsilon$ is the width of the bins used to define $\Rcal$ \eqref{eq:B-bins}. 
\end{lemma}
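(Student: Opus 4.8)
The plan is to reduce the claim to a comparison between the true Brier-score $\mathcal{BS}(p_{1:T})$ and an $\epsilon$-binned Brier-score, in which each forecast $p_t$ is replaced by the midpoint $m_{b(t)}$ of the bin $B_{b(t)}$ containing it. The key observation is that refinement depends only on the bin memberships of the forecasts and the within-bin outcome averages $\widehat{y}_b$; snapping every $p_t$ to its bin-midpoint $m_{b(t)}$ changes neither the counts $N_b$ nor the averages $\widehat{y}_b$, so $\Rcal(p_{1:T})$ equals exactly the refinement of the snapped sequence. For the snapped sequence all forecasts falling in a given bin share a single value, so the $\epsilon$-bin grouping coincides with grouping by exact forecast value. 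Hence the well-known inequality quoted above (in its ``Brier-score defined with $\epsilon$-binning'' form) applies directly and yields $\Rcal(p_{1:T}) \leq \mathcal{BS}_\epsilon(p_{1:T})$, where $\mathcal{BS}_\epsilon(p_{1:T}) := \frac{1}{T}\sum_{t=1}^T (y_t - m_{b(t)})^2$.

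It then remains to bound $\mathcal{BS}_\epsilon$ by $\mathcal{BS}$ up to the stated error. First I would expand, for each $t$,
\[
    (y_t - m_{b(t)})^2 = (y_t - p_t)^2 + 2(y_t - p_t)(p_t - m_{b(t)}) + (p_t - m_{b(t)})^2,
\]
and average over $t$. Since each bin in \eqref{eq:B-bins} has width $\epsilon$ and $m_{b(t)}$ is its midpoint, $\abs{p_t - m_{b(t)}} \leq \epsilon/2$ for every $t$ (including the closed last bin $B_m$). Using $\abs{y_t - p_t} \leq 1$, the cross term is bounded in absolute value by $\frac{2}{T}\sum_t 1 \cdot (\epsilon/2) = \epsilon$, and the quadratic term by $\frac{1}{T}\sum_t (\epsilon/2)^2 = \epsilon^2/4$. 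Combining, $\mathcal{BS}_\epsilon(p_{1:T}) \leq \mathcal{BS}(p_{1:T}) + \epsilon + \epsilon^2/4$, which together with the previous inequality gives the claim.

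I do not expect a genuine obstacle here; the content is the reduction rather than any hard estimate. The one point requiring care is the correct invocation of the well-known refinement/Brier inequality: because coarsening the bins can only \emph{increase} refinement (by concavity of $x \mapsto x(1-x)$), one cannot simply bound the $\epsilon$-binned refinement by the exact-value refinement and then by $\mathcal{BS}$. The snap-to-midpoint device circumvents this by moving the $\epsilon$-binning onto the Brier-score side, where it costs only the $O(\epsilon)$ perturbation controlled above; the half-width $\epsilon/2$ of the bins is exactly what produces the $\epsilon + \epsilon^2/4$ error.
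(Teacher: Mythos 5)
Your proposal is correct and follows essentially the same route as the paper's proof: snap each forecast to its bin midpoint (the paper's $\disc$ map), invoke the standard refinement-versus-Brier-score inequality for the discretized sequence, and then expand $(y_t - \disc(p_t))^2$ around $(y_t-p_t)^2$, bounding the cross term by $\epsilon$ and the quadratic term by $\epsilon^2/4$. The only difference is cosmetic: you justify the applicability of the standard inequality (via the observation that snapping preserves $N_b$ and $\widehat{y}_b$, so binned refinement equals exact-value refinement of the snapped sequence) in more detail than the paper, which simply cites it as a known decomposition.
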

\begin{proof}
    Define the discretization function $\disc : [0,1] \to [0,1]$ as $\disc(p) = \text{mid-point}(B_b) \iff p \in B_b$. Note that for all $p \in [0,1]$, $\abs{p - \disc(p)} \leq \epsilon/2$. Based on standard decompositions (such as equation (1) of FH23), we know that
     \begin{equation}
         \Rcal(p_{1:T}) \leq \frac{\sum_{t = 1}^T (y_t - \disc(p_t^\tops))^2}{T}.\label{eq:bs-refinement} 
     \end{equation}
     We now relate the RHS of the above equation to $\mathcal{BS}$
    \begin{align*}
        \sum_{t = 1}^T (y_t - \disc(p_t))^2 &= \sum_{t = 1}^T (y_t - p_t + p_t - \disc(p_t))^2
        \\ &= T\cdot\mathcal{BS}(p_{1:T}) + \sum_{t = 1}^T (p_t - \disc(p_t))^2 + 2\sum_{t = 1}^T(y_t - p_t)(p_t - \disc(p_t))
        \\ &\leq T\cdot\mathcal{BS}(p_{1:T}) + T(\epsilon/2)^2 + 2\sum_{t = 1}^T\abs{y_t - p_t}(\epsilon/2). 
        \\ &\leq T\cdot\mathcal{BS}(p_{1:T}) + T(\epsilon/2)^2 + T\epsilon. 
    \end{align*}
The result of the theorem follows by dividing by $T$ on both sides. 
\end{proof}

\begin{proof}[Proof of Theorem~\ref{thm:tops-sharpness-guarantee}]
    The calibeating paper \citep{foster2022calibeating} is referred to as FH23 in this proof for succinctness.

    We use Theorem 3 of FH23, specifically equation (13), which gives an upper bound on the Brier-score of a tracking forecast ($\Bcal_t^\mathbf{c}$ in their notation) relative to the refinement \eqref{eq:refinement-defn} of the base forecast. In our case, the tracking forecast is TOPS, the base forecast is OPS, and 
    FH23's result gives, 
    \begin{equation}
        \mathcal{BS}(p_{1:T}^\tops) = \frac{\sum_{t = 1}^T (y_t - p_t^\tops)^2}{T} \leq \Rcal(p_{1:T}^\tops) + \frac{\log T + 1}{\epsilon T}.
    \end{equation}
    Using the Brier-score-refinement lemma~\ref{lemma:bs-ref} to lower bound $\mathcal{BS}(p_{1:T}^\tops)$ gives 
    \begin{equation}
        \mathcal{R}(p_{1:T}^\tops) - \frac{\epsilon^2}{4} - \epsilon \leq \mathcal{R}(p_{1:T}^\ops)  + \frac{\log T + 1}{\epsilon T}.
    \end{equation}
    Finally, using the sharpness-refinement lemma \ref{lemma:shp-ref}, we can replace each $\Rcal$ with $\bar{y}_T - \shp$. Rearranging terms gives the final bound.

\end{proof}

\begin{proof}[Proof of Theorem~\ref{thm:cops-calibration-guarantee}] The calibeating paper \citep{foster2022calibeating} is referred to as FH23 in this proof for succinctness. 

   \textbf{Sharpness bound \eqref{eq:cops-sharpness-guarantee}.}  %
   Theorem 5 of FH23 showed that the expected Brier-score for a different hedging scheme (instead of F99), is at most the expected refinement score of the base forecast plus $\epsilon^2 + \frac{\log T + 1}{\epsilon^2 T}$. In our case, the second term remains unchanged, but because we use F99, the $\epsilon^2$ needs to be replaced, and we show that it can be replaced by $\epsilon$ next. 
   
   Let us call the combination of OPS and the FH23 hedging method as FH23-HOPS, and the calibeating forecast as $p_{1:T}^\text{FH23-HOPS}$. The source of the $\epsilon^2$ term in Theorem 5 of FH23 is the following property of FH23-HOPS: for both values of $y_t \in \{0, 1\}$, 
   \begin{equation*}
       \Exp{t-1}{(y_t - p_t^\text{FH23-HOPS})^2 - (y_t - \text{Average}\{y_s: s < t, p_s^\ops = p_t^\ops, p_s^\text{FH23-HOPS} = p_t^\text{FH23-HOPS}\})^2} \leq \epsilon^2,
   \end{equation*}
   where $\Exp{t-1}{\cdot}$ is the expectation conditional on $(y_{1:t-1}, p^\text{FH23-hedging}_{1:t-1}, p^\ops_{1:t-1})$ (all that's happened in the past, and the current OPS forecast). For HOPS, we will show that 
   \begin{equation*}
       Q_t := \Exp{t-1}{(y_t - p_t^\text{HOPS})^2 - (y_t - \text{Average}\{y_s: s < t, p_s^\ops = p_t^\ops, p_s^\text{HOPS} = p_t^\text{HOPS}\})^2} \leq \epsilon,
   \end{equation*}
   for $y_t \in \{0, 1\}$, which would give the required result. 

   At time $t$, the F99 forecast falls into one of two scenarios which we analyze separately (see Appendix~\ref{appsec:f99} for details of F99 which would help follow the case-work). 
   \begin{itemize}
       \item 
    \textbf{Case 1. }This corresponds to condition A in the description of F99 in Section~\ref{appsec:f99}. There exists a bin index $b$ such that $q = \text{mid-point}(B_b)$ satisfies \[\abs{\text{Average}\{y_s: s < t, p_s^\ops = p_t^\ops, p_s^\text{HOPS} = q\} - q} \leq \epsilon/2.\]
    In this case, F99 would set $p_t^\text{HOPS} = q$ (deterministically) for some $q$ satisfying the above. Thus,
    \begin{align*}
        Q_t &= (y_t - q)^2 - (y_t - \text{Average}\{y_s: s < t, p_s^\ops = p_t^\ops, p_s^\text{HOPS} = q\})^2 
        \\ &\leq \max((y_t-q)^2-(y_t-q-\epsilon/2)^2, (y_t-q)^2-(y_t-q+\epsilon/2)^2)
        \\ &\leq (\epsilon/2)(2\abs{y_t-q} + \epsilon/2) < \epsilon, %
    \end{align*}
    irrespective of $y_t$, since $q \in [\epsilon/2, 1 - \epsilon/2]$. 
    \item 
    \textbf{Case 2. }This corresponds to condition B in the description of F99 in Section~\ref{appsec:f99}. 
   If Case 1 does not hold, F99 randomizes between two consecutive bin mid-points $m - \epsilon/2$ and $m - \epsilon/2$, where $m$ is one of the edges of the $\epsilon$-bins \eqref{eq:B-bins}. Define %
   $n_1 := \text{Average}\{y_s: s < t, p_s^\ops = p_t^\ops, p_s^\text{HOPS} = m-\epsilon/2\}$ and $n_2 := \text{Average}\{y_s: s < t, p_s^\ops = p_t^\ops, p_s^\text{HOPS} = m+\epsilon/2\}$. The choice of $m$ in F99 guarantees that $n_2 < m < n_1$, and the randomization probabilities are given by
   \begin{equation*}
       \mathbb{P}_{t-1}(p_t^\hops = m-\epsilon/2) = \frac{m - n_2}{n_1 - n_2}, \text{ and }\mathbb{P}_{t-1}(p_t^\hops = m+\epsilon/2) = \frac{n_1 - m}{n_1 - n_2},
   \end{equation*}
where $\mathbb{P}_{t-1}$ is the conditional probability in the same sense as $\mathbb{E}_{t-1}$. We now bound $Q_t$. If $y_t = 1$, 
\begin{align*}
    Q_t &= \Exp{t-1}{(y_t - p_t^\text{HOPS})^2 - (y_t - \text{Average}\{y_s: s < t, p_s^\ops = p_t^\ops, p_s^\text{HOPS} = p_t^\text{HOPS}\})^2} 
    \\ &= \frac{m - n_2}{n_1 - n_2} \roundbrack{(1-(m-\epsilon/2))^2 - (1-n_1)^2} + \frac{n_1 - m}{n_1 - n_2} \roundbrack{(1-(m+\epsilon/2))^2 - (1-n_2)^2}
    \\ &= \underbrace{\frac{m - n_2}{n_1 - n_2} \roundbrack{(1-m)^2 - (1-n_1)^2} + \frac{n_1 - m}{n_1 - n_2} \roundbrack{(1-m)^2 - (1-n_2)^2}}_{=: A_1} 
    \\ &\qquad + \underbrace{2\cdot (\epsilon/2) \cdot \frac{(m-n_2)(1-m) - (n_1-m)(1-m)}{n_1 - n_2} }_{=: A_2} 
    \\ &\qquad + (\epsilon/2)^2 \cdot \frac{n_1-n_2}{n_1 - n_2}. 
\end{align*}
$A_1$ and $A_2$ simplify as follows.
\begin{align*}
     A_1  &= \frac{(m - n_2)(n_1 - m)(2 - (n_1 + m))  + (n_1 - m)(n_2 - m)(2 - (n_2 + m))}{n_1 - n_2}
     \\ &= \frac{(m - n_2)(n_1 - m)(n_2 - n_1)}{n_1 - n_2} < 0, 
\end{align*}
since $n_2 < m < n_1$. 
\begin{align*}
     A_2  &= \epsilon\cdot\frac{(m - n_2)(1-m)}{n_1 - n_2} + \epsilon\cdot\frac{(m - n_1)(1-m)}{n_1 - n_2} %
     \\&< \epsilon\cdot\frac{(m - n_2)(1-m)}{n_1 - n_2} & \text{(since $m < n_1$)}
     \\& < \epsilon(1-m).
\end{align*}
Overall, we obtain that for $y_t = 1$, 
\begin{equation*}
    Q_t  %
    < \epsilon(1-m)  + (\epsilon^2/4) < \epsilon, 
\end{equation*}
where the final inequality holds since $m$ is an end-point between two bins, and thus $m \geq \epsilon$. We do the calculations for $y_t = 0$ less explicitly since it essentially follows the same steps: 
\begin{align*}
    Q_t &= \Exp{t-1}{(0 - p_t^\text{HOPS})^2 - (0 - \text{Average}\{y_s: s < t, p_s^\ops = p_t^\ops, p_s^\text{HOPS} = p_t^\text{HOPS}\})^2} 
    \\ &= \frac{m - n_2}{n_1 - n_2} \roundbrack{(m-\epsilon/2)^2 - n_1^2} + \frac{n_1 - m}{n_1 - n_2} \roundbrack{(m+\epsilon/2)^2 - n_2^2}
    \\ &= \frac{(m - n_2)(m-n_1)(m+n_1) + (n_1 - m)(m-n_2)(m+n_2)}{n_1 - n_2} %
    +\epsilon \cdot \frac{(n_2-m)m + (n_1-m)m}{n_1 - n_2} + \frac{\epsilon^2}{4}
    \\ &< 0 + \epsilon m + (\epsilon^2/4) < \epsilon. 
\end{align*}
   \end{itemize}
Finally, by Proposition 1 of FH23 and the above bound on $Q_t$, we obtain, 
\begin{equation}
    \Exp{}{\Rcal(p_{1:T}^\hops)} \leq \Exp{}{\mathcal{BS}(p_{1:T}^\hops)} \leq \epsilon + \Rcal(p_{1:T}^\ops) + \frac{\log T + 1}{\epsilon^2 T}. \label{eq:result-reused-for-bs-thm}
\end{equation}
Using the sharpness-refinement lemma \ref{lemma:shp-ref}, we replace each $\Rcal$ with $\bar{y}_T - \shp$. Rearranging terms gives the sharpness result.

\textbf{Calibration bound \eqref{eq:cops-calibration-guarantee}.} %
Recall that the number of bins is $m = 1/\epsilon$. For some bin indices $b, b' \in \{1, 2, \ldots, m\}$, let $S_{b \to b'} = \{t \leq T : p_t^\ops \in B_b, p_t^{\hops} = \text{mid-point}(B_{b'})\}$ be the set of time instances at which the \ops~forecast $p_t^\ops$ belonged to bin $b$, but the \hops~forecast $p_t^\hops$ belonged to bin $b'$ (and equals the mid-point of bin $b'$). Also, let $S_b = \{t \leq T : p_t^\ops \in B_b\}$ be the set of time instances at which the $p_t^\ops$ forecast belonged to bin $b$. Thus $S_b = \bigcup_{b'=1}^m S_{b \to b'}$. Also define $N_b^\ops = \abs{S_b}$ and $N_b^\hops = \abs{\{t \leq T : p_t^\hops = \text{mid-point}(B_{b})\}}$. %

Now for any specific $b$, consider the sequence $(y_t)_{t \in S_b}$. On this sequence, the \hops~forecasts correspond to F99 using just the outcomes (with no regard for covariate values once the bin of $p_t^\ops$ is fixed). Thus, within this particular bin, we have a usual CE guarantee that F99's algorithm has for any arbitrary sequence:
\begin{equation}
    \underbrace{\Exp{}{\frac{1}{N_b^\ops}\sum_{b' = 1}^m \abs{\sum_{t \in S_{b\to b'}} (y_t - p_t^\hops)}}}_{\text{this is the expected CE over the $S_b$ instances}} \leq \frac{\epsilon}{2} + \frac{2}{\sqrt{\epsilon \cdot N_b^\ops}}. \label{eq:f99-inside-b}
\end{equation}
This result is unavailable in exactly this form in \citet{foster1999proof} which just gives the reduction to Blackwell approachability, after which any finite-sample approachability bound can be used. The above version follows from Theorem~1.1 of \citet{perchet2014approachability}. The precise details of the Blackwell approachability set, reward vectors, and how the distance to the set can be translated to CE can be found in \citet[Section 4.1]{gupta2022faster}. 

Jensen's inequality can be used to lift this CE guarantee to the entire sequence:%
\begin{align*}
    \Exp{}{\ce(p_{1:T}^\hops)} &= \Exp{}{\sum_{b=1}^m \frac{\abs{N_b^\hops(\widehat{y}_b^\hops - \widehat{p}_b^\hops) }}{T}}
    \\ &= \Exp{}{\sum_{b=1}^m \frac{\abs{\sum_{t =1}^T(y_t-p_t^\hops) \indicator{p_t^\hops \in B_b}}}{T}}
    \\ &= \Exp{}{\sum_{b=1}^m \frac{\abs{\sum_{b'=1}^m\sum_{t \in S_{b' \rightarrow b}}(y_t-p_t^\hops)}}{T}}
    \\ &\leq \Exp{}{\sum_{b=1}^m \sum_{b'=1}^m \frac{\abs{\sum_{t \in S_{b' \rightarrow b}}(y_t-p_t^\hops)}}{T}} \qquad  \text{(Jensen's inequality)}
    \\ &= \sum_{b'=1}^m\Exp{}{\sum_{b=1}^m  \frac{\abs{\sum_{t \in S_{b' \rightarrow b}}(y_t-p_t^\hops)}}{T}}
    \\ &\leq \sum_{b'=1}^m \frac{N_{b'}^\ops \roundbrack{\epsilon/2 + 2/\sqrt{\epsilon \cdot N_{b'}^\ops}}}{T} \qquad\qquad \text{(by \eqref{eq:f99-inside-b})}
    \\ &= \frac{\epsilon}{2} + \frac{2}{\sqrt{\epsilon}} \cdot \frac{\sum_{b'=1}^m\sqrt{N_{b'}^\ops}}{\sum_{b'=1}^m N_{b'}^\ops} \qquad\qquad \text{(since $T = \sum_{b'=1}^B N_{b'}^\ops$)}
    \\ &\overset{(\star)}{\leq} \frac{\epsilon}{2} + \frac{2}{\sqrt{\epsilon}}\cdot\sqrt{\frac{m}{T}} = \frac{\epsilon}{2} + 2\sqrt{\frac{1}{\epsilon^2 T}},
\end{align*}
as needed to be shown. The inequality $(\star)$ holds because, by Jensen's inequality (or AM-QM inequality),
\[
\sqrt{\frac{\sum_{b'=1}^m N_{b'}^\ops}{m}} \geq \frac{\sum_{b'=1}^m \sqrt{N_{b'}^\ops}}{m},\] 
so that 
\[\frac{\sum_{b'=1}^m\sqrt{N_{b'}^\ops}}{\sum_{b'=1}^m N_{b'}^\ops}  = \frac{\sum_{b'=1}^m\sqrt{N_{b'}^\ops}}{\sqrt{\sum_{b'=1}^m N_{b'}^\ops}}\cdot\frac{1}{\sqrt{\sum_{b'=1}^m N_{b'}^\ops}} \leq \frac{\sqrt{m}}{\sqrt{\sum_{b'=1}^m N_{b'}^\ops}} = \sqrt{m/T}.
\]
\end{proof}

\begin{theorem}
For adversarially generated data, the expected Brier-score of HOPS forecasts using the forecast hedging algorithm of \citet{foster1999proof} is upper bounded as 
\begin{equation}%
    \Exp{}{\mathcal{BS}(p_{1:T}^\hops)} \leq \mathcal{BS}(p_{1:T}^\ops) + \roundbrack{2\epsilon + \frac{\epsilon^2}{4}
    + \frac{\log T+1}{\epsilon^2T}}.\label{eq:cops-brier-score-guarantee}
\end{equation}
\label{thm:cops-brier-score-guarantee}
\end{theorem}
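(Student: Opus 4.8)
The plan is to recognize that this Brier-score bound falls out almost immediately from intermediate results already assembled in the proof of Theorem~\ref{thm:cops-calibration-guarantee}, so the strategy is to reuse rather than re-derive. The one genuinely laborious ingredient---the F99 case analysis showing the per-step quantity $Q_t \leq \epsilon$ for both $y_t \in \{0,1\}$---is already complete, and it is precisely what replaces FH23's $\epsilon^2$ with the constant $\epsilon$ in the hedging step. The remainder is bookkeeping.

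Concretely, the first step is to invoke \eqref{eq:result-reused-for-bs-thm}, established en route to the sharpness bound, which controls the expected HOPS Brier-score by the \emph{refinement} of the OPS forecasts:
\[
    \Exp{}{\mathcal{BS}(p_{1:T}^\hops)} \leq \epsilon + \Rcal(p_{1:T}^\ops) + \frac{\log T + 1}{\epsilon^2 T}.
\]
This inequality comes from Proposition~1 of FH23 combined with the bound $Q_t \leq \epsilon$, both of which are already in hand.

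The second step is to trade the OPS refinement for the OPS Brier-score. Here I would apply the Brier-score-refinement lemma (Lemma~\ref{lemma:bs-ref}) to the sequence $p_{1:T}^\ops$, obtaining $\Rcal(p_{1:T}^\ops) \leq \mathcal{BS}(p_{1:T}^\ops) + \epsilon^2/4 + \epsilon$. The direction of this inequality is exactly what is needed: because $\Rcal$ is defined after $\epsilon$-binning while $\mathcal{BS}$ is not, the lemma upper-bounds refinement by Brier-score plus discretization slack, and this is precisely the direction required to upper-bound the target quantity in terms of $\mathcal{BS}(p_{1:T}^\ops)$.

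Substituting the second bound into the first display and collecting terms---the two additive $\epsilon$ contributions, the $\epsilon^2/4$ discretization term, and the $(\log T+1)/(\epsilon^2 T)$ tracking term---yields the claimed bound exactly. I expect no real obstacle to remain; the only points worth verifying are that the constants aggregate precisely as stated and that Lemma~\ref{lemma:bs-ref} is invoked for the OPS sequence (not HOPS), so that the final right-hand side is expressed through $\mathcal{BS}(p_{1:T}^\ops)$ as desired.
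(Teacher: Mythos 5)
Your proposal is correct and follows exactly the paper's own proof: both invoke \eqref{eq:result-reused-for-bs-thm} to bound $\Exp{}{\mathcal{BS}(p_{1:T}^\hops)}$ by $\epsilon + \Rcal(p_{1:T}^\ops) + (\log T+1)/(\epsilon^2 T)$, then apply Lemma~\ref{lemma:bs-ref} to the OPS sequence to replace $\Rcal(p_{1:T}^\ops)$ with $\mathcal{BS}(p_{1:T}^\ops) + \epsilon + \epsilon^2/4$, and the constants aggregate to the stated $2\epsilon + \epsilon^2/4 + (\log T+1)/(\epsilon^2 T)$. Your attention to the direction of Lemma~\ref{lemma:bs-ref} and to applying it to OPS rather than HOPS is exactly the right check, and no gap remains.
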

\begin{proof}
    In the proof of the sharpness result of Theorem~\ref{thm:cops-calibration-guarantee}, we showed equation~\eqref{eq:result-reused-for-bs-thm}, which immediately yields \eqref{eq:cops-brier-score-guarantee} since $\Rcal(p_{1:T}^\ops) \leq \mathcal{BS}(p_{1:T}^\ops) + \epsilon + \epsilon^2/4$ by Lemma~\ref{lemma:bs-ref}.
\end{proof}

\clearpage

\end{document}